\def\ip#1#2{{\left\langle#1,#2\right\rangle}}
\def\norm#1{{\left\|#1\right\|}}
\def\lb#1#2{{\left[#1,#2\right]}}
\newcommand\tr{\operatorname{tr}}
\newcommand\ad{\operatorname{ad}}
\def\lt#1{T_{#1}\mathsf{L}_{{#1}^{-1}}}
\def\lte#1{T_e\mathsf{L}_{#1}}
\def\d{\operatorname{d}\!}
\def\G{\mathsf{G}}
\def\g{\mathfrak{g}}
\def\d{\operatorname{d}}
\def\SO{\mathsf{SO}}
\def\diag{\operatorname{diag}}
\def\ODE{\text{ODE}}
\def\HB{\text{HB}}
\def\NAGSC{\text{NAG-SC}}
\newcommand\cODE{c_{\ODE}}
\newcommand\cHB{c_{\HB}}
\newcommand\cNAGSC{c_{\NAGSC}}
\newcommand\bHB{b_{\HB}}
\newcommand\bNAGSC{b_{\NAGSC}}
\newcommand\EODE{E^{\ODE}}
\newcommand\EHB{E^{\HB}}
\newcommand\ENAGSC{E^{\NAGSC}}
\renewcommand\L{\mathcal{L}}
\newcommand\LODE{\L^{\ODE}}
\newcommand\LHB{\L^{\HB}}
\newcommand\LNAGSC{\L^{\NAGSC}}
\newtheorem{theorem}{Theorem}
\newtheorem{remark}[theorem]{Remark}
\newtheorem{lemma}[theorem]{Lemma}
\newtheorem{definition}[theorem]{Definition}
\newtheorem{assumption}[theorem]{Assumption}
\newtheorem{corollary}[theorem]{Corollary}
\begin{document}
\title{Quantitative Convergences of Lie Group Momentum Optimizers}
\author{
  Lingkai Kong \\
  School of Mathematics\\
  Georgia Institute of Technology\\
  \texttt{lkong75@gatech.edu} \\
  \And
  Molei Tao \\
  School of Mathematics \\
  Georgia Institute of Technology \\
  \texttt{mtao@gatech.edu} \\
}
\maketitle

\begin{abstract}
    Explicit, momentum-based dynamics that optimize functions defined on Lie groups can be constructed via variational optimization and momentum trivialization. Structure preserving time discretizations can then turn this dynamics into optimization algorithms. This article investigates two types of discretization, Lie Heavy-Ball, which is a known splitting scheme, and Lie NAG-SC, which is newly proposed. Their convergence rates are explicitly quantified under $L$-smoothness and \emph{local} strong convexity assumptions. Lie NAG-SC provides acceleration over the momentumless case, i.e. Riemannian gradient descent, but Lie Heavy-Ball does not. When compared to existing accelerated optimizers for general manifolds, both Lie Heavy-Ball and Lie NAG-SC are computationally cheaper and easier to implement, thanks to their utilization of group structure. Only gradient oracle and exponential map are required, but not logarithm map or parallel transport which are computational costly.
\end{abstract}

\vspace{-0.5cm}
\section{Introduction}
\vspace{-0.2cm}
First order optimization, i.e., with gradient of the potential (i.e. the objective function) given as the oracle, is ubiquitously employed in machine learning. Within this class of algorithms, momentum is often introduced to accelerate convergence; for example, in Euclidean setups, it has been proved to yield the optimal dimension-independent convergence order, for a large class of first order optimizers, under strongly convex and $L$-smooth assumptions of the objective function\citep[Sec. 2]{nesterov2013introductory}.

Gradient Descent (GD) without momentum can be generalized to Riemannian manifold by moving to the negative gradient direction using the exponential map with a small step size. This generalization is algorithmically straight forward, but quantifying the convergence rate in curved spaces needs nontrivial efforts \citep{bonnabel2013stochastic, zhang2016first, tripuraneni2018averaging}. In comparison, generalizing momentum GD to manifold itself is nontrivial due to curved geometry; for example, the iteration must be kept on the manifold and the momentum must stay in the tangent space, which changes with the iteration, at the same time. It is even more challenging to quantify the convergence rate theoretically due to the loss of linearity in the space, leading to the lack of triangle inequality and cosine rule. Finally, there is not necessarily acceleration unless the generalization is done delicately.

Regardless, optimization on manifolds is an important task, for which the manifold structure can either naturally come from the problem setup or be artificially introduced. A simple but extremely important example is to compute the leading eigenvalues of a large matrix, which can be approached efficiently via optimization on the Stiefel manifold \cite{kong2022momentum}; a smaller scale version can also be solved via optimization on $\SO(n)$ \citep{brockett1989least, wen2013feasible, chen2019constrained, tao2020variational}. More on the modern machine learning side, one can algorithmically add orthonormal constraints to deep learning models to improve their accuracy and robustness \citep[e.g.,][]{arjovsky2016unitary, cisse2017parseval, helfrich2018orthogonal, kong2022momentum,qiu2023controlling}. Both examples involve $\SO(n)$, which is an instance of an important type of curved spaces called Lie groups.

Lie groups are manifolds with additional group structure, and the nonlinearity of the space manifests through the non-commutative group multiplication. The group structure can help not only design momentum optimizer but also analyze its convergence. More precisely, this work will make the following contributions:
\begin{itemize}
    \item Provide the first quantitative analysis of Lie group momentum optimizers. This is significant, partly because there is no nontrivial convex functions on many Lie groups (see Rmk. \ref{rmk_convex}), so we have to analyze nonconvex optimization.
    \item Theoretically show    an intuitively constructed momentum optimizer, namely Lie Heavy-Ball, may not yield accelerated convergence. Numerical evidence is also provided (Sec. \ref{sec_numerical_experiment}).
    \item Generalize technique from Euclidean optimization to propose a Lie group optimizer, Lie NAG-SC, that provably has acceleration.
\end{itemize}

Comparing to other optimizers that are designed for general manifolds, we bypass the requirements for costly operations such as parallel transport \citep[e.g., ][]{alimisis2021momentum}, which is a way to move the momentum between tangent spaces, and the computation of geodesic \citep[e.g., ][]{ahn2020nesterov}, which may be an issue due to not only high computational cost but also its possible non-uniqueness.

\vspace{-0.2cm}
\subsection{Related work}
\vspace{-0.2cm}
In Euclidean space, Gradient Descent (GD) for $\mu$-strongly convex and $L$-smooth objective function can converge as $\norm{x_k-x_*}\le \left(1-C\frac{\mu}{L}\right)^k\norm{x_k-x_0}$ \footnote{All the constants $C$ in this paper may be different case-by-case, but they are all independent of dimension and the condition number.} upon appropriately chosen learning rate. Momentum can accelerate the convergence by softening the dependence on the condition number $\kappa:=L/\mu$. However, how momentum is introduced matters to achieving such an acceleration. For example, NAG-SC \citep{nesterov2013introductory} has convergence rate \footnote{For continuous dynamics, convergence rate means $U(g_t)-U(g_*)=\mathcal{O}(e^{-ct})$. For discrete algorithms, it means $U(g_k)-U(g_*)=\mathcal{O}(c^k)$} $1-C\sqrt{\frac{\mu}{L}}$, but Heavy-Ball \citep{polyak1987introduction} still has linear dependence on the conditional number, similar to gradient descent without momentum (but it may work better for nonconvex cases).

Remarkable quantitative results also existed for manifold optimization. The momentum-less case is relatively simpler, and \cite{zhang2016first}, for example, developed convergence theory for GD on Riemannian manifold under various assumptions on convexity and smoothness, which matched the classical Euclidean result --- for instance, Thm. 15 of their paper gave a convergence rate of $1-C\min\left\{\frac{\mu}{L}, K\right\}$\footnote{We will use $K$ to denote constants that depend on the manifold structure, e.g., diameter, curvature. It may be different case-by-case.} when the learning rate is $1/L$, under geodesic-$\mu$-strong-convexity and geodesic-$L$-smoothness. For the momentum case, \cite{alimisis2020continuous} analyzed the convergence of a related dynamics in continuous time, namely an optimization ODE corresponding to momentum gradient flow, on Riemannian manifolds under both geodesically strongly and weakly convex potentials, based on a tool of modified cosine rule. However, a convergence analysis for a numerical method in discrete time is still missing. Another series of seminal works include \cite{zhang2018towards,ahn2020nesterov}, which analyzed the convergence of a class of optimizers by extending Nesterov's technique of estimating sequence \citep{nesterov2013introductory} to Riemannian manifolds. They managed to show convergence rate between $1-C\sqrt{\frac{\mu}{L}}$ and $1-C\frac{\mu}{L}$, i.e., with conditional number dependence inbetween that of GD with and without momentum in the Euclidean cases. They further proved that, as the iterate gets closer to the minimum, the rate bound gets better because it converges to $1-C\sqrt{\frac{\mu}{L}}$. However, their algorithm requires the logarithm map (inverse of the exponential map), which may not be uniquely defined on many manifolds (e.g., sphere) and can be computationally expensive. In contrast, Lie NAG-SC, which we will construct, works only for Lie group manifolds, but they are more efficient when applicable, due to being based on only exponential map and gradient oracle. Acceleration of the same type will also be theoretically proved. 

Momentum optimizers specializing in Lie groups have also been constructed before \cite{tao2020variational}, where variational optimization \cite{wibisono2016variational} was generalized to the manifold setup and then left trivialization were employed to obtain ODEs with Euclidean momentum that perform optimization in continuous time. Our work is also based on those ODEs, whose time however has to be discretized so that an optimization algorithm can be constructed. Delicate discretizations have been proposed in \cite{tao2020variational} so that the optimization iterates stay exactly on the manifold, saving computational cost and reducing approximation errors. But we will further improve those discretizations. More precisely, note first that \cite{tao2020variational} slightly abused notation and called both the continuous dynamics and one discretization Lie NAG-SC. However, we find that their splitting discretization may not give the best optimizer -- at least, a 1st-order version of their splitting scheme yields a linear condition number dependence in our convergence rate bound. Since this splitting-based optimizer almost degenerates to heavy-ball in the special case of Euclidean spaces (as Rmk. \ref{rmk_splitting} will show), we refine the terminology and call it Lie Heavy-Ball. To remedy the lack of acceleration, we propose a new discretization that actually has square root condition number dependence and thus provable acceleration, and call it as (the true) Lie NAG-SC.

\vspace{-0.4cm}
\subsection{Main results}
\vspace{-0.2cm}

We consider the local minimization of a differentiable function $U: \G\to \mathbb{R}$, i.e., $\min_{g\in \G} U(g)$, where $\G$ is a finite-dimensional compact Lie group, and the oracle allowed is the differential of $U$.

Two optimizers we focus on are given in Alg. \eqref{algo_optimizer_Lie_group}. Under assumptions of $L$-smoothness and locally geodesic-$\mu$-strong convexity, we proved that Lie Heavy-Ball has convergence rate $\left(1+C\frac{L}{\mu}\right)^{-1}$, which is approximately the convergence rate of $1-C\min\left\{\frac{L}{\mu}, K\right\}$ for Lie GD (Eq.\ref{eqn_GD}, which is identical to Riemannian GD applied to Lie groups); this is no acceleration. To accelerate, we propose a new Lie NAG-SC algorithm, with provable convergence rate $\left(1+C\min\left\{\sqrt{\frac{L}{\mu}}, K\right\}\right)^{-1}$. Note the condition number dependence becomes $\sqrt{\kappa}$ instead of $\kappa:=L/\mu$, hence acceleration.

For a summary our main results, please see Table \ref{tab_summary}.

\SetKwInOut{KwParameter}{Parameter}
\SetKwInOut{KwInitialization}{Initialization}
\SetKwInOut{KwOutput}{Output}
\begin{algorithm}[H]
    \KwParameter{step size $h>0$, friction $\gamma>0$, number of iterations $N$}
    \KwInitialization{$g_0\in \G$, $\xi_0= 0$}
    \KwOutput{Local minimum of $U$}
    \For{$k=0,\cdots,N-1$}{
        
    \uIf{Heavy-Ball} {$\xi_{k+1}=(1-\gamma h) \xi_k-h \lt{g_k}\nabla U(g_k)$}
    \uIf{NAG-SC} {$\xi_{k+1}=(1-\gamma h) \xi_k-(1-\gamma h)h\left(\lt{g_k}\nabla U(g_k)-\lt{g_{k-1}}\nabla U(g_{k-1})\right)-h \lt{g_k}\nabla U(g_k)$}
    $g_{k+1}=g_k\exp(h\xi_{k+1})$
    }
    \Return{$g_N$}
    \caption{Momentum optimizer on Lie groups}
    \label{algo_optimizer_Lie_group}
\end{algorithm}

\begin{remark}[Triviality of convex functions on Lie groups]
\label{rmk_convex}
We do not assume any global convexity of $U$. In fact, $U$ has to be nonconvex for any meaningful optimization to happen. This is because we are considering compact Lie groups\footnote{Even if we consider noncompact Lie groups, many of them have compact Lie subgroups \citep{milnor1976curvatures}, and our argument would still hold.}, and a convex function on a connected compact manifold could only be a constant function \citep{yau1974non}. An intuition for this is, a convex function on a closed geodesic must be constant. See \cite[e.g., ][Sec. B.3]{kong2024convergence} for more discussions. Our analysis is, importantly, for nonconvex $U$, and convexity is only required locally to ensure a quantitative rate estimate can be obtained.
\end{remark}

\begin{table}
    \centering
    \begin{tabular}{|c|c|c|c|}
        \hline
        &Continuous dynamics&Heavy-Ball&NAG-SC\\
        \hline 
        Scheme &Eq. \eqref{eqn_optimization_ODE} &Eq. \eqref{eqn_HB}& Eq. \eqref{eqn_NAG_SC}\\
        \hline
        Step size $h$ &-&$\frac{\sqrt{\mu}}{4L}$&$\min\left\{\frac{1}{\sqrt{2L}}, \frac{1}{2p(a)}\right\}$\\
        \hline
        Convergence rate $c$ &$\frac{2}{3}\sqrt{\mu}$&$\left(1+\frac{\mu}{16L}\right)^{-1}$& $\left(1+\frac{1}{30}\sqrt{\mu}\min\left\{\frac{1}{\sqrt{2L}}, \frac{1}{2p(a)}\right\}\right)^{-1}$\\
        \hline
        (Modified) energy & Eq. \eqref{eqn_energy_ODE} & Eq. \eqref{eqn_energy_HB} & Eq. \eqref{eqn_energy_NAG_SC} \tablefootnote{The monotonicity of this energy function requires smaller step size than it listed in this table. See discussion in Rmk. \ref{rmk_failure_energy_NAG_SC} and the details are provided in Sec. \ref{sec_failure_energy_NAG_SC}}\\
        \hline
        Lyapunov function & Eq. \eqref{eqn_lyap_ODE} & Eq. \eqref{eqn_lyap_HB} & Eq. \eqref{eqn_lyap_NAG_SC}\\
        \hline
        Main theorem& Thm. \ref{thm_convergence_rate_ODE} & Thm. \ref{thm_convergence_rate_HB} & Thm. \ref{thm_convergence_rate_NAG_SC}\\
        \hline
    \end{tabular}
    \caption{A summary of main results}
    \vspace{-1cm}
    \label{tab_summary}
\end{table}
\vspace{-0.4cm}
\section{Preliminaries and setup}
\vspace{-0.2cm}
\subsection{Lie group and Lie algebra}
\label{sec_Lie_group}
A \textbf{Lie group}, denoted by $\G$, is a differentiable manifold with a group structure. A \textbf{Lie algebra} is a vector space with a bilinear, alternating binary operation that satisfies the Jacobi identity, known as Lie bracket. The tangent space at $e$ (the identity element of the group) is a \textbf{Lie algebra}, denoted as $\g:=T_e \G$. The dimension of the Lie group $\G$ will be denoted by $m$. 
\begin{assumption}[general geometry]
\label{assumption_general}
    We assume the Lie group $\G$ is finite-dimensional and compact.
\end{assumption}
One technique we will use to handle momentum is called \textbf{left-trivialization}: Left group multiplication $L_g: \hat{g}\to g\hat{g}$ is a smooth map from the Lie group to itself and its tangent map $T_{\hat{g}} \mathsf{L}_g: T_{\hat{g}} \G\to T_{g\hat{g}} \G$ is a one-to-one map. As a result, for any $g\in \G$, we can represent the vectors in $T_g \G$ by $\lte{\xi}$ for $\xi \in T_e \G$. This operation is the left-trivialization. It comes from the group structure and may not exist for a general manifold. If the group is represented via an embedding to matrix group, i.e., $g, \xi\in\mathbb{R}^{n\times n}$, then the left trivialization is simply given by $\lte{g} \xi=g\xi$ with the right-hand side given by matrix multiplication.

A Riemannian metric is required to take Riemannian gradient and we are considering a left-invariant metric: we first define an inner product $\ip{\cdot}{\cdot}$ on $\g$, which is a linear space, and then move it around by the differential of left multiplication, i.e., the inner product at $T_g \G$ is for $\eta_1,\eta_2 \in T_g \G$, $\ip{\eta_1}{\eta_2}:=\ip{\lt{g} \eta_1}{\lt{g} \eta_2}$.

\vspace{-0.2cm}
\subsection{Optimization dynamics}
\vspace{-0.2cm}
Riemannian GD \citep[e.g., ][]{zhang2016first} with iteration $g_{k+1}=\exp_{g_k}(-h\nabla U(g_k))$ can be employed to optimize $U$ defined on $\G$, where $\nabla$ is Riemannian gradient, and $\exp: \G \times T_g\G \to \G$ is the exponential map. To see a connection to the common Euclidean GD, it means we start from $g_k$ and go to the direction of negative gradient with step size $h$ to get $g_{k+1}$ by geodesic instead of straight line. Riemannian GD can be understood as a time discretization of the Riemannian gradient flow dynamics $\dot{g}=-\nabla U(g)$.

In the Lie group case, it is identical to the following Lie GD obtained from left-trivialization \citep{tao2020variational}:
\begin{align}
    \label{eqn_GD}
    g_{k+1}=g_k\exp_e(h\lt{g_k}\nabla U(g_k)),
\end{align}
where $\lt{g_k}\nabla U(g_k)\in\g$ is the left-trivialized gradient. $\exp_e$\footnote{The group exponential map and the exponential map from Riemannian structure can be different \citep[Sec. D.1]{kong2024convergence}. However, under our choice of the left-invariant metric later in Lemma \ref{lemma_ad_skew_adjoint}, they are identical and $\exp$ will be the group exponential unless further specified.} is the exponential map staring at the group identity $e$ following the Riemannian structure given by the left-invariant metric, and the operation between $g_k$ and $\exp(h\lt{g_k}\nabla U(g_k))$ is the group multiplication. To accelerate its convergence, momentum was introduced to the Riemannian gradient flow via variational optimization and left-trivialization \citep{tao2020variational}, leading to the following dynamics:
\begin{equation}
    \begin{cases}
        \dot{g}=\lte{g}\xi\\
        \dot{\xi}=-\gamma(t)\xi+\ad_\xi^*\xi-\lt{g}\nabla U(g)
    \end{cases}
    \label{eqn_optimization_ODE}
\end{equation}
Here $g(t)\in\G$ is the position variable. $\dot{g}$ is the standard `momentum' variable even though it should really be called velocity. It lives $T_{g(t)}\G$, which varies as $g(t)$ changes in time, and we will utilize group structure to avoid this complication. More precisely, the dynamics lets the `momentum' $\dot{g}$ be $\lte{g}\xi$, and $\xi$ is therefore $\lt{g}\dot{g}$ and it is our new, left-trivialized momentum. Intuitively, one can think $\xi$ as angular momentum, and $\lte{g}\xi$ being $g\xi$ is position times angular momentum, which is momentum. Similar to the Lie GD Eq. \eqref{eqn_GD}, we will not use $\nabla U(g)$ directly, but its left-trivialization $\lt{g}(\nabla U(g))$, to update the left-trivialized momentum.

This dynamics essentially models a damped mechanical system, and \citet{tao2020variational} proved this ODE converges to a local minimum of $U$ using the fact that the total energy (kinetic energy $\frac{1}{2}\ip{\xi}{\xi}$ plus potential energy $U$) is drained by the friction term $-\gamma \xi$. In general, $\gamma$ can be a positive time-dependent function (e.g., for optimizing convex but not strongly-convex functions), but for simplicity, we will only consider locally strong-convex potentials, and constant $\gamma$ is enough.

For curved space, an additional term $\ad_\xi^*\xi$ that vanishes in Euclidean space shows up in Eq. \eqref{eqn_optimization_ODE}. It could be understood as a generalization of Coriolis force that accounts for curved geometry and is needed for free motion. The \textbf{adjoint operator} $\ad:\g \times \g\to \g$ is defined by $\ad_{X} Y:=\lb{X}{Y}$. Its dual, known as the \textbf{coadjoint operator} $\ad^*:\g\times \g \to \g$, is given by $\ip{\ad^*_X Y}{Z}=\ip{Y}{\ad_X Z}, \forall Z \in \g$.

\vspace{-0.2cm}
\subsection{Property of Lie groups with $\ad^*$ skew-adjoint}
\vspace{-0.2cm}
The term $\ad^*_\xi \xi$ in the optimization ODE \eqref{eqn_optimization_ODE} is a quadratic term and it will make the numerical discretization that will be considered later difficult. Another complication from this term is, it depends on the Riemannian metric, and indicates an inconsistency between the Riemannian structure and the group structure, i.e., the exponential map from the Riemannian structure is different from the exponential map from the group structure. Fortunately, on a compact Lie group, the following lemma shows a special metric on $\g$ can be chosen to make the term $\ad^*_\xi \xi$ vanish.
\begin{lemma}[$\ad$ skew-adjoint \citep{milnor1976curvatures}]
    \label{lemma_ad_skew_adjoint}
    Under Assumption \ref{assumption_general}, there exists an inner product on $\g$ such that the operator $\ad$ is skew-adjoint, i.e., $\ad^*_\xi=-\ad_\xi$ for any $\xi \in \g$. 
\end{lemma}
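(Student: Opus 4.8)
The plan is to obtain the desired inner product by averaging an arbitrary inner product over the group using the Haar measure, thereby producing an $\mathrm{Ad}$-invariant inner product on $\g$, and then differentiating the invariance relation to get skew-adjointness of $\ad$. First I would fix any inner product $\langle\cdot,\cdot\rangle_0$ on the finite-dimensional vector space $\g$. Since $\G$ is compact (Assumption \ref{assumption_general}), it carries a bi-invariant Haar probability measure $\mu_\G$, so I can define
\begin{equation}
    \ip{X}{Y} := \int_\G \ip{\mathrm{Ad}_g X}{\mathrm{Ad}_g Y}_0 \, \d\mu_\G(g), \qquad X, Y \in \g,
\end{equation}
where $\mathrm{Ad}_g : \g \to \g$ is the adjoint representation of the group. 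One checks that this is bilinear, symmetric, and positive definite (positive definiteness uses that $\mathrm{Ad}_e = \mathrm{id}$ and continuity of the integrand, so the integral of a nonnegative continuous function that is positive at $e$ is positive — compactness guarantees the integral is finite).

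Next I would verify that $\ip{\cdot}{\cdot}$ is $\mathrm{Ad}$-invariant: for any $h \in \G$, using the substitution $g \mapsto gh$ together with right-invariance of Haar measure,
\begin{equation}
    \ip{\mathrm{Ad}_h X}{\mathrm{Ad}_h Y} = \int_\G \ip{\mathrm{Ad}_{gh} X}{\mathrm{Ad}_{gh} Y}_0 \, \d\mu_\G(g) = \int_\G \ip{\mathrm{Ad}_{g} X}{\mathrm{Ad}_{g} Y}_0 \, \d\mu_\G(g) = \ip{X}{Y},
\end{equation}
using the homomorphism property $\mathrm{Ad}_{gh} = \mathrm{Ad}_g \mathrm{Ad}_h$. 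Then I would pass from the group level to the Lie algebra level: fix $\xi \in \g$ and apply the invariance to $h = \exp(t\xi)$, giving $\ip{\mathrm{Ad}_{\exp(t\xi)} X}{\mathrm{Ad}_{\exp(t\xi)} Y} = \ip{X}{Y}$ for all $t$. Differentiating at $t = 0$ and using $\frac{d}{dt}\big|_{t=0}\mathrm{Ad}_{\exp(t\xi)} = \ad_\xi$ yields
\begin{equation}
    \ip{\ad_\xi X}{Y} + \ip{X}{\ad_\xi Y} = 0,
\end{equation}
which is exactly $\ad_\xi^* = -\ad_\xi$.

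The main obstacle is not conceptual but a matter of invoking the right facts cleanly: one needs the existence of a finite bi-invariant Haar measure, which requires compactness (hence the role of Assumption \ref{assumption_general}), and one must be careful that differentiating under the integral sign / differentiating the invariance identity is justified — here this is routine because $\g$ is finite-dimensional and $t \mapsto \mathrm{Ad}_{\exp(t\xi)}$ is smooth, so no delicate analysis is needed. I would remark that this is the classical argument of \citet{milnor1976curvatures}, and that the resulting metric is precisely a bi-invariant metric on $\G$, which is the reason (as noted in the footnote) that the group exponential and the Riemannian exponential coincide under this choice.
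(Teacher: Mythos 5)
Your proof is correct. The paper itself does not prove this lemma --- it is quoted directly from \citet{milnor1976curvatures} --- and your argument (average an arbitrary inner product on $\g$ over the compact group via Haar measure to get an $\mathrm{Ad}$-invariant metric, then differentiate along $t\mapsto\exp(t\xi)$ to obtain $\ip{\ad_\xi X}{Y}+\ip{X}{\ad_\xi Y}=0$) is precisely the classical proof in that reference, with the role of compactness (finiteness and bi-invariance of Haar measure) correctly identified.
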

This special inner product will also give other properties useful in our technical proofs; see Sec. \ref{sec_details_lie_group_inner_prod}.

\vspace{-0.2cm}
\subsection{Assumption on potential function}
\vspace{-0.2cm}
To show convergence and quantify its rate for the discrete algorithm, some smoothness assumption is needed. We define the $L$-smoothness on a Lie group as the following. 
\begin{definition}[$L$-smoothness]
\label{def_L_smooth}
    A function $U:\G\to \mathbb{R}$ is $L$-smooth if and only if $\forall g,\hat{g}\in G$,
    \begin{equation}
        \label{eqn_L_smooth}
        \norm{\lt{\hat{g}}\nabla U(\hat{g})-\lt{g}\nabla U(g)}\le L d(\hat{g}, g)
    \end{equation}
    where $d$ is the geodesic distance.
\end{definition}
Under the choice of metric in Lemma \ref{lemma_ad_skew_adjoint} that $\ad$ is skew-adjoint, Lemma \ref{lemma_L_smooth_identical} shows this is same as the commonly used geodesic-$L$-smoothness (Def. \ref{def_geodesic_L_smooth}).

To provide an explicit convergence rate, some convex assumption on the objective function is usually needed. Under the assumption of unique geodesic on a geodesically convex set $S\subset \G$, the definition of strongly convex functions in Euclidean spaces can be generalized to Lie groups: 
\begin{definition}[Locally geodesically strong convexity]
\label{def_local_strong_convex}
    A function $U:\G\to \mathbb{R}$ is locally geodesic-$\mu$-strongly convex at $g_*$ if and only if there exists a geodesically convex neighbourhood of $g_*$, denoted by $S$, such that $\forall g,\hat{g}\in S$,
    \begin{equation}
        \label{eqn_strongly_convex_def}
        U(g)-U(\hat{g})\ge\ip{\lt{\hat{g}}\nabla U(\hat{g})}{\log \hat{g}^{-1} g}+\frac{\mu}{2}\norm{\log \hat{g}^{-1} g}^2
    \end{equation}
    where $\log$ is well-defined due to the geodesic convexity of $S$.
\end{definition}

\vspace{-0.3cm}
\section{Convergence of the optimization ODE in continuous time}
\vspace{-0.2cm}
To start, we provide a convergence analysis of the ODE \eqref{eqn_optimization_ODE}, since our numerical scheme comes from its time discretization. We do not claim such convergence analysis for the ODE is new, and in fact, convergence for continuous dynamics has been provided on general manifolds \citep[e.g., ][]{alimisis2020continuous}. However, we will prove it using our technique to be self-contained and provide some insights for the convergence analysis of the discrete algorithm later.

Define the total energy $\EODE:\G\times \g\to \mathbb{R}$ as
\begin{equation}
\label{eqn_energy_ODE}
    \EODE(g, \xi):=U(g)+\frac{1}{2}\norm{\xi}^2
\end{equation}
i.e., the total energy is the sum of the potential energy and the kinetic energy. Thanks to the friction $\gamma$, the total energy is monotonely decreasing, which provides global convergence to a stationary point.
\begin{theorem}[Monotonely decreasing of total energy \citep{tao2020variational}]
\label{thm_energy_ODE}
    Suppose the potential function $U\in \mathcal{C}^1 (\G)$ and the trajectory $(g(t), \xi(t))$ follows ODE \eqref{eqn_optimization_ODE}. Then
    \begin{align*}
        \frac{d}{dt}\EODE(g(t), \xi(t))= -\gamma\norm{\xi}^2
    \end{align*}
\end{theorem}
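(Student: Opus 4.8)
The plan is to differentiate the total energy $\EODE(g(t),\xi(t)) = U(g(t)) + \frac{1}{2}\norm{\xi(t)}^2$ in time along a solution of \eqref{eqn_optimization_ODE} and show that the potential and kinetic contributions cancel, leaving only the friction term. Since $U\in\mathcal{C}^1(\G)$ and the right-hand side of \eqref{eqn_optimization_ODE} is continuous, $t\mapsto(g(t),\xi(t))$ is $\mathcal{C}^1$, so the derivative is well defined and the chain rule applies; I would just compute the two pieces separately.

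For the potential part, the chain rule gives $\frac{d}{dt}U(g(t)) = \ip{\nabla U(g)}{\dot g}_g$, where $\ip{\cdot}{\cdot}_g$ denotes the left-invariant Riemannian metric at $g$ (so that $\ip{\nabla U(g)}{v}_g = \d U_g(v)$). Substituting $\dot g = \lte{g}\xi$ and using left-invariance of the metric together with the identity $\lt{g}\circ\lte{g} = \operatorname{id}_{\g}$ --- which holds because $\lt{g} = T_g\mathsf{L}_{g^{-1}}$ and $\lte{g} = T_e\mathsf{L}_g$ are the tangent maps of mutually inverse translations --- this becomes $\frac{d}{dt}U(g(t)) = \ip{\lt{g}\nabla U(g)}{\xi}$, an inner product now taken on $\g$.

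For the kinetic part, $\frac{d}{dt}\frac{1}{2}\norm{\xi}^2 = \ip{\xi}{\dot\xi}$, and substituting the second line of \eqref{eqn_optimization_ODE} yields $-\gamma\norm{\xi}^2 + \ip{\xi}{\ad^*_\xi\xi} - \ip{\xi}{\lt{g}\nabla U(g)}$. The middle term vanishes: by the defining property of the coadjoint operator, $\ip{\xi}{\ad^*_\xi\xi} = \ip{\ad^*_\xi\xi}{\xi} = \ip{\xi}{\ad_\xi\xi} = \ip{\xi}{\lb{\xi}{\xi}} = 0$, since the Lie bracket is alternating. (Note this step does not even use the skew-adjoint metric of Lemma~\ref{lemma_ad_skew_adjoint}.) Adding the two pieces, the terms $\pm\ip{\lt{g}\nabla U(g)}{\xi}$ cancel and we obtain $\frac{d}{dt}\EODE(g(t),\xi(t)) = -\gamma\norm{\xi}^2$.

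The computation is short, and the only real obstacle is the careful bookkeeping of the left-trivialization maps and the metric identifications; once the identity $\lt{g}\circ\lte{g} = \operatorname{id}$ and left-invariance of the metric are applied correctly, everything else is immediate, and the quadratic Coriolis-type term $\ad^*_\xi\xi$ drops out for free because $\lb{\xi}{\xi}=0$.
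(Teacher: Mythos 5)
Your proposal is correct and takes essentially the same route as the paper's two-line "direct calculation": differentiate $\EODE$ along the flow, apply the chain rule and left-invariance to rewrite $\frac{d}{dt}U(g)$ as $\ip{\lt{g}\nabla U(g)}{\xi}$, note that $\ip{\xi}{\ad^*_\xi\xi}=\ip{\xi}{[\xi,\xi]}=0$ since the bracket is alternating, and observe the gradient terms cancel. You simply spell out the bookkeeping (the identity $\lt{g}\circ\lte{g}=\operatorname{id}$, the metric identification, and the vanishing of the Coriolis term) that the paper leaves implicit.
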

Thm. \ref{thm_energy_ODE} provides the global convergence of ODE \eqref{eqn_optimization_ODE} to a stationary point under only $\mathcal{C}^1$ smoothness: when the system converges, we have $\xi_\infty=0$, which gives $\norm{\nabla U(g_\infty)}=0$. 

Moreover, using the non-increasing property of total energy, the following corollary states that if the particle starts with small initial energy, it will be trapped in a sub-level set of $U$. The local potential well can be defined using $U$'s sub-level set.
\begin{definition}[$u$ sub-level set]
Given $u\in\mathbb{R}$, we define the $u$ sub-level set of $U$ as 
    \begin{align*}
    \label{eqn_u_level_set}
        \left\{g\in \G:U(g)\le u\right\}:=\bigcup_{i\ge 0} S_i
    \end{align*}
    i.e. a disjoint union of connected components. 
\end{definition}

\begin{corollary}
\label{cor_level_set_ODE}
Suppose $U\in \mathcal{C}^1 (\G)$. Let $u=\EODE(g(0), \xi(0))$. If the $u$ sub-level set of $U$ is $\bigcup_{i\ge 0} S_i$ and $g(0)\in S_0$, then we have $g(t)\in S_0, \forall t\ge 0$.
\end{corollary}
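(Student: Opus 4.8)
The plan is to use the monotonicity of the total energy from Theorem~\ref{thm_energy_ODE} together with a continuity/connectedness argument to confine the trajectory to the connected component $S_0$. First I would observe that for all $t\ge 0$ we have $\EODE(g(t),\xi(t))\le \EODE(g(0),\xi(0))=u$, since $\frac{d}{dt}\EODE = -\gamma\norm{\xi}^2\le 0$. In particular $U(g(t))\le U(g(t))+\tfrac12\norm{\xi(t)}^2 = \EODE(g(t),\xi(t))\le u$, so $g(t)$ lies in the $u$ sub-level set $\bigcup_{i\ge 0}S_i$ for every $t\ge 0$.

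Next I would argue that $g(t)$ cannot jump between distinct connected components of this sub-level set. The curve $t\mapsto g(t)$ is continuous (indeed $C^1$, being the solution of the ODE \eqref{eqn_optimization_ODE} with a $C^1$ right-hand side, using $U\in\mathcal{C}^1$), hence its image $\{g(t):t\ge 0\}$ is a connected subset of the topological space $\bigcup_{i\ge 0}S_i$. Since $g(0)\in S_0$ and the $S_i$ are the (disjoint, open-in-the-subspace, or at least mutually separated) connected components, a connected set meeting $S_0$ must be entirely contained in $S_0$. Therefore $g(t)\in S_0$ for all $t\ge 0$, which is the claim.

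The one point that needs a little care — and which I expect to be the only real obstacle — is justifying that the connected components $S_i$ are genuinely ``separated'' in the sense needed for the connectedness argument, i.e. that a connected subset of $\bigcup_i S_i$ lying in more than one $S_i$ would be impossible. This is automatic if the sub-level set is closed (it is, as the preimage of $(-\infty,u]$ under the continuous $U$) and $\G$ is, say, locally connected, so that each connected component is open in the sub-level set; then the $S_i$ form an open partition and any connected set is contained in a single block. Compactness of $\G$ (Assumption~\ref{assumption_general}) also ensures there are only finitely many components and everything is well-behaved. Beyond this topological bookkeeping, no computation is required; the whole proof is two lines of energy estimate plus a standard connectedness observation.
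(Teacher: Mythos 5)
Your proof is correct and matches the paper's intended argument (the paper does not spell out a proof of this corollary but signals exactly this route via the preceding sentence about the non-increasing total energy). One small remark: the topological worry you flag at the end is actually unnecessary — a connected subset of any topological space is always contained in a single connected component, with no hypotheses of closedness or local connectedness needed, since the component of any point $g(t_0)$ is a maximal connected set and the union of the trajectory's image with that component is still connected.
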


Under further assumption of local strong convexity on this sub-level set, convergence rate can be quantified via a Lyapunov analysis inspired by \cite{shi2021understanding}. More specifically, given a fixed local minimum $g_*$, there is provably a local unique geodesic convex neighbourhood of $g_*$. Denote it by $S$, and we define $\LODE$ on $S$ by
\begin{equation}
\label{eqn_lyap_ODE}
    \LODE(g, \xi):=U(g)-U(g_*)+\frac{1}{4}\norm{\xi}^2+\frac{1}{4}\norm{\gamma\log g_*^{-1} g+\xi }^2
\end{equation}
By assuming the local geodesic-$\mu$-strong convexity of $U$ on $S$, we have the following quantification of Eq. \eqref{eqn_optimization_ODE}.
\begin{theorem}[Convergence rate of the optimzation ODE]
    \label{thm_convergence_rate_ODE}
    If the initial condition $(g_0, \xi_0)$ satisfies that $g_0\in S$ for some geodesically convex set $S\subset \G$, $U\in\mathcal{C}^1(\G)$ is locally geodesic-$\mu$-convex on $S$, and the $u$ sub-level set of $U$ with $u=\EODE(g_0, \xi_0)$ satisfies $S_0\subset S$, then we have    
    \begin{equation}
        U(g(t))-U(g_*)\le e^{-\cODE t}\LODE(g_0, \xi_0)
    \end{equation}
    with $\cODE=\frac{2}{3}\sqrt{\mu}$ by choosing $\gamma=2\sqrt{\mu}$.
\end{theorem}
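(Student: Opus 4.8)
The plan is to establish a differential inequality $\frac{d}{dt}\LODE(g(t),\xi(t)) \le -\cODE \LODE(g(t),\xi(t))$ along trajectories, and then conclude by Grönwall's inequality together with the lower bound $\LODE(g,\xi) \ge U(g)-U(g_*)$, which is immediate from the definition \eqref{eqn_lyap_ODE} since the two squared-norm terms are nonnegative. The preliminary work is to invoke Corollary \ref{cor_level_set_ODE}, which guarantees $g(t)\in S_0\subset S$ for all $t\ge 0$, so that $\log g_*^{-1}g(t)$ is well-defined throughout and the local strong convexity hypothesis \eqref{eqn_strongly_convex_def} and $L$-smoothness can be applied at every point of the trajectory.

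The main computation is differentiating $\LODE$ along \eqref{eqn_optimization_ODE}. Under the metric of Lemma \ref{lemma_ad_skew_adjoint} the term $\ad_\xi^*\xi = -\ad_\xi\xi = -[\xi,\xi] = 0$, so the dynamics simplifies to $\dot{g}=\lte{g}\xi$, $\dot\xi = -\gamma\xi - \lt{g}\nabla U(g)$. Differentiating the three pieces: $\frac{d}{dt}(U(g)-U(g_*)) = \ip{\lt{g}\nabla U(g)}{\xi}$ by the chain rule and left-invariance of the metric; $\frac{d}{dt}\frac14\norm{\xi}^2 = \frac12\ip{\xi}{-\gamma\xi-\lt{g}\nabla U(g)}$; and for the cross term I need $\frac{d}{dt}\log g_*^{-1}g(t)$, which by left-trivializing the geodesic-distance curve equals $\xi$ up to a correction involving the differential of $\log$ (the $\ad$-dependent series $\frac{\ad}{1-e^{-\ad}}$ or its inverse). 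Handling this derivative of the logarithm cleanly is the crux: I expect to either bound its deviation from $\xi$ using that $\ad$ is skew-adjoint (so the correction operator is close to identity in a quantitative sense on the compact neighborhood $S$), or — more likely following \cite{shi2021understanding} — to avoid differentiating $\log$ exactly and instead work with the inequality $\frac{d}{dt}\norm{\log g_*^{-1}g}^2 \le 2\ip{\log g_*^{-1}g}{\xi}$ coming from a first-variation/Gauss-lemma argument for Riemannian distance, which is valid on the uniquely-geodesic set $S$.

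Assembling these, the derivative of $\LODE$ should collect into a negative-definite quadratic form in $(\xi,\ \log g_*^{-1}g,\ \lt{g}\nabla U(g))$ plus the good term $-\gamma\big(\tfrac12+\tfrac12\big)\norm{\xi}^2$-type contributions; the strong convexity \eqref{eqn_strongly_convex_def} with $\hat g = g_*$ (using $\nabla U(g_*)=0$) supplies $U(g)-U(g_*) \ge \frac{\mu}{2}\norm{\log g_*^{-1}g}^2$ and, applied with the roles swapped, $\ip{\lt{g}\nabla U(g)}{-\log g_*^{-1} g} \ge U(g)-U(g_*)+\frac\mu2\norm{\log g_*^{-1}g}^2 \ge 0$, which is the replacement for the Euclidean inner-product bound. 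With $\gamma = 2\sqrt\mu$ one then checks, by matching coefficients, that the resulting form is bounded above by $-\frac23\sqrt\mu\,\LODE$; this is the routine but delicate constant-chasing step where the specific choices $\frac14,\frac14$ in \eqref{eqn_lyap_ODE} and $\gamma=2\sqrt\mu$ are reverse-engineered to make everything fit. The main obstacle, as noted, is rigorously controlling the time derivative of $\log g_*^{-1}g(t)$ in the curved setting — equivalently, establishing the distance differential inequality on $S$ — since unlike the Euclidean case there is no exact identity $\frac{d}{dt}(g-g_*) = \dot g$; everything else transfers from the Euclidean NAG-SC Lyapunov argument once the $\ad^*_\xi\xi$ term is killed by the metric choice.
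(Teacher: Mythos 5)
Your overall framework matches the paper: establish $\frac{d}{dt}\LODE\le -\cODE\LODE$, trap the trajectory in $S_0\subset S$ via Corollary~\ref{cor_level_set_ODE}, bound $\LODE\ge U(g)-U(g_*)$, and invoke Gr\"onwall. You also correctly identify the crux as controlling $\d_\xi\log g_*^{-1}g$ under the $\ad$-skew-adjoint metric.

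However, neither of the two alternatives you sketch for that crux actually closes the argument, and this is a genuine gap. Differentiating the third term of $\LODE$ produces $\ip{\d_\xi\log g_*^{-1}g}{\gamma\log g_*^{-1}g+\xi}$, which contains \emph{two distinct pairings}: one against $\log g_*^{-1}g$ and one against $\xi$. Your second alternative — the first-variation/Gauss-lemma bound $\frac{d}{dt}\norm{\log g_*^{-1}g}^2\le 2\ip{\log g_*^{-1}g}{\xi}$ — only covers the first pairing and says nothing about $\ip{\d_\xi\log g_*^{-1}g}{\xi}$, which does not vanish. Your first alternative — bounding $\norm{\d\log - \mathrm{Id}}$ quantitatively on the compact neighborhood — would introduce a curvature-dependent constant (exactly the $q(a)$/$p(a)$ machinery the paper reserves for the NAG-SC analysis), whereas the claimed rate $\cODE=\frac{2}{3}\sqrt{\mu}$ is curvature-free. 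The paper instead exploits two group-specific facts (its Corollaries~\ref{cor_dlog_log} and~\ref{cor_dlog_xi}): the exact identity $\ip{\d_\xi\log g}{\log g}=\ip{\log g}{\xi}$, which follows because $\d\log g = p(\ad_{\log g})$ and $\ad_{\log g}\log g=[\log g,\log g]=0$, and the one-sided bound $\ip{\d_\xi\log g}{\xi}\le\norm{\xi}^2$, which follows because with $\ad$ skew-adjoint the symmetric part of $p(\ad_{\log g})$ has eigenvalues $\le 1$. Together these dispose of both pairings without any curvature constant. Without isolating these two facts, the constant-chasing you describe cannot be carried out to produce a $\mu$-only rate, so the proposal as written does not yet prove the theorem.
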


\begin{remark}
\label{rmk_convergence_stage}
    This theorem alone is a local convergence result and a \{Lie group + momentum\} extension of an intuitive result for Euclidean gradient flow, which is, if the initial condition is close enough to a minimizer and the objective function has a positive definite Hessian at that minimizer, then gradient flow converges exponentially fast to that minimizer. However, Thm.\ref{thm_energy_ODE} already ensures global convergence, and if not stuck at a saddle point, the dynamics will eventually enter some local potential well. If that potential well is locally strongly convex at its minimizer, then the local convergence result (Thm.\ref{thm_convergence_rate_ODE}) supersedes the global convergence result (which has no rate), and gives the asymptotic convergence rate. Note however that different initial conditions may lead to convergence to different potential wells (and hence minimizers), as usual.
\end{remark}
\vspace{-0.3cm}
\section{Convergence of Lie Heavy-Ball/splitting discretization in discrete time}
\vspace{-0.3cm}
One way to obtain a manifold optimization algorithm by time discretization of the ODE \eqref{eqn_optimization_ODE} is to split its vector field as the sum of two, and use them respectively to generate two ODEs: 
\begin{equation}
    \label{eqn_2_ODEs}
    \begin{cases}
        \dot{g}=\lte{g} \xi\\
        \dot{\xi}=0
    \end{cases}
    \quad
    \begin{cases}
        \dot{g}=0\\
        \dot{\xi}=-\gamma \xi-\lt{g}\nabla U(g)
    \end{cases}
\end{equation}
Each ODE enjoys the feature that its solution stays exactly on $\G\times\g$ \cite{tao2020variational}, and therefore if one alternatively evolves them for time $h$, the result is a step-$h$ time discretization that exactly respects the geometry (no projection needed). If one approximates $\exp(-\gamma h)$ by $1-h\gamma$, then the same property holds, and the resulting optimizer is
\begin{align}
\label{eqn_HB}
\begin{cases}
    g_{k+1}=g_k \exp(h\xi_{k+1})\\
    \xi_{k+1}=(1-\gamma h) \xi_k-h \lt{g_k}\nabla U(g_k)
\end{cases}
\end{align}

In Euclidean cases, such numerical scheme can be viewed as Polyak's Heavy-Ball algorithm after a change of variable (Rmk. \ref{rmk_polyak_HB}), and will thus be referred to as Lie Heavy-Ball. It is also a 1st-order (in $h$) version of the `2nd-order Lie-NAG' optimizer in \cite{tao2020variational} (Rmk. \ref{rmk_splitting}).

To analyze Lie Heavy-Ball's convergence, we again seek some `energy' function such that the iteration of the numerical scheme Eq. \eqref{eqn_HB} will never escape  a sub-level set of the potential, similar to the continuous case. Given fixed friction parameter $\gamma$ and step size $h$, we define the modified energy $\EHB:\G\times \g\to \mathbb{R}$ as
\begin{equation}
\label{eqn_energy_HB}
    \EHB(g, \xi):=U(g)+\frac{(1-\gamma h)^2}{2}\norm{\xi}^2
\end{equation}
\begin{theorem}[Monotonely decreasing of modified energy of Heavy Ball]
\label{thm_energy_HB}
    Assume the potential $U$ is globally $L$-smooth. When the step size satisfies $h\le \frac{\gamma}{\gamma^2+L}$, we have the modified energy $\EHB$ is monotonely decreasing, i.e.,
    \begin{align*}
        \EHB(g_k, \xi_k)-\EHB(g_{k-1}, \xi_{k-1})\le -\gamma h \norm{\xi_k}^2
    \end{align*}
\end{theorem}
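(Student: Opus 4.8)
The plan is to mimic the continuous-time computation in Theorem~\ref{thm_energy_ODE}, where $\frac{d}{dt}\EODE = -\gamma\|\xi\|^2$ came from cancellation between the kinetic and potential terms, but now to carry out the bookkeeping discretely and absorb the second-order-in-$h$ error terms using $L$-smoothness. First I would split the energy difference as
\begin{equation}
\EHB(g_k,\xi_k)-\EHB(g_{k-1},\xi_{k-1}) = \big(U(g_k)-U(g_{k-1})\big) + \frac{(1-\gamma h)^2}{2}\big(\|\xi_k\|^2-\|\xi_{k-1}\|^2\big),
\end{equation}
and estimate the two pieces separately. For the kinetic piece, I would use the $\xi$-update $\xi_k=(1-\gamma h)\xi_{k-1}-h\,\lt{g_{k-1}}\nabla U(g_{k-1})$; expanding $\|\xi_k\|^2$ gives a term $-2h(1-\gamma h)\ip{\xi_{k-1}}{\lt{g_{k-1}}\nabla U(g_{k-1})}$, a term $h^2\|\lt{g_{k-1}}\nabla U(g_{k-1})\|^2$, and the $(1-\gamma h)^2\|\xi_{k-1}\|^2$ that, after multiplying by $\tfrac{(1-\gamma h)^2}{2}$ and subtracting, produces the leading dissipation. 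For the potential piece I would use $L$-smoothness: since $g_k = g_{k-1}\exp(h\xi_k)$, I can write $U(g_k)-U(g_{k-1})$ as an integral of $\ip{\lt{g(s)}\nabla U(g(s))}{\xi_k}$ along the geodesic $g(s)=g_{k-1}\exp(sh\xi_k)$, and bound the deviation of $\lt{g(s)}\nabla U(g(s))$ from $\lt{g_{k-1}}\nabla U(g_{k-1})$ by $L\cdot s h\|\xi_k\|$ using Definition~\ref{def_L_smooth}; this yields $U(g_k)-U(g_{k-1}) \le h\ip{\lt{g_{k-1}}\nabla U(g_{k-1})}{\xi_k} + \tfrac{Lh^2}{2}\|\xi_k\|^2$, the exact Lie-group analogue of the Euclidean descent lemma.

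Next I would substitute $\xi_k=(1-\gamma h)\xi_{k-1}-h\,\lt{g_{k-1}}\nabla U(g_{k-1})$ into the cross term $h\ip{\lt{g_{k-1}}\nabla U(g_{k-1})}{\xi_k}$ so that both the potential and kinetic contributions are expressed in terms of $\xi_{k-1}$ and $\lt{g_{k-1}}\nabla U(g_{k-1})$ only. The two occurrences of $h(1-\gamma h)\ip{\xi_{k-1}}{\lt{g_{k-1}}\nabla U(g_{k-1})}$ should cancel (this is the discrete shadow of the continuous cancellation). What remains is a sum of: the dissipation term coming from $\tfrac{(1-\gamma h)^2}{2}\big((1-\gamma h)^2-1\big)\|\xi_{k-1}\|^2$, which I want to be $\le -\gamma h\|\xi_k\|^2$ plus slack; a positive $\|\lt{g_{k-1}}\nabla U(g_{k-1})\|^2$ term of order $h^2$; and a positive $\tfrac{Lh^2}{2}\|\xi_k\|^2$ term from the descent lemma. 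The strategy is then to show the two positive error terms are dominated by the available negative budget; here I would re-expand $\|\xi_k\|^2 \le 2(1-\gamma h)^2\|\xi_{k-1}\|^2 + 2h^2\|\lt{g_{k-1}}\nabla U(g_{k-1})\|^2$ (or keep $\|\xi_k\|^2$ and convert back at the end) to put everything on a common footing, and then verify that the stated condition $h\le \gamma/(\gamma^2+L)$ makes the coefficient of $\|\xi_{k-1}\|^2$ (or $\|\xi_k\|^2$) at most $-\gamma h$ and the coefficient of $\|\lt{g_{k-1}}\nabla U(g_{k-1})\|^2$ nonpositive.

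The main obstacle I anticipate is the careful tracking of the $O(h^2)$ terms and choosing exactly which variable ($\xi_k$ versus $\xi_{k-1}$) to express the final bound in, so that the algebra collapses to precisely the clean inequality $-\gamma h\|\xi_k\|^2$ with the threshold $h\le \gamma/(\gamma^2+L)$ rather than a messier sufficient condition; getting the constants to line up (in particular, showing the $\|\nabla U\|^2$ coefficient, which is something like $h^2 - \gamma h^3 + \tfrac{L h^2}{2}\cdot(\text{stuff}) - h^2$, is nonpositive exactly when $h(\gamma^2+L)\le\gamma$) is the delicate bit. A secondary point to be careful about is invoking the $L$-smoothness inequality along the geodesic segment, which requires the left-trivialized gradient to vary Lipschitz-continuously with respect to geodesic distance — this is exactly Definition~\ref{def_L_smooth}, and $d(g(s),g_{k-1}) = sh\|\xi_k\|$ since $g(\cdot)$ is a unit-speed-reparametrized geodesic under the left-invariant metric (using Lemma~\ref{lemma_ad_skew_adjoint} so that the group exponential is the Riemannian one), so no curvature corrections enter. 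Everything else is Cauchy–Schwarz and collecting terms.
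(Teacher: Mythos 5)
Your overall strategy is the same as the paper's: bound $U(g_k)-U(g_{k-1})$ by the Lie-group descent lemma (Cor.~\ref{cor_L_smooth_property}), expand the kinetic difference algebraically via the $\xi$-update, cancel the cross terms, and read off the threshold on $h$. However, there is one concrete error in your execution: in the forward-substitution route ($\xi_k=(1-\gamma h)\xi_{k-1}-h\lt{g_{k-1}}\nabla U(g_{k-1})$, everything expressed in $\xi_{k-1}$), the cross terms do \emph{not} cancel. The kinetic piece carries the weight $\tfrac{(1-\gamma h)^2}{2}$ from the modified energy, so it contributes $-h(1-\gamma h)^3\ip{\xi_{k-1}}{\lt{g_{k-1}}\nabla U(g_{k-1})}$, while the descent lemma contributes $+h(1-\gamma h)\ip{\xi_{k-1}}{\lt{g_{k-1}}\nabla U(g_{k-1})}$. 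The residual is $h(1-\gamma h)\gamma h(2-\gamma h)\ip{\xi_{k-1}}{\lt{g_{k-1}}\nabla U(g_{k-1})}$, which is $\mathcal{O}(h^2)$ but of indefinite sign; absorbing it by weighted Cauchy--Schwarz eats into the $-\norm{\nabla U}^2$ budget and would leave you with a messier, generally more restrictive step-size condition than $h\le\gamma/(\gamma^2+L)$, and with the dissipation expressed in $\norm{\xi_{k-1}}^2$ rather than $\norm{\xi_k}^2$.

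The clean fix --- and what the paper actually does --- is the option you mention only in passing: work entirely in $\xi_k$ by inverting the update, $\xi_{k-1}=\tfrac{1}{1-\gamma h}\xi_k+\tfrac{h}{1-\gamma h}\lt{g_{k-1}}\nabla U(g_{k-1})$. Then $\tfrac{(1-\gamma h)^2}{2}\norm{\xi_{k-1}}^2=\tfrac{1}{2}\norm{\xi_k+h\lt{g_{k-1}}\nabla U(g_{k-1})}^2$ (the $(1-\gamma h)^2$ weight in $\EHB$ is chosen precisely so that this prefactor disappears), and the kinetic difference becomes $-(\gamma h-\tfrac{\gamma^2h^2}{2})\norm{\xi_k}^2-h\ip{\xi_k}{\lt{g_{k-1}}\nabla U(g_{k-1})}-\tfrac{h^2}{2}\norm{\nabla U(g_{k-1})}^2$. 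The cross term now cancels \emph{exactly} against the descent-lemma term $+h\ip{\xi_k}{\lt{g_{k-1}}\nabla U(g_{k-1})}$ (since $g_k=g_{k-1}\exp(h\xi_k)$ is parametrized by $\xi_k$), leaving $\tfrac{1}{2}(Lh^2-2\gamma h+\gamma^2h^2)\norm{\xi_k}^2-\tfrac{h^2}{2}\norm{\nabla U(g_{k-1})}^2$, and $h\le\gamma/(\gamma^2+L)$ gives the claimed $-\gamma h\norm{\xi_k}^2$ directly. Your treatment of the $L$-smoothness along the geodesic and of the exponential map is fine; the gap is solely the false cancellation claim in the $\xi_{k-1}$ parametrization.
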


Thm. \ref{thm_energy_HB} provides the global convergence of Heavy-Ball scheme Eq. \eqref{eqn_HB} to a stationary point under only $L$-smoothness: Due to the monotonicity of the energy function $\EHB$, the system will eventually converge. When it converges, since $g$ is not moving, we have $\norm{\xi_\infty}=0$, leading to the fact that $\norm{\nabla U(g_\infty)}=0$. More importantly, the following corollary shows that the non-increasing property of the modified traps $g$ in sub-level set of $U$:

\begin{corollary}
\label{cor_level_set_HB}
    Let $u=\EHB(g_0, \xi_0)$. If the $u$ sub-level set of $U$ satisfies $g_0\in S_0$ and
    \begin{align}
    \label{eqn_seperation_level_set_HB}
        d\left(S_0, \bigcup_{i\ge 1} S_i\right)> h\sqrt{2\EHB(g_0, \xi_0)} +h^2\max_{S_0}\norm{\nabla U}
    \end{align}    
    Then we have $g_k\in S_0$ for any $k$ for the Heavy-Ball scheme Eq. \eqref{eqn_HB} when $h\le \frac{\gamma}{\gamma^2+L}$.
\end{corollary}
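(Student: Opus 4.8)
The plan is to combine the monotonicity of the modified energy (Thm.\ \ref{thm_energy_HB}) with a discrete ``bounded one-step displacement'' estimate and a connectedness/continuity argument to show the iterates cannot jump from $S_0$ to another component. First I would set up an induction on $k$: assume $g_0,\dots,g_k\in S_0$; I want to conclude $g_{k+1}\in S_0$. By Thm.\ \ref{thm_energy_HB}, since $h\le \frac{\gamma}{\gamma^2+L}$, the modified energy decreases along the trajectory, so $\EHB(g_k,\xi_k)\le \EHB(g_0,\xi_0)=u$; in particular $U(g_k)\le u$ and $\tfrac{(1-\gamma h)^2}{2}\norm{\xi_k}^2\le u$, which with $0<1-\gamma h<1$ (note $h\le \gamma/(\gamma^2+L)\le 1/\gamma$) gives $\norm{\xi_{k}}\le \sqrt{2u}$. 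Likewise $\norm{\xi_{k+1}}\le \sqrt{2u}$. The key point is that $g_{k+1}$ lies in the $u$ sub-level set: $U(g_{k+1})\le \EHB(g_{k+1},\xi_{k+1})\le u$, so $g_{k+1}\in\bigcup_{i\ge 0}S_i$. It remains to rule out $g_{k+1}\in\bigcup_{i\ge1}S_i$.

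The second ingredient is a bound on the geodesic distance $d(g_k,g_{k+1})$. Since $g_{k+1}=g_k\exp(h\xi_{k+1})$ and the metric is left-invariant, $d(g_k,g_{k+1})=d(e,\exp(h\xi_{k+1}))\le h\norm{\xi_{k+1}}$ (the curve $t\mapsto \exp(th\xi_{k+1})$ has length $h\norm{\xi_{k+1}}$ with respect to the left-invariant metric, because $\ad$ being skew-adjoint makes the group exponential a Riemannian one-parameter subgroup; alternatively just bound the length of this connecting curve directly). Now I need to bring in the right-hand side of \eqref{eqn_seperation_level_set_HB}. Using the update $\xi_{k+1}=(1-\gamma h)\xi_k-h\,\lt{g_k}\nabla U(g_k)$ together with $\norm{\xi_k}\le\sqrt{2u}=\sqrt{2\EHB(g_0,\xi_0)}$, $0<1-\gamma h<1$, and $g_k\in S_0$, I get
\begin{align*}
    \norm{\xi_{k+1}}\le \norm{\xi_k}+h\norm{\lt{g_k}\nabla U(g_k)}\le \sqrt{2\EHB(g_0,\xi_0)}+h\max_{S_0}\norm{\nabla U},
\end{align*}
hence $d(g_k,g_{k+1})\le h\norm{\xi_{k+1}}\le h\sqrt{2\EHB(g_0,\xi_0)}+h^2\max_{S_0}\norm{\nabla U}$, which is exactly (strictly) less than $d\big(S_0,\bigcup_{i\ge1}S_i\big)$ by hypothesis \eqref{eqn_seperation_level_set_HB}.

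Finally I would close the argument: $g_{k+1}$ is in the sub-level set $\bigcup_{i\ge 0}S_i$, and $d(g_k,g_{k+1})< d(S_0,\bigcup_{i\ge1}S_i)$ with $g_k\in S_0$, so $g_{k+1}$ cannot lie in any $S_i$ with $i\ge1$ (that would force $d(g_k,g_{k+1})\ge d(S_0,\bigcup_{i\ge1}S_i)$); therefore $g_{k+1}\in S_0$, completing the induction. The base case $g_0\in S_0$ is assumed. I expect the main obstacle to be the bound $d(g_k,g_{k+1})\le h\norm{\xi_{k+1}}$: one must be careful that the relevant distance is the Riemannian geodesic distance, that the left-invariant metric makes this length bound tight, and — if one wants to also bound $\max_{S_0}\norm{\nabla U}$ — that this maximum is finite, which follows from $\overline{S_0}$ being compact (closed subset of a compact $\G$) and $U\in\mathcal C^1$. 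A secondary subtlety is justifying $\EHB(g_{k+1},\xi_{k+1})\le u$ requires Thm.\ \ref{thm_energy_HB} to apply at step $k+1$, which only needs $U$ globally $L$-smooth (assumed) and $h\le \gamma/(\gamma^2+L)$ (assumed), so no circularity arises.
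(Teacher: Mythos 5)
Your proof is correct and follows essentially the same route as the paper: induction, monotonicity of $\EHB$ from Thm.~\ref{thm_energy_HB}, the one-step displacement bound $d(g_k,g_{k+1})\le h\norm{\xi_{k+1}}$, and the separation condition \eqref{eqn_seperation_level_set_HB} to rule out a jump to another component of the sub-level set. One small slip: $\frac{(1-\gamma h)^2}{2}\norm{\xi_k}^2\le u$ gives $\norm{\xi_k}\le\sqrt{2u}/(1-\gamma h)$, not $\norm{\xi_k}\le\sqrt{2u}$ (the inequality $0<1-\gamma h<1$ points the wrong way for your claim); the fix is to retain the factor $(1-\gamma h)$ in $\norm{\xi_{k+1}}\le(1-\gamma h)\norm{\xi_k}+h\norm{\nabla U(g_k)}$, which exactly cancels the $1/(1-\gamma h)$ and recovers the same final bound — this is precisely how the paper argues.
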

Under further assumption of local strong convexity on this sub-level set, convergence rate can be quantified via a Lyapunov analysis inspired by \cite{shi2021understanding}. More specifically, given a fixed local minimum $g_*$, there is a local unique geodesic neighbourhood of $g_*$, denoted by $S$, and we define $\LHB$ on $S$ by
\begin{equation}
    \label{eqn_lyap_HB}
    \LHB(g, \xi):=\frac{1}{1-\gamma h}\left(U(g\exp(-h\xi))-U(g_*)\right)+\frac{1}{4}\norm{\xi}^2+\frac{1}{4}\norm{\frac{\gamma}{1-\gamma h} \log g_*^{-1} g+\xi}^2
\end{equation}

The exponential decay for the Lyapunov function (Lemma \ref{lemma_lyap_HB}) helps us quantify of the convergence rate for Eq. \eqref{eqn_HB} in the following theorem:

\begin{theorem}[Convergence rate of Heavy-Ball scheme]
    \label{thm_convergence_rate_HB}
    If the initial condition $(g_0, \xi_0)$ satisfies that $g_0\in S$ for some geodesically convex set $S\subset \G$, $U$ is $L$-smooth and locally geodesic-$\mu$-convex on $S$, and the $u$ sub-level set of $U$ with $u=\EODE(g_0, \xi_0)$ satisfies $S_0\subset S$ and Eq. \eqref{eqn_seperation_level_set_HB}, then we have    
    \begin{equation*}
        U(g_k)-U(g_*)\le \cHB^k\LHB(g_0, \xi_0)
    \end{equation*}
    with $\cHB:=\left(1+\frac{\mu}{16L}\right)^{-1}$ by choosing $\gamma=2\sqrt{\mu}$, $h=\frac{\sqrt{\mu}}{4L}$.
\end{theorem}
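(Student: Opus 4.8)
The plan is to reduce the theorem to a one–step contraction of the Lyapunov function $\LHB$ of Eq.~\eqref{eqn_lyap_HB}, following the Euclidean high–resolution argument of \cite{shi2021understanding} for Heavy-Ball, with the two Lie-group-specific complications (logarithm of a group product, and a one-step-delayed potential term) handled explicitly. Concretely, the three ingredients are: (i) all iterates stay in the geodesically convex set $S$ where the local geometry is good; (ii) the contraction $\LHB(g_{k+1},\xi_{k+1})\le\cHB\,\LHB(g_k,\xi_k)$, which is exactly Lemma~\ref{lemma_lyap_HB}; (iii) the lower bound $\LHB(g_{k+1},\xi_{k+1})\ge U(g_k)-U(g_*)$, after which iterating (ii) finishes the proof.

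\paragraph{Step 1: iterates remain in $S$.} With the choices $\gamma=2\sqrt{\mu}$ and $h=\tfrac{\sqrt{\mu}}{4L}$ one checks $h\le\tfrac{\gamma}{\gamma^2+L}$ (this is $4\mu\le 7L$, true since $\mu\le L$), so Corollary~\ref{cor_level_set_HB}, combined with the assumed separation Eq.~\eqref{eqn_seperation_level_set_HB} and $S_0\subset S$, gives $g_k\in S_0\subset S$ for all $k$. On $S$, $U$ is geodesically $\mu$-strongly convex and $L$-smooth, $g_*$ is the unique minimizer with $\lt{g_*}\nabla U(g_*)=0$ and $U\ge U(g_*)$, and $\log g_*^{-1}g$ is well defined; in particular $\LHB(g,\xi)\ge\tfrac{1}{1-\gamma h}\bigl(U(g\exp(-h\xi))-U(g_*)\bigr)\ge 0$ wherever it is evaluated, and $\LHB(g_0,\xi_0)$ is finite because $\xi_0=0$ forces $g_0\exp(-h\xi_0)=g_0$.

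\paragraph{Step 2: the contraction (Lemma~\ref{lemma_lyap_HB}).} The simplifying observation is that $g_{k+1}=g_k\exp(h\xi_{k+1})$ yields $g_{k+1}\exp(-h\xi_{k+1})=g_k$, so the potential part of $\LHB(g_{k+1},\xi_{k+1})$ is just $\tfrac{1}{1-\gamma h}(U(g_k)-U(g_*))$, while that of $\LHB(g_k,\xi_k)$ is $\tfrac{1}{1-\gamma h}(U(g_{k-1})-U(g_*))$ for $k\ge 1$ (the case $k=0$ is degenerate: $\tilde g_1=g_0$ and one only needs the weaker $\LHB(g_1,\xi_1)\le\LHB(g_0,\xi_0)$, which follows from the modified-energy monotonicity of Thm.~\ref{thm_energy_HB}). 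Writing $U(g_k)-\cHB U(g_{k-1})=\cHB\,(U(g_k)-U(g_{k-1}))+(1-\cHB)(U(g_k)-U(g_*))+(1-\cHB)U(g_*)$, I bound $U(g_k)-U(g_{k-1})$ by the descent lemma implied by $L$-smoothness (here $\log g_{k-1}^{-1}g_k=h\xi_k$ \emph{exactly}, so no curvature enters), and I bound $U(g_k)-U(g_*)$ from above using geodesic $\mu$-strong convexity (Def.~\ref{def_local_strong_convex}, with $\log g_k^{-1}g_*=-\log g_*^{-1}g_k$), producing the inner product $\ip{\lt{g_k}\nabla U(g_k)}{\log g_*^{-1}g_k}$ and the term $-\tfrac{\mu}{2}\norm{\log g_*^{-1}g_k}^2$. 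Simultaneously I expand the kinetic term $\tfrac14\norm{\xi_{k+1}}^2$ and the cross term $\tfrac14\norm{\tfrac{\gamma}{1-\gamma h}\log g_*^{-1}g_{k+1}+\xi_{k+1}}^2$ using the momentum update $\xi_{k+1}=(1-\gamma h)\xi_k-h\,\lt{g_k}\nabla U(g_k)$ (which also lets me replace $\lt{g_k}\nabla U(g_k)$ by $\tfrac1h((1-\gamma h)\xi_k-\xi_{k+1})$ where convenient). The Lyapunov function is designed so the gradient/position inner products coming from strong convexity and from the cross term telescope against each other.

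\paragraph{Step 3: the Lie-group estimate and bookkeeping (main obstacle).} The genuinely non-Euclidean point is the expansion of $\log g_*^{-1}g_{k+1}=\log\!\bigl(\exp(\log g_*^{-1}g_k)\exp(h\xi_{k+1})\bigr)$, which on a group is \emph{not} $\log g_*^{-1}g_k+h\xi_{k+1}$. I would use the Baker--Campbell--Hausdorff / curvature estimates available under Assumption~\ref{assumption_general} and the special metric of Lemma~\ref{lemma_ad_skew_adjoint} (compactness $\Rightarrow$ bounded curvature; $\ad$ skew-adjoint $\Rightarrow$ $\operatorname{Ad}$ isometric; iterates confined to the unique-geodesic set $S$ with $\norm{h\xi_k}$ small via the energy bound of Step~1) --- the estimates collected in Sec.~\ref{sec_details_lie_group_inner_prod} --- to write $\log g_*^{-1}g_{k+1}=\log g_*^{-1}g_k+h\xi_{k+1}+R_k$ with $\norm{R_k}$ controlled by $h\,\norm{\log g_*^{-1}g_k}\norm{\xi_{k+1}}$ plus higher order, which is precisely the form $\LHB$ is built to absorb. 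Collecting every term and plugging in $\gamma=2\sqrt{\mu}$, $h=\tfrac{\sqrt{\mu}}{4L}$ (so $\gamma h=\tfrac{\mu}{2L}$), the coefficient of $\norm{\xi_k}^2$, namely $\tfrac14((1-\gamma h)^2-\cHB)$, is negative of order $\mu/L$; the leftover gradient-norm and cross terms are dominated by the strong-convexity output $-\tfrac{\mu}{2}\norm{\log g_*^{-1}g_k}^2$ and by that negative $\norm{\xi_k}^2$ coefficient, and the curvature remainder $R_k$ is swallowed since $h$ is small relative to $1/L$. The real work is the constant bookkeeping: verifying that all positive contributions (including $(1-\cHB)(U(g_k)-U(g_*))$ and the $O(h^2)\norm{\lt{g_k}\nabla U(g_k)}^2$ terms) are outweighed so that $\LHB(g_{k+1},\xi_{k+1})-\LHB(g_k,\xi_k)\le-\tfrac{\mu/(16L)}{1+\mu/(16L)}\LHB(g_k,\xi_k)$, i.e.\ $\LHB(g_{k+1},\xi_{k+1})\le\cHB\,\LHB(g_k,\xi_k)$ with $\cHB=(1+\tfrac{\mu}{16L})^{-1}$. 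Finally, iterating this bound and using $\LHB(g_{k+1},\xi_{k+1})\ge\tfrac{1}{1-\gamma h}(U(g_k)-U(g_*))\ge U(g_k)-U(g_*)$ together with $\LHB(g_1,\xi_1)\le\LHB(g_0,\xi_0)$ gives $U(g_k)-U(g_*)\le\cHB^{k}\LHB(g_0,\xi_0)$, which is Thm.~\ref{thm_convergence_rate_HB}.
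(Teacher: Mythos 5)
Your top-level reduction is exactly right and matches the paper: Corollary~\ref{cor_level_set_HB} traps all iterates in $S_0\subset S$ (your check $h\le\gamma/(\gamma^2+L)$ is correct), Lemma~\ref{lemma_lyap_HB} gives the one-step contraction $\LHB_{k+1}\le\cHB\LHB_k$, and since $g_{k+1}\exp(-h\xi_{k+1})=g_k$ one has $\LHB(g_{k+1},\xi_{k+1})\ge\frac{1}{1-\gamma h}(U(g_k)-U(g_*))\ge U(g_k)-U(g_*)$, closing the argument. Steps 1 and 2, and the final iteration, are fine. The gap is in Step~3, where you actually need to supply the content of the contraction lemma.

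Your proposal there is to expand $\log g_*^{-1}g_{k+1}=\log g_*^{-1}g_k+h\xi_{k+1}+R_k$ and control $\norm{R_k}$ by a BCH-type curvature bound $\norm{R_k}\lesssim h\norm{\log g_*^{-1}g_k}\norm{\xi_{k+1}}$. Any such estimate necessarily carries the operator norm $A=\max_{\norm{X}=1}\norm{\ad_X}_{op}$ (the leading BCH term is $\tfrac{h}{2}\lb{\log g_*^{-1}g_k}{\xi_{k+1}}$), and pushing that remainder through the cross term of $\LHB$ introduces an $A$-dependent constant into the rate --- exactly the mechanism that produces the curvature factor $p(a)$ in the NAG-SC bound. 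But the claimed Heavy-Ball rate $\cHB=(1+\mu/(16L))^{-1}$ is completely curvature-free, so a BCH remainder cannot reproduce it. The paper avoids this by never linearizing $\log$: it writes the cross term of the Lyapunov as an integral along $g_t=g_k\exp(t\xi_{k+1})$ and estimates the integrand using the two exact, curvature-free facts of Cor.~\ref{cor_dlog_log}, $\ip{\d_\xi\log g}{\log g}=\ip{\log g}{\xi}$, and Cor.~\ref{cor_dlog_xi}, $\ip{\d_\xi\log g}{\xi}\le\norm{\xi}^2$, both consequences of the bi-invariant metric from Lemma~\ref{lemma_ad_skew_adjoint}. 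These eliminate every appearance of $A$ and, after the exact cancellations the paper arranges among $I_1, II_2, II_3, III_3$, yield $\bHB=\min\left\{\frac{\gamma h}{8},\,\frac{\mu h(1-\gamma h)}{2\gamma},\,\frac{2\gamma h}{3(1-\gamma h)}\right\}$ with no geometric constant, hence $\cHB$ as stated. To repair your Step~3, replace the BCH remainder by this integral-plus-$\d\log$ argument; as written, your route would at best prove a rate of the form $\left(1+C\min\{\mu/L,K\}\right)^{-1}$ rather than the theorem's curvature-independent bound.
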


Note the rate is $(1+1/(16\kappa))^{-1}$. The condition number dependence is linear ($\kappa$) but not $\sqrt{\kappa}$. Similarly, the procedure of global convergence $\to$ local potential well $\to$ local minimum discussed in Rmk. \ref{rmk_convergence_stage} also applies the Heavy-Ball algorithm. 

\vspace{-0.3cm}
\section{Convergence of Lie NAG-SC in discrete time}
\vspace{-0.2cm}
The motivation for NAG-SC is to improve the condition number dependence. The convergence rate of Heavy-Ball shown in Thm. \ref{thm_convergence_rate_HB} is the same as the momentumless case \citep[e.g.,][Thm. 15]{zhang2016first} under the assumption of local strong convexity and $L$-smoothness. To improve the condition number dependence, inspired by \cite{shi2021understanding}, we define Lie NAG-SC as the following:
\begin{align}
\label{eqn_NAG_SC}
\begin{cases}
    g_{k+1}=g_k \exp(h\xi_{k+1})\\
    \xi_{k+1}=(1-\gamma h) \xi_k-(1-\gamma h)h\left(\lt{g_k}\nabla U(g_k)-\lt{g_{k-1}}\nabla U(g_{k-1})\right)-h \lt{g_k}\nabla U(g_k)
\end{cases}
\end{align}
Comparing to Lie Heavy-Ball, an extra $\mathcal{O}(h^2)$ term $h\left(\lt{g_k}\nabla U(g_k)-\lt{g_{k-1}}\nabla U(g_{k-1})\right)$ is introduced (see \cite[][Sec. 2]{shi2021understanding} for more details in the Euclidean space). Our technique of left-trivialized (and hence Euclidean) momentum allows this trick to transfer directly from Euclidean to the Lie group case.

For NAG-SC, we will only provide a local convergence with quantified convergence under $L$-smoothness and local geodesically convexity on a geodesically convex subset $S\subset\G$. The difficulty for designing a modified energy and prove the global convergence will be given later in Rmk. \ref{rmk_failure_energy_NAG_SC}. We define the following Lyapunov function:
\begin{align}
    &\LNAGSC(g,\xi):=\frac{1}{1-\gamma h}\left(U(g\exp(-h\xi))-U(g_*)\right)+\frac{1}{4}\norm{\xi}^2 \label{eqn_lyap_NAG_SC}\\
    &+\frac{1}{4}\norm{\xi+ \frac{\gamma}{1-\gamma h} \log g_*^{-1} g+ h\nabla U(g\exp(-h\xi))}^2-\frac{h^2(2-\gamma h)}{4(1-\gamma h)}\norm{\nabla U(g\exp(-h\xi))}^2 \nonumber
\end{align}
where $g_*$ is the minimum of $U$ in $S$. This Lyapunov function helps us to trap $g$ in a local potential well and quantify the convergence rate:

\begin{theorem}[Convergence rate of NAG-SC]
\label{thm_convergence_rate_NAG_SC}
    If the initial condition $(g_0, \xi_0)$ satisfies that $g_0\in S$ for some geodesically convex set $S\subset \G$ satisfying $\max_{g\in S} d(g_*, g)\le \frac{a}{A}$ for some $a< 2\pi$ and $A:=\max_{\norm{X}=1}\norm{\ad_X}_{op}$, $U$ is $L$-smooth and locally geodesic-$\mu$-convex on $S$, and the $u$ sub-level set of $U$ with $u=(1-\gamma h)^{-1}\LNAGSC(g_0, \xi_0)$ satisfies $S_0\subset S$ and 
    \begin{align}
    \label{eqn_seperation_level_set_NAG_SC}
        d(S_0, S-S_0)> h\sqrt{\LNAGSC(g_0, \xi_0)}
    \end{align}
    then we have
    \begin{equation*}
        U(g_k)-U(g_*)\le \cNAGSC^k\LNAGSC(g_0, \xi_0)
    \end{equation*}
    by choosing $h=\min\left\{\frac{1}{\sqrt{2L}}, \frac{1}{2p(a)}\right\}$ and $\gamma=2\sqrt{\mu}$, with
    $\cNAGSC:=\left(1+\frac{1}{30}\sqrt{\mu}\min\left\{\frac{1}{\sqrt{2L}}, \frac{1}{2p(a)}\right\}\right)^{-1}$.
\end{theorem}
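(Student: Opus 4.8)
The plan is to adapt the high-resolution Lyapunov argument of \cite{shi2021understanding} to the Lie group by working entirely with the left-trivialized momentum $\xi$, splitting the proof into a \emph{trapping} step (the iterates never leave $S_0\subset S$, so local strong convexity applies at every step) and a \emph{contraction} step ($\LNAGSC$ decays by the factor $\cNAGSC$ at each iteration). The observation that makes $\LNAGSC(g_k,\xi_k)$ a bona fide state function, despite \eqref{eqn_NAG_SC} being a two-step recursion, is the identity $g_{k-1}=g_k\exp(-h\xi_k)$ (immediate from $g_k=g_{k-1}\exp(h\xi_k)$): it gives $U(g_k\exp(-h\xi_k))=U(g_{k-1})$ and identifies the $\nabla U(g\exp(-h\xi))$ appearing in \eqref{eqn_lyap_NAG_SC} with the left-trivialized gradient at $g_{k-1}$, which I abbreviate $\nabla U_{k-1}:=\lt{g_{k-1}}\nabla U(g_{k-1})$. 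I use the convention $g_{-1}=g_0$, consistent with $\xi_0=0$, so that the first step reduces to a Lie Heavy-Ball step.

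\textbf{Trapping.} Since $t\mapsto g\exp(t\eta)$ is a geodesic for the metric of Lemma \ref{lemma_ad_skew_adjoint}, one has $d(g_{k+1},g_k)\le h\norm{\xi_{k+1}}$; from \eqref{eqn_lyap_NAG_SC} one extracts $\norm{\xi_{k+1}}\le 2\sqrt{\LNAGSC(g_{k+1},\xi_{k+1})}$ after absorbing the negative $\mathcal{O}(h^2)$ term using $L$-smoothness and the bound $\norm{\nabla U_k}\le L\,d(g_*,g_k)$, which is legitimate once $h$ is as small as in the statement. Combined (by induction) with the contraction step, this yields $d(g_{k+1},g_k)\le h\sqrt{\LNAGSC(g_0,\xi_0)}$ for all $k$, and then $g_0\in S_0$, the sub-level condition $S_0\subset S$, and the separation hypothesis \eqref{eqn_seperation_level_set_NAG_SC} give, by induction on $k$, that $g_k\in S_0$ --- so $U$ is locally geodesic-$\mu$-strongly convex at every iterate.

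\textbf{Contraction.} Write $\nabla U_k:=\lt{g_k}\nabla U(g_k)$ and $\sigma_k:=\log g_*^{-1}g_k$, and set $V_k:=\LNAGSC(g_k,\xi_k)$. Rewriting $V_k$ via $g_{k-1}=g_k\exp(-h\xi_k)$ and then expanding $V_{k+1}$ by substituting the $\xi_{k+1}$- and $g_{k+1}$-updates of \eqref{eqn_NAG_SC}, the estimate on $V_{k+1}-V_k$ rests on four inputs: (i) $L$-smoothness, giving $\norm{\nabla U_k-\nabla U_{k-1}}\le Lh\norm{\xi_k}$ and the Taylor bound on $U(g_k)-U(g_{k-1})$ along the geodesic $s\mapsto g_{k-1}\exp(sh\xi_k)$; (ii) local geodesic-$\mu$-strong convexity \eqref{eqn_strongly_convex_def}, applied with $\hat g=g_k$, $g=g_*$ to create the $\mu$-terms and a $\norm{\sigma_k}^2$ gain; (iii) a Lie-group surrogate for the cosine rule --- differentiating $s\mapsto\log\big(g_*^{-1}g_k\exp(sh\xi_{k+1})\big)$ yields $\sigma_{k+1}=\sigma_k+h\,(\mathrm{dexp}_{\sigma_k})^{-1}\xi_{k+1}+\text{(remainder)}$, where $(\mathrm{dexp}_{\sigma_k})^{-1}=\ad_{\sigma_k}/(I-e^{-\ad_{\sigma_k}})$ and both its operator norm and the remainder are controlled by $p(a)$ provided $\norm{\sigma_k}=d(g_*,g_k)\le a/A<2\pi/A$ --- this is exactly where the hypotheses $a<2\pi$ and $A=\max_{\norm{X}=1}\norm{\ad_X}_{op}$ enter, since the eigenvalues of the skew-adjoint $\ad_{\sigma_k}$ then avoid $2\pi i\mathbb{Z}\setminus\{0\}$; and (iv) the gradient-domination inequality $\norm{\nabla U_{k-1}}^2\ge 2\mu\big(U(g_{k-1})-U(g_*)\big)$ (a consequence of (ii) with $g=g_*$) to reabsorb the negative $\mathcal{O}(h^2)$ term of $\LNAGSC$. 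Collecting terms, the choices $\gamma=2\sqrt{\mu}$ and $h=\min\{1/\sqrt{2L},\,1/(2p(a))\}$ make every ``error'' coefficient ($Lh^2$, $p(a)h$, the cross terms) at most a small absolute constant, so that $V_{k+1}-V_k\le-\tfrac{1}{30}\sqrt{\mu}\,h\,V_{k+1}$, i.e. $V_{k+1}\le\cNAGSC V_k$; iterating and using $U(g_k)-U(g_*)\le(1-\gamma h)V_{k+1}$ finishes the proof.

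\textbf{Main obstacle.} The genuinely Lie-group-specific difficulty is step (iii): with no triangle inequality or cosine rule, the only handle relating $\log g_*^{-1}g_{k+1}$ to $\log g_*^{-1}g_k$ and the step $\exp(h\xi_{k+1})$ is the $\mathrm{dexp}^{-1}$/BCH series, whose convergence forces the radius restriction $d(g_*,g)<2\pi/A$ and whose norm bounds produce the opaque constant $p(a)$ that shows up in both $h$ and $\cNAGSC$. A secondary difficulty is that $\LNAGSC$ carries a negative term, so it is not obviously even nonnegative (cf. Rmk. \ref{rmk_failure_energy_NAG_SC}); showing $\LNAGSC\ge(1-\gamma h)^{-1}\big(U-U(g_*)\big)$ and pushing the trapping argument through both need $h$ small enough for that term to be dominated, which is the structural reason the result is only local. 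The rest is careful bookkeeping to land on the stated constants $1/30$ and $1/(2p(a))$.
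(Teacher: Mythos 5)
Your overall architecture is the same as the paper's: the paper proves the theorem by combining a contraction lemma for $\LNAGSC$ under the standing assumption $g_k\in S$ (Lemma \ref{lemma_NAG_SC_1}, which uses exactly your decomposition into smoothness, strong convexity, and the $\d\log$/$p(a)$ control of Cor.~\ref{cor_norm_d_log_id} along the curve $g_t=g_k\exp(t\xi_{k+1})$) with a trapping corollary (Cor.~\ref{cor_level_set_NAG_SC}) that uses $d(g_{k+1},g_k)\le h\norm{\xi_{k+1}}$, $\LNAGSC\ge\frac14\norm{\xi}^2$, and the separation hypothesis. Your identification of step (iii) as the genuinely Lie-theoretic obstacle, and your reading of where $a<2\pi$ and $A$ enter, are both on target.

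However, two of your stated mechanisms would not deliver the result as written. First, your input (iv) invokes gradient domination $\norm{\nabla U_{k-1}}^2\ge 2\mu\left(U(g_{k-1})-U(g_*)\right)$ to absorb the negative $\mathcal{O}(h^2)$ term of $\LNAGSC$; this inequality points the wrong way. To dominate $-\frac{h^2(2-\gamma h)}{4(1-\gamma h)}\norm{\nabla U}^2$ by the positive term $\frac{1}{1-\gamma h}(U-U(g_*))$ you need a \emph{lower} bound on $U-U(g_*)$ in terms of $\norm{\nabla U}^2$, i.e.\ the co-coercivity consequence $U(g)-U(g_*)\ge\frac{1}{2L}\norm{\nabla U(g)}^2$ (Lemma \ref{lemma_co_coercivity} at $\hat g=g_*$), which is what Cor.~\ref{cor_level_set_NAG_SC} uses together with $h\le 1/\sqrt{2L}$. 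Second, and more consequentially, your input (i) extracts from $L$-smoothness only the Lipschitz bound $\norm{\nabla U_k-\nabla U_{k-1}}\le Lh\norm{\xi_k}$ and the descent-lemma upper bound on $U(g_k)-U(g_{k-1})$. The acceleration hinges instead on the co-coercivity inequality \eqref{eqn_co_coercivity}, which supplies the \emph{negative} term $-\frac{1}{2L(1-\gamma h)}\norm{\nabla U_k-\nabla U_{k-1}}^2$ (the paper's $I_2$); this is what cancels the positive $\mathcal{O}(h^2)\norm{\nabla U_k-\nabla U_{k-1}}^2$ contributions generated by the NAG-SC gradient-difference correction (the paper's $II_3$, $II_4$) and is precisely what permits $h\sim 1/\sqrt{L}$ rather than $h\sim\sqrt{\mu}/L$ (cf.\ Rmk.~\ref{rmk_failure_energy_NAG_SC}). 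If you only use the Lipschitz bound to control those terms you are forced back to the Heavy-Ball step size and lose the $\sqrt{\kappa}$ dependence. Both fixes are available from the paper's Lemma \ref{lemma_co_coercivity}, but neither appears in your list of inputs, so the bookkeeping you defer would not close as planned.
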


Unlike sampling ODE and Lie Heavy-Ball, monotonely decreaing modified energy is not provided for Lie NAG-SC. It is unclear whether such modified energy for NAG-SC exists, and an intuition is provided in the Rmk. \ref{rmk_failure_energy_NAG_SC}.

Another fact in Thm. \ref{thm_convergence_rate_NAG_SC} that is worth noticing is, we have a term $1/p(a)$ that depends on the curvature of the Lie group \footnote{In comparison, Euclidean NAG-SC has convergence rate $\left(1+C\sqrt{\frac{\mu}{L}}\right)^{-1}$ \citep{shi2021understanding}.}, while the Lie Heavy-Ball has the same convergence rate as the Euclidean case \citep{shi2021understanding}. It is unclear if the lost of convergence rate in Lie NAG-SC comparing to the Euclidean case is because of our proof technique or the curved space itself. However, we try to provide some insights in Rmk. \ref{rmk_NAG_SC_p_term}.

\vspace{-0.3cm}
\section{Systematic numerical verification via the eigen decomposition problem}
\vspace{-0.2cm}
\label{sec_numerical_experiment}
\subsection{Analytical estimation of property of eigenvalue decomposition potential}
\vspace{-0.2cm}

Given a symmetric matrix, its eigen decomposition problem can be approached via an optimization problem on $\SO(n)$:
\begin{align*}
    \min_{X\in \mathbb{R}^{n\times n}, X^\top X=I} \tr X^\top BXN
\end{align*}
where $N:=\diag([1, \dots, n])$. This problem is a hard non-convex problem on manifold, but some analytical estimation \citep[e.g., ][Thm. 4]{brockett1989least} can be helpful for us to choose optimizer hyperparameters (we don't have to have those to apply the optimizers, but in this section we'd like to verify our theoretical bounds and hence $\mu$ and $L$ are needed). 

This problem is non-convex with $2^n n!$ stationary points corresponding to the elements in $n$-order symmetric group, including $2^n$ local minima and including $2^n$ local mixima. We suppose $B=R\Lambda R^\top$ with $\Lambda=\diag\left(0, 1, \dots, n-2, \frac{\kappa}{n-1}\right)$, where $\lambda_i$'s (the diagonal values of $\Lambda$) are in ascend order. Given $\pi$ in the $n$-symmetric group, the corresponding local minimum is $X_\pi:=(X_{\pi(i)})$, i.e., we switch the columns of $X$ by $\pi$. The eigenvalues of its Hessian at the local minimum $\pi$ can be written as 
\begin{align*}
    \sigma_{ij}=(j-i)(\lambda_{\pi(j)}-\lambda_{\pi(i)}), \quad 1\le i< j\le n
\end{align*}
The global minimum is given by $\pi_*=id$ with minimum value $\sum_{i=1}^n i\lambda_i$.
\subsection{Numerical Experiment}

We use the eigenvalues at the global minimum to estimate the $L$ and $\mu$ in its neighborhood. As a result, around the global minimum, $L\approx (n-1)(\lambda_n-\lambda_1)$, and $\mu\approx \min_i \{\lambda_{i+1}-\lambda_i\}$, where we assume $\lambda$'s are sorted in the ascend order. Such estimation is used to choose our parameters ($\gamma$ and $h$) in all experiments  as stated in Table \ref{tab_summary}. 

Given a conditional number $\kappa:=\frac{L}{\mu}$, we design $A$ in the following way: we choose $\Lambda=\diag\left(0, 1, \dots, n-2, \frac{\kappa}{n-1}\right)$ and $R$ is uniformly sampled from $\SO(n)$ using \cite[Sec. 2.1.1]{o2007uniform}. When the given $\kappa$ satisfies $\kappa\ge (n-1)(n-2)$, the condition number at global minimum is the given $\kappa$.

The results are presented in Fig. \ref{fig_EV_compare} and \ref{fig_EV_seperate}. In all experiments, we set $n=10$, and the the computation are done on a MacBook Pro (M1 chip, 8GB memory).

\begin{figure}
    \centering
    \subfigure[Numerical estimation for $1-c$ under different condition numbers $\kappa:=\frac{L}{\mu}$ for Heavy-Ball and NAG-SC, initialized close to the global minimum. The dashed curves are fitted value using our theoretical result, i.e., for Heavy-Ball, it is fitted by $1-\cHB\approx C\kappa^{-1}$ for some $C$ fit by linear regression, and for NAG-SC, it is $1-\cNAGSC\approx C\sqrt{\kappa}^{-1}$.]{\includegraphics[width=0.48\textwidth]{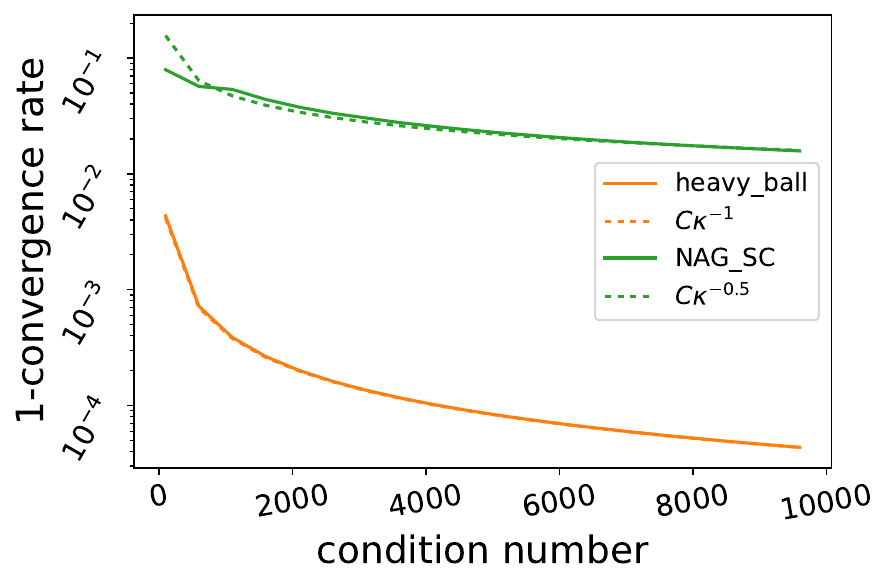}\label{fig_EV_compare_local}} 
    \hspace*{0.02\textwidth}
    \subfigure[Global convergence on non-convex potential. The initial condition is chosen closed to the global maximum, and we plot the value of the potential function along the trajectory. The horizontal tail means the algorithm converges to the machine precision.]{\includegraphics[width=0.48\textwidth]{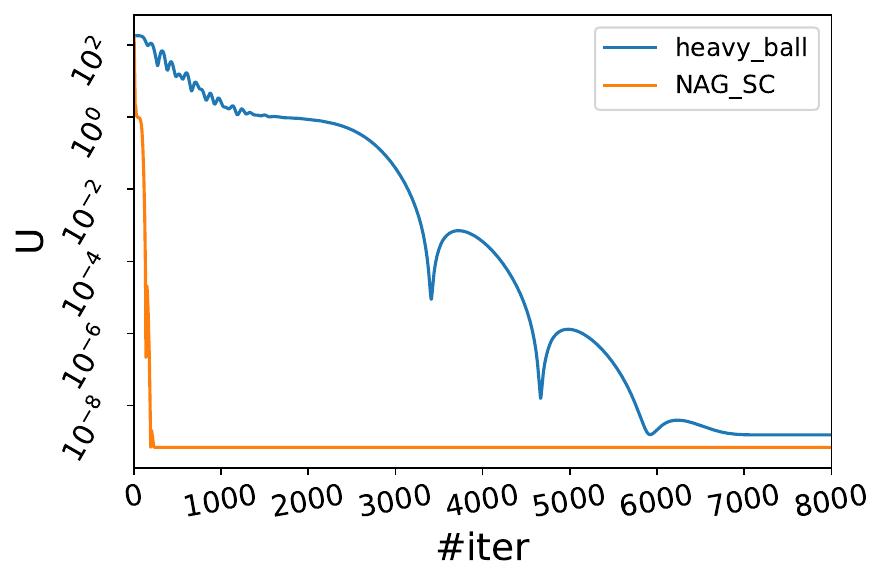}\label{fig_EV_compare_global}} 
    \caption{Fig. \ref{fig_EV_compare_local} shows that 1) Lie NAG-SC converges much faster than Lie Heavy-Ball on ill-conditioned problems; 2) The fitted dashed curve and the experimental results aligns well, showing our theoretical analysis of the convergence rate $\cHB$ and $\cNAGSC$ is correct. Fig. \ref{fig_EV_compare_global} shows the performance of our algorithms on non-convex problem experimentally. In this specific experiment, Lie NAG-SC outperforms Lie Heavy-Ball and finds the global minimum successfully without being trapped in local minimums. However, we are not sure which is better in a general optimization. One possible reason for the good performance on NAG-SC is it uses larger learning rate and is better for jumping out of the local minimums. The values of Lyapunov function along the trajectory is not provided since it is not global defined.
    \label{fig_EV_compare}
    }  
\end{figure}

\begin{figure}
    \centering
    \subfigure[(momentumless) GD]{\includegraphics[width=0.32\textwidth]{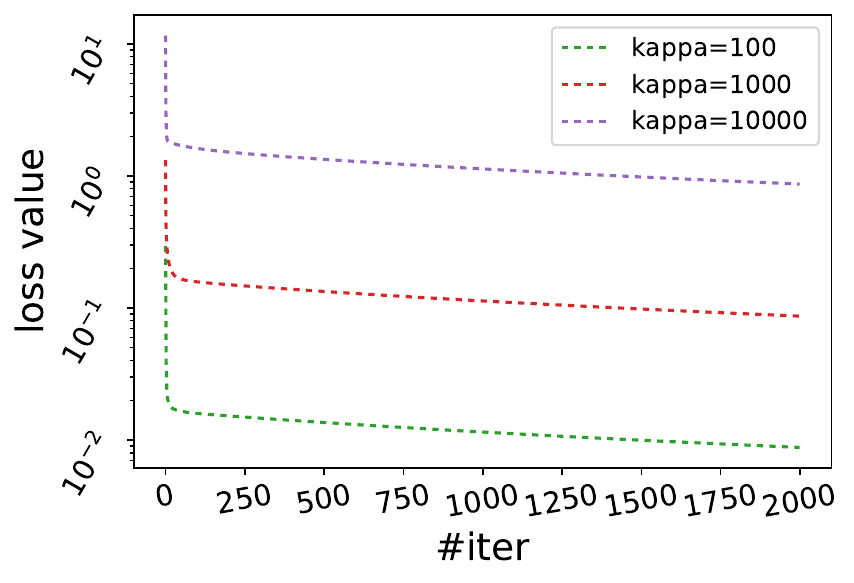}\label{fig_EV_a}}
    \subfigure[Heavy-Ball]{\includegraphics[width=0.32\textwidth]{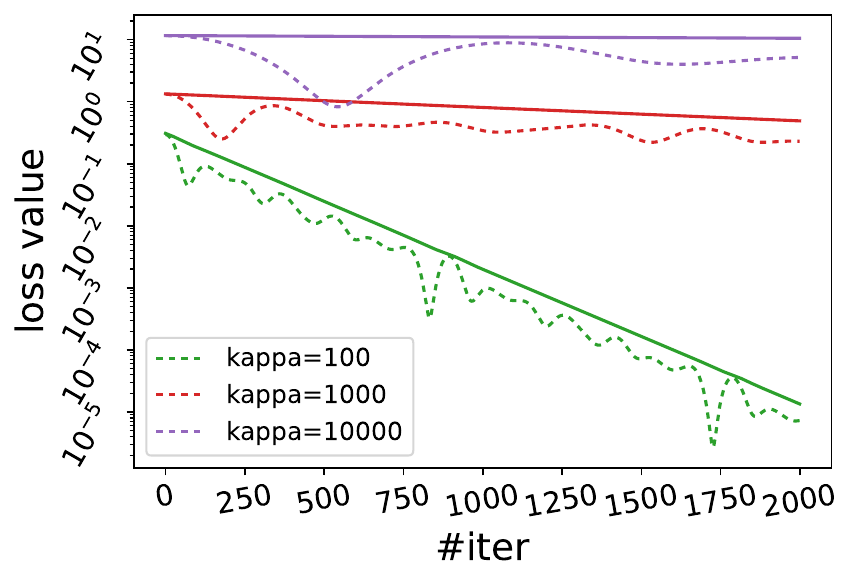}\label{fig_EV_b}}
    \subfigure[NAG-SC]{\includegraphics[width=0.32\textwidth]{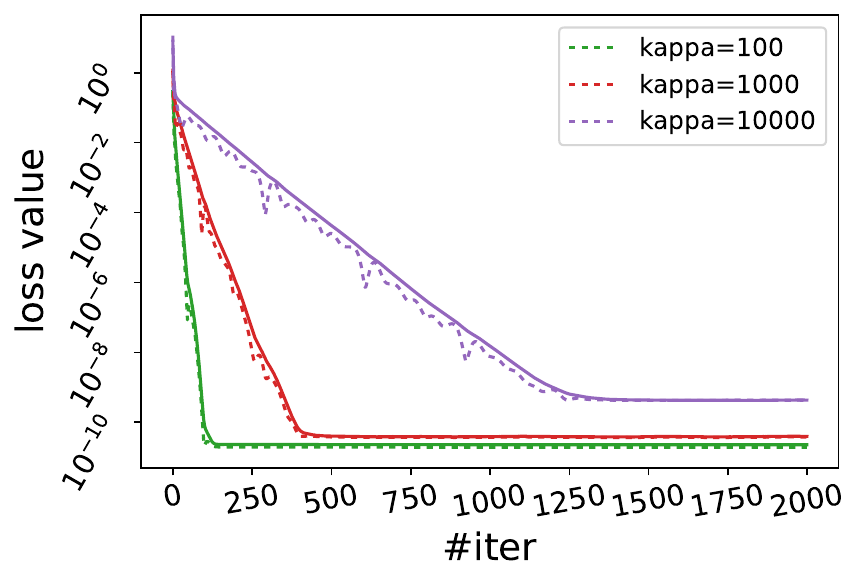}\label{fig_EV_c}}
    
    \caption{Local convergence of Lie Heavy-Ball and Lie NAG-SC on eigenvalue decomposition problem with different condition number. The initialization is close to the global minimum. The dashed curves are the value of potential function along the trajectory and the solid curves are the values of the corresponding Lyapunov functions. Lie GD (Eq. \ref{eqn_GD}) has $h$ been chosen as $1/L$ \citep[Thm. 15]{zhang2016first}.  We observe: 1. Lie NAG-SC converges much faster than Lie Heavy-Ball, especially on ill-conditioned problems. 2. Although the potential function is not monotonely decreasing, the Lyapunov is.
    \label{fig_EV_seperate}
    }
\end{figure}

\newpage

\bibliography{ref}
\bibliographystyle{plainnat}
\appendix
\section{Properties of Lie groups and functions on Lie groups}
\subsection{More details about compact Lie groups with left-invariant metric}
\label{sec_details_lie_group_inner_prod}
Comparing with the Euclidean space, Lie groups lack of commutativity, i.e., for $g, \hat{g}\in \G$, $g\hat{g}$ and $\hat{g}g$ are not necessarily equal. This can also be characterized by the non-trivial Lie bracket $\lb{\cdot}{\cdot}$. This non-commutativity leads to the fact that $\exp(X)\exp(Y) \neq \exp(X+Y)$. An explicit expression for $\log(\exp(X)\exp(Y))$ is given by Dynkin's formula \citep{dynkin2000calculation}. Utilizing Dynkin's formula, we quantify $\d\log$ in the following.

\begin{corollary}[Differential of logarithm]
    \label{cor_d_exp}
    If $\d\log g$ is well defined, then the differential of logarithm on $\G$ is given by
    \begin{equation}
        \label{eqn_d_exp}
        \d_\xi \log g=\lte{g}\left[p(\ad_{\log g})\xi\right], \quad \forall \xi\in \g
    \end{equation}
    where the power series $p$ is defined as
    \begin{equation}
    \label{eqn_p}
        p(x):=\frac{x}{1-\exp(-x)}
    \end{equation}
\end{corollary}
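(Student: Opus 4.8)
The plan is to read off the formula from the first-order Baker--Campbell--Hausdorff expansion, which is exactly what Dynkin's formula encodes, and then recognize the resulting operator-valued power series in closed form. Write $X:=\log g\in\g$ and fix $\xi\in\g$. Since $g=\exp(X)$, apply Dynkin's formula \citep{dynkin2000calculation} to $\log\!\big(\exp(X)\exp(t\xi)\big)$ and discard all terms of order $t^2$ and higher: all surviving terms are linear in $t$ and the Lie-series coefficients in the second slot assemble into the Bernoulli generating function, giving
\begin{equation*}
    \log\!\big(\exp(X)\exp(t\xi)\big)=X+t\sum_{n\ge 0}\frac{(-1)^n B_n}{n!}\,(\ad_X)^n\,\xi+O(t^2),
\end{equation*}
with $B_n$ the Bernoulli numbers. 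Differentiating in $t$ at $t=0$ along the curve $t\mapsto g\exp(t\xi)$ (whose velocity at $0$ is $\lte{g}\xi$) then identifies $\d\log$ at $g$ with the action of this series; the left-trivialization bracket in \eqref{eqn_d_exp} just records that the tangent vector is expressed through $\lte{g}$.

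It then remains to check that the series is $p(\ad_{\log g})$ for $p$ as in \eqref{eqn_p}. This is a one-line generating-function identity: $\sum_{n\ge 0}\frac{(-1)^n B_n}{n!}z^n=\frac{z}{1-e^{-z}}=p(z)$, equivalently $p$ is the formal inverse of $\frac{1-e^{-z}}{z}=\sum_{n\ge0}\frac{(-1)^n}{(n+1)!}z^n$, the latter being precisely the series appearing in the classical differential-of-exponential formula $\d\exp_X=\lte{\exp X}\circ\frac{\operatorname{id}-e^{-\ad_X}}{\ad_X}$. Hence the statement is nothing but the inverse of that formula, and its hypothesis is self-consistent: $\d\log g$ is well defined exactly when $\d\exp_X$ is invertible, i.e.\ when $\frac{1-e^{-z}}{z}$ does not vanish on the spectrum of $\ad_{\log g}$, i.e.\ when $\ad_{\log g}$ has no eigenvalue in $2\pi i\mathbb{Z}\setminus\{0\}$ --- which is also exactly the condition under which $p(\ad_{\log g})$ makes sense. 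Under the $\ad$-skew-adjoint metric of Lemma \ref{lemma_ad_skew_adjoint}, $\ad_{\log g}$ is skew-symmetric with purely imaginary eigenvalues, so this holds whenever $\norm{\log g}<2\pi/A$ with $A=\max_{\norm{Y}=1}\norm{\ad_Y}_{op}$, matching the regime used in Thm.~\ref{thm_convergence_rate_NAG_SC}.

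The main obstacle is rigor in the series manipulation: Dynkin's series converges only for small arguments, so the computation above is \emph{a priori} valid only near $e$. To upgrade it to all $g$ at which $\d\log g$ exists, I would argue that both sides of \eqref{eqn_d_exp} are real-analytic in $g$ on the open set where $\log$ is smooth, so an identity valid on a neighbourhood of $e$ propagates by analytic continuation; alternatively, and more cleanly, I would bypass Dynkin's formula and prove the differential-of-exponential identity directly by the Duhamel argument (differentiate $s\mapsto\exp(-sX)\,\partial_\epsilon\big|_{\epsilon=0}\exp\!\big(s(X+\epsilon\xi)\big)$ in $s$, solve the resulting linear ODE in $\g$, and integrate from $0$ to $1$), which carries no smallness restriction, and then simply invert. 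The only other point requiring care is pinning down the exact normalization of $p$ (the $n=1$ Bernoulli term, where $B_1=-\tfrac12$ flips sign), which the generating-function check above settles.
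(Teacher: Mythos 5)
Your proof is correct and reconstructs exactly the route the paper indicates: Cor.\ \ref{cor_d_exp} is stated immediately after the text invokes Dynkin's formula but carries no written proof, and your first-order expansion, the Bernoulli generating-function identification $\sum_{n\ge 0}\tfrac{(-1)^n B_n}{n!}z^n = \tfrac{z}{1-e^{-z}}$, and the Duhamel/analytic-continuation argument to remove the small-neighborhood restriction are the natural filling-in. Your reading of the $\lte{g}$ factor as bookkeeping is also the right one --- later uses (Cor.\ \ref{cor_dlog_log}--\ref{cor_norm_d_log_id}) pair $\d_\xi\log g$ against elements of $\g$, so the $\g$-valued quantity $p(\ad_{\log g})\xi$ that your derivation produces is the object actually meant.
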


The vanishment of  $\ad^*_\xi \xi$ can also be understood as the group structure and the Riemannian structure are compatible. See \cite{kong2024convergence} for more discussion. Under such assumption, we have the following properties:
\begin{corollary}
    \label{cor_dlog_log}
    Suppose we have $\ad_X$ is skew-adjoint $\forall X\in \g$. Then for any $g\in \G$ and any $\xi \in \mathfrak{g}$ such that $\d_{\xi} \log g$ is well-defined, we have
    \begin{align*}
        \ip{\d_\xi \log g}{\log g}&=\ip{\log g}{\xi}
    \end{align*}
\end{corollary}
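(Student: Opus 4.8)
# Proof Proposal for Corollary~\ref{cor_dlog_log}

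The plan is to combine the explicit formula for $\d\log$ from Corollary~\ref{cor_d_exp} with the skew-adjointness of $\ad$ to reduce the claimed identity to a statement about the power series $p(x) = x/(1-e^{-x})$ acting on a specific operator. First I would substitute $\d_\xi\log g = \lte{g}[p(\ad_{\log g})\xi]$ into the left-hand side $\ip{\d_\xi\log g}{\log g}$. Since the Riemannian metric on $T_g\G$ is the left-invariant one, pulling back by $\lt{g}$ gives
\begin{equation*}
    \ip{\d_\xi\log g}{\log g} = \ip{p(\ad_{\log g})\xi}{\lt{g}\log g},
\end{equation*}
and here $\lt{g}\log g = \log g$ when we view $\log g \in \g$ as a left-trivialized tangent vector (this is the standard identification $\log g \in T_e\G = \g$, consistent with how $\d_\xi\log g$ lands in $T_g\G$ after applying $\lte{g}$). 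So the identity to prove becomes the purely Lie-algebraic statement
\begin{equation*}
    \ip{p(\ad_{X})\xi}{X} = \ip{\xi}{X}, \qquad X := \log g.
\end{equation*}

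The key step is then to observe that $\ad_X X = [X,X] = 0$ by the alternating property of the Lie bracket. Write $p(x) = 1 + \frac{x}{2} + \sum_{k\ge 2} c_k x^k$ (the power series expansion of $x/(1-e^{-x})$, whose constant term is $1$). Applying the operator $p(\ad_X)$ to $\xi$ and pairing with $X$, I would move the pairing inside the series. For the constant term we get $\ip{\xi}{X}$, which is exactly the right-hand side. For every higher-order term $c_k \ip{\ad_X^k\xi}{X}$ with $k\ge 1$, I would use skew-adjointness of $\ad_X$ to push one copy of $\ad_X$ onto the second argument: $\ip{\ad_X^k\xi}{X} = -\ip{\ad_X^{k-1}\xi}{\ad_X X} = -\ip{\ad_X^{k-1}\xi}{0} = 0$. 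Hence all terms except the constant one vanish, giving the claim. To make this rigorous one should either note that the series for $p(\ad_X)$ converges in operator norm on the finite-dimensional $\g$ (which is what makes $\d\log g$ well-defined in the first place, per the hypothesis), so that term-by-term manipulation of the pairing is justified, or equivalently argue directly that $p(\ad_X)\xi - \xi$ lies in the image of $\ad_X$ and that $\ip{\ad_X(\cdot)}{X} = 0$ by skew-adjointness.

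The main obstacle, such as it is, is bookkeeping rather than conceptual: one must be careful that the identification of $\log g$ with an element of $\g$ is the same on both sides of the pairing (the argument of $p(\ad_{\log g})$, the second slot of the inner product, and the first slot after the left-translation cancels), and that the convergence of the matrix power series $p(\ad_{\log g})$ is covered by the standing assumption that $\d_\xi\log g$ is well-defined (this amounts to $\ad_{\log g}$ having no eigenvalue in $2\pi i\mathbb{Z}\setminus\{0\}$, so $p$ is analytic on its spectrum). Once those points are in place, the vanishing $\ad_X X = 0$ together with Lemma~\ref{lemma_ad_skew_adjoint}'s skew-adjointness does all the work. I would also remark that this corollary is the Lie-group analogue of the trivial Euclidean fact that $\d_\xi(\|x\|\,\hat{x})$-type derivatives behave well along the radial direction, and it is precisely the ingredient needed later to handle terms like $\ip{\d\log g_*^{-1}g}{\log g_*^{-1}g}$ in the Lyapunov analyses.
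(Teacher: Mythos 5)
Your argument is correct and is the intended derivation from Corollary~\ref{cor_d_exp}: expand $p(\ad_{\log g})$, note the constant term of $p$ is $1$, and kill every higher-order term $\ip{\ad_X^k\xi}{X}$ for $k\ge 1$ by moving one $\ad_X$ across the skew-adjoint pairing to hit $\ad_X X=[X,X]=0$. The paper states this corollary without a written proof, so there is nothing substantive to compare against, but this is exactly what the surrounding machinery (Cor.~\ref{cor_d_exp}, Lemma~\ref{lemma_ad_skew_adjoint}) is set up to deliver. Two small remarks on bookkeeping: first, the factorization $p(z)-1 = z\,q(z)$ with $q$ analytic wherever $p$ is gives the cleanest rigorous version (it avoids worrying about the radius of convergence of the power series, which is only $2\pi$, and works directly under the well-definedness hypothesis via holomorphic functional calculus); second, the step where you write $\lt{g}\log g=\log g$ is slightly confused notation, since $\log g$ already lives in $\g=T_e\G$ rather than $T_g\G$ --- the cleaner reading is that $\d_\xi\log g$ itself should be taken as the $\g$-valued quantity $p(\ad_{\log g})\xi$ (the $\lte{g}$ in Cor.~\ref{cor_d_exp} effectively cancels against the left-invariant metric), after which both slots of the inner product are in $\g$ and no identification is needed. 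Neither remark affects correctness.
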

\begin{corollary}
    \label{cor_dlog_xi}
    When $\d_{\xi} \log g$ is well-defined and $\ad_X$ is skew-adjoint $\forall X\in \g$, we have 
    \begin{equation*}
        \ip{\d_{\xi} \log g}{\xi}\le \norm{\xi}^2
    \end{equation*}
\end{corollary}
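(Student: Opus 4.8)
The plan is to reduce the inequality to a scalar spectral estimate for the function $p$ on the imaginary axis. In view of Corollary \ref{cor_d_exp} and left-invariance of the metric, $\ip{\d_\xi \log g}{\xi}=\ip{p(\ad_{\log g})\xi}{\xi}$, so writing $T:=\ad_{\log g}$ (skew-adjoint by hypothesis) it suffices to show $\ip{p(T)\xi}{\xi}\le\norm{\xi}^2$. The first step is to notice that only the self-adjoint part of $p(T)$ contributes to the quadratic form: since $p$ has real coefficients and $T^{*}=-T$, one has $p(T)^{*}=p(-T)$, hence $\ip{p(T)\xi}{\xi}=\ip{q(T)\xi}{\xi}$ with $q(x):=\tfrac12\bigl(p(x)+p(-x)\bigr)$ the even part of $p$; the odd part $p-q$ yields a skew-adjoint operator and drops out. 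A one-line computation gives $q(x)=\tfrac{x}{2}\coth(x/2)$, equivalently $q(i\theta)=\tfrac{\theta}{2}\cot(\theta/2)$. (The same even/odd splitting, restricted to the $T$-invariant line spanned by $\log g\in\ker T$, also recovers Corollary \ref{cor_dlog_log}, which gives some confidence the bookkeeping is right.)

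The second step is a spectral decomposition. Since $T$ is skew-adjoint on the finite-dimensional real inner product space $\g$, there is an orthonormal basis in which $T$ splits into $2\times2$ rotation generators with angles $\theta_j$ together with a kernel block; accordingly $q(T)$ acts as $q(0)=1$ on the kernel and as the scalar $\tfrac{\theta_j}{2}\cot(\theta_j/2)$ on the $j$-th block. Well-definedness of $\d_\xi\log g$ makes $p$, hence $q$, regular on $\operatorname{spec}(T)$, which rules out the poles at $2\pi i\mathbb{Z}_{\neq0}$; combined with the fact that $\log$ is the principal logarithm — whose image has $\ad$-spectrum of modulus at most $2\pi$ — this forces $|\theta_j|<2\pi$, i.e. $u_j:=\theta_j/2\in(0,\pi)$. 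The estimate then collapses to the elementary inequality $u\cot u\le 1$ on $(0,\pi)$, which holds because $\tan u\ge u$ on $[0,\pi/2)$ and $u\cot u\le 0<1$ on $[\pi/2,\pi)$. Hence every eigenvalue of the symmetric operator $q(T)$ is $\le 1$, i.e. $q(T)\preceq I$, and therefore $\ip{p(T)\xi}{\xi}=\ip{q(T)\xi}{\xi}\le\norm{\xi}^2$.

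I expect the only genuine subtlety to be pinning down the hypothesis precisely enough: mere regularity of $p$ on $\operatorname{spec}(T)$ only excludes the poles $2\pi i n$, not arguments in, say, $(2\pi,3\pi)$, where $u\cot u$ can exceed $1$; one really does need $\norm{\ad_{\log g}}_{op}<2\pi$. This is exactly what the principal-branch convention for $\log$ supplies, and it is consistent with the standing geometric hypotheses used elsewhere (e.g. the $a<2\pi$ bound with $A:=\max_{\norm{X}=1}\norm{\ad_X}_{op}$, which gives $\norm{\ad_{\log g}}_{op}\le A\,d(e,g)<2\pi$ on the relevant neighbourhood). So I would make that identification explicit at the outset and then run the short chain above: Corollary \ref{cor_d_exp} $\to$ pass to the self-adjoint part $\to$ block-diagonalize the skew operator $\to$ invoke $\tan u\ge u$.
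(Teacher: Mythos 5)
Your proof is correct, and it is the natural argument; the paper actually states Corollary \ref{cor_dlog_xi} without proof, so there is nothing to diverge from. Your reduction is exactly in the spirit of the neighboring results: Corollary \ref{cor_d_exp} plus left-invariance gives $\ip{\d_\xi\log g}{\xi}=\ip{p(\ad_{\log g})\xi}{\xi}$, the odd part of $p$ (which is just $x/2$, since $p(x)=\tfrac{x}{2}\coth(x/2)+\tfrac{x}{2}$) produces a skew-adjoint operator whose quadratic form vanishes, and the even part reduces on the $2\times2$ rotation blocks of the skew-adjoint $\ad_{\log g}$ to the scalar bound $u\cot u\le 1$ for $u\in[0,\pi)$. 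The evaluation of $p$ on the imaginary axis is precisely what the paper itself does in Corollary \ref{cor_norm_d_log_id}, so your route is the one the authors evidently intend.

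The one substantive point you raise is real and worth keeping: as stated, the hypothesis ``$\d_\xi\log g$ is well-defined'' does not by itself force the spectrum of $\ad_{\log g}$ into $(-2\pi i,2\pi i)$ — it only excludes the exact poles $2\pi i\mathbb{Z}_{\neq 0}$ — and $u\cot u$ exceeds $1$ just past $u=\pi$. The corollary therefore implicitly relies on the standing localization used throughout the paper ($d(g,e)\le a/A$ with $a<2\pi$ and $A=\max_{\norm{X}=1}\norm{\ad_X}_{op}$, whence $\norm{\ad_{\log g}}_{op}\le A\,d(g,e)<2\pi$), and your proof is right to make that identification explicit rather than treat it as automatic.
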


\begin{corollary}
    \label{cor_norm_d_log_id}
    Define
    \begin{equation}
    \label{eqn_A}
        A:=\max_{\norm{X}=1}\norm{\ad_X}_{op}
    \end{equation}
    When $d(g, e)\le \frac{a}{A}$ for some $a\in (0,2\pi)$, we have 
    \begin{equation*}
        \norm{\d \log g-Id}\le q(a)
    \end{equation*}
    where $q$ is defined by
    \begin{align}
        q(x)&:= \norm{p(x i)-1} \nonumber\\
        &=\norm{\frac{x i}{1-\cos x+ i \sin x}-1}\label{eqn_q}\\
        &=\sqrt{\frac{1-\frac{x^2}{2}-\cos x-x\sin x}{1-\cos x}}\nonumber
    \end{align}
    with $p$ defined in Eq. \eqref{eqn_p}.
\end{corollary}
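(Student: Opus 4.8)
The plan is to convert the bound into a one-variable estimate on the spectrum of $\ad_{\log g}$, exploiting that we work under the running choice of the metric from Lemma~\ref{lemma_ad_skew_adjoint}, so that $\ad_X$ is skew-adjoint for every $X\in\g$. First I would apply Corollary~\ref{cor_d_exp} to identify $\d\log g$, after left-trivializing both source and target, with the linear operator $p(\ad_{\log g})$ on $\g$, where $p$ is the power series in \eqref{eqn_p} (whose radius of convergence is $2\pi$); thus $\norm{\d\log g-Id}$ equals the operator norm of $p(\ad_{\log g})-Id$. If $g=e$ then $\log g=0$ and this is $0\le q(a)$, so assume $\log g\neq 0$. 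Under the hypothesis that $\d\log g$ is well defined, $\log g$ is the initial velocity of a minimizing geodesic from $e$ to $g$, so $\norm{\log g}=d(g,e)\le a/A$; together with the homogeneity $\ad_{\log g}=\norm{\log g}\,\ad_{\log g/\norm{\log g}}$ and the definition \eqref{eqn_A} of $A$ this gives $\norm{\ad_{\log g}}_{op}\le A\,\norm{\log g}\le a<2\pi$.

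Since $\ad_{\log g}$ is real and skew-adjoint, it is normal with purely imaginary spectrum, say eigenvalues $\{\pm i\theta_j\}$ with $0\le\theta_j\le a<2\pi$; in particular the spectrum stays inside the disc of convergence of $p$ and avoids its poles $2\pi i(\mathbb{Z}\setminus\{0\})$, so $p(\ad_{\log g})$ is well defined, and $p(\ad_{\log g})-Id$ is again normal (a norm-limit of commuting polynomials in a normal operator, minus the identity). Hence its operator norm equals its spectral radius, namely $\max_j\abs{p(i\theta_j)-1}$, where I used $\abs{p(-i\theta)-1}=\abs{p(i\theta)-1}$ (valid since $p$ has real Taylor coefficients, so $p(\bar z)=\overline{p(z)}$). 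Consequently $\norm{\d\log g-Id}\le\max_{\theta\in[0,a]}\abs{p(i\theta)-1}$, and it remains to show that this maximum equals $q(a)$.

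For the last step I would show $\theta\mapsto\abs{p(i\theta)-1}$ is nondecreasing on $[0,2\pi)$, so the maximum is attained at $\theta=a$. Writing $p(i\theta)=\frac{\theta}{2}\cot\frac{\theta}{2}+i\frac{\theta}{2}$ gives $\abs{p(i\theta)-1}^2=\big(\frac{\theta}{2}\cot\frac{\theta}{2}-1\big)^2+\frac{\theta^2}{4}$; with $s=\theta/2\in(0,\pi)$ the map $s\mapsto s\cot s$ has derivative $\big(\frac{1}{2}\sin 2s-s\big)/\sin^2 s\le 0$ and tends to $1$ as $s\to 0^+$, hence is nonincreasing and $\le 1$ on $(0,\pi)$, so $(s\cot s-1)^2$ is the square of a nonpositive nonincreasing function and is therefore nondecreasing in $s$, while $s^2$ is obviously nondecreasing; adding, $\abs{p(i\theta)-1}$ is nondecreasing on $(0,2\pi)$. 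Evaluating at $\theta=a$ and simplifying $\abs{p(ia)-1}^2$ with half-angle identities yields the closed form $q(a)$ in \eqref{eqn_q}, which completes the proof. The one genuinely nontrivial ingredient is this monotonicity of $\abs{p(i\theta)-1}$, which rests on the elementary fact that $s\cot s$ is nonincreasing on $(0,\pi)$ (equivalently, $\abs{p(it)-1}$ has no interior critical point on $(0,2\pi)$); everything else is bookkeeping around the functional calculus of a normal operator. A secondary point to handle carefully is tracking the left-trivializations on both sides of Corollary~\ref{cor_d_exp}, so that "$\d\log g-Id$" is literally an operator on $\g$ whose operator norm is the quantity the statement intends.
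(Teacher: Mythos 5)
Your proof is correct, and it is the natural one; the paper itself does not print a proof of this corollary, so there is nothing to compare against directly. The chain you use — left-trivialize via Corollary~\ref{cor_d_exp} so that $\d\log g$ acts on $\g$ as $p(\ad_{\log g})$; use $\norm{\ad_{\log g}}_{op}\le A\norm{\log g}=A\,d(g,e)\le a<2\pi$ to put the spectrum of the skew-adjoint, hence normal, operator $\ad_{\log g}$ on the imaginary segment $\{\pm i\theta: 0\le\theta\le a\}$, inside the disc of convergence of $p$; invoke the spectral theorem to identify $\norm{p(\ad_{\log g})-Id}_{op}$ with $\max_j\abs{p(i\theta_j)-1}$; and then verify that $\theta\mapsto\abs{p(i\theta)-1}$ is nondecreasing on $[0,2\pi)$ by writing $p(i\theta)=\tfrac{\theta}{2}\cot\tfrac{\theta}{2}+i\tfrac{\theta}{2}$ and checking $s\mapsto s\cot s$ is nonincreasing and $\le 1$ on $(0,\pi)$ — is all sound and yields exactly $\norm{\d\log g-Id}\le\abs{p(ia)-1}=q(a)$.

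One small caution about the last sentence of your write-up: if you actually carry out the half-angle simplification, you will find
\begin{equation*}
\abs{p(ix)-1}^2=\frac{(1-\cos x)^2+(x-\sin x)^2}{2(1-\cos x)}=\frac{1-\cos x+\tfrac{x^2}{2}-x\sin x}{1-\cos x},
\end{equation*}
which differs from the last displayed line of \eqref{eqn_q} in the paper by the sign of the $\tfrac{x^2}{2}$ term (the paper's expression is negative already at $x=\pi$, so it cannot equal a squared modulus). This is a typo in the paper's expanded formula, not in your argument: you bound by $q(a)$ through its primary definition $q(x):=\abs{p(xi)-1}$, which is what the corollary actually asserts, so your proof stands. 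Your noted care about tracking the left-trivializations so that $\d\log g-Id$ is genuinely an operator on $\g$ is also well placed, since the statement of Corollary~\ref{cor_d_exp} as printed has an extraneous $\lte{g}$ that one must read through.
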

\begin{remark}[About existence and uniqueness of $\log$]
\label{rmk_unique_geodesic}
    As the inverse of $\exp$, the operator $\log$ may not be uniquely defined globally. However, we are always considering in a unique geodesic subset of the Lie group, where $\log$ is defined uniquely in such subset of Lie group. Similarly, even if we do not have globally geodesically strongly convex functions, we only require locally strong convexity.
\end{remark}

\subsection{More details about functions on Lie groups}

The commonly used geodesic $L$-smooth on a manifold $M$ is given by the following definition \citep[e.g., ][Def. 5]{zhang2016first}:
\begin{definition}[Geodesically $L$-smooth]
\label{def_geodesic_L_smooth}
$U : \G \to\mathbb{R}$ is geodesically $L$-smooth if for any $g, \hat \in M$,
\begin{align*}
    \norm{\nabla U(g)-\Gamma_{\hat{g}}^g \nabla U(\hat{g})}\le L d(g, \hat{g})
\end{align*}
where $\Gamma_{\hat{g}}^g$ is the parallel transport from $\hat{g}$ to $g$.
\end{definition}
\begin{lemma}
\label{lemma_L_smooth_identical}
    Under the assumption of $\ad^*_X$ is skew-adjoint $\forall X\in\g$, Def. \ref{def_L_smooth} is identical to Def. \ref{def_geodesic_L_smooth}.
\end{lemma}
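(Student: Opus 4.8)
The plan is to recast both $L$‑smoothness conditions as Lipschitz‑type statements about the single $\g$‑valued map $g\mapsto\lt g\nabla U(g)$, exploiting that the inner product provided by Lemma~\ref{lemma_ad_skew_adjoint} makes the left‑invariant metric \emph{bi}-invariant. So the first step is to collect the structural consequences of bi‑invariance: (i) the Riemannian exponential at $e$ coincides with the group exponential, so a minimizing geodesic from $\hat g$ to $g$ is of the form $t\mapsto\hat g\exp(tX)$ with $\exp X=\hat g^{-1}g$ and $\norm{X}=d(g,\hat g)$; (ii) the Koszul formula, simplified by $\ad^*_\eta=-\ad_\eta$, gives the Levi--Civita connection $\nabla_\eta\zeta=\tfrac12\ad_\eta\zeta$ on left‑invariant fields; (iii) $e^{s\ad_Z}$ is an orthogonal operator of $(\g,\ip{\cdot}{\cdot})$ for every $Z\in\g$ and $s\in\mathbb R$, since $\ad_Z$ is skew‑adjoint; and (iv) $\lte g$ is a linear isometry $\g\to T_g\G$, so $\norm{\eta}_{T_g\G}=\norm{\lt g\eta}_\g$ (hence $\norm{\nabla U(g)}=\norm{\lt g\nabla U(g)}$) and $d$ is both left‑ and right‑invariant.

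Second, I would compute the parallel transport in Definition~\ref{def_geodesic_L_smooth} explicitly. Writing a vector field along $\gamma(t)=\hat g\exp(tX)$ as $V(t)=\lte{\gamma(t)}v(t)$ with $v(t)\in\g$, fact (ii) gives the covariant‑derivative formula $\lt{\gamma(t)}\tfrac{DV}{dt}=\dot v(t)+\tfrac12\ad_X v(t)$; hence $V$ is parallel iff $\dot v=-\tfrac12\ad_X v$, i.e.\ $v(1)=e^{-\frac12\ad_X}v(0)$. Equivalently, $\lt g\circ\Gamma_{\hat g}^g=e^{-\frac12\ad_X}\circ\lt{\hat g}$ as maps $T_{\hat g}\G\to\g$. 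Combining with (iv), Definition~\ref{def_geodesic_L_smooth} asks that, for all $g,\hat g$,
\[
\norm{\lt g\nabla U(g)-e^{-\frac12\ad_X}\lt{\hat g}\nabla U(\hat g)}\le L\,d(g,\hat g),
\]
whereas Definition~\ref{def_L_smooth} asks that $\norm{\lt g\nabla U(g)-\lt{\hat g}\nabla U(\hat g)}\le L\,d(g,\hat g)$ for all $g,\hat g$. So the two notions differ only through the orthogonal ``twist'' $e^{-\frac12\ad_X}$, which collapses to the identity exactly in the Euclidean (abelian) case, where the equivalence is immediate.

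The last step, and the one I expect to be the genuine obstacle, is to show this orthogonal twist does not affect the property. A symmetric route is to split the geodesic at its midpoint $m=\hat g\exp(X/2)=g\exp(-X/2)$ and transport from both endpoints to $m$; since parallel transport is an isometry, Definition~\ref{def_geodesic_L_smooth} becomes $\norm{e^{\frac14\ad_X}\lt g\nabla U(g)-e^{-\frac14\ad_X}\lt{\hat g}\nabla U(\hat g)}\le L\,d(g,\hat g)$ for all $g,\hat g$. One then argues that the orthogonal factors $e^{\pm\frac14\ad_X}$ --- which fix the geodesic direction $X$, depend on the pair only through $X$, and tend to $\mathrm{Id}$ as $d\to0$ --- leave the supremum of the left‑hand side over pairs at a fixed distance unchanged: precomposing with left translation by $\exp(sX)$ slides a pair along the same geodesic, and quantifying the discrepancy by $\norm{e^{\pm\frac12\ad_X}-\mathrm{Id}}_{op}$ (controlled as in Corollary~\ref{cor_norm_d_log_id}), together with the infinitesimal identity $\tfrac{d}{dt}\big|_{t=0}\Gamma_{\gamma(t)}^{\gamma(0)}\nabla U(\gamma(t))=\Hess_{\gamma(0)}U(\dot\gamma(0))$ and its left‑invariant‑frame expression for $U\in\mathcal{C}^2$, pins the two Lipschitz constants to the same value. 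Running the argument in both directions then gives the claimed equivalence; apart from controlling this curvature‑induced twist --- exactly the phenomenon absent in Euclidean space --- everything reduces to routine bookkeeping with the bi‑invariant structure.
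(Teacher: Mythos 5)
Your computation of the parallel transport is correct, and in fact it exposes a real problem with the paper's own proof of this lemma. The paper claims that on the bi-invariant group the left-invariant field $T_e\mathsf{L}_{\phi(t)}\xi$ of a \emph{constant} $\xi\in\g$ is parallel along the geodesic $\phi$, citing the reduced parallel-transport equation; but that equation reads $\dot\xi(t)=-\tfrac12\left[\omega(t),\xi(t)\right]$ with $\omega$ the (constant) left-trivialized geodesic velocity $X$, so a constant $\xi$ is parallel only if $[X,\xi]=0$, which does \emph{not} follow from $\ad$ being skew-adjoint. Your formula $\lt g\circ\Gamma_{\hat g}^g=e^{-\frac12\ad_X}\circ\lt{\hat g}$ is the right statement, and the orthogonal twist $e^{-\frac12\ad_X}$ is precisely what the paper's argument silently drops. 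So your route is the honest one, and the paper's proof, as written, does not establish the claim.

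That said, the final step of your argument --- showing the twist does not change the Lipschitz constant --- is where the gap remains, and you acknowledge it. Unfortunately I do not believe that step can be closed in the strong form the lemma asserts (same $L$). Writing $F(g):=\lt g\nabla U(g)$, a short BCH computation in a left-invariant frame gives $\tfrac{d}{dt}\big|_{t=0}F\!\left(g\exp(tX)\right)=\left(\Hess U(g)+\tfrac12\ad_{F(g)}\right)X$, while $\tfrac{d}{dt}\big|_{t=0}\lt g\Gamma_{\gamma(t)}^g\nabla U(\gamma(t))=\Hess U(g)X$. Thus the local rate for Def.~\ref{def_L_smooth} is $\norm{\Hess U(g)+\tfrac12\ad_{F(g)}}_{op}$, whereas that for Def.~\ref{def_geodesic_L_smooth} is $\norm{\Hess U(g)}_{op}$. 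For symmetric $H$ and skew $A$ one always has $\norm{H+A}_{op}\ge\norm{H}_{op}$ (test on a top eigenvector of $H$ and use $\ip{v}{Av}=0$), and the inequality is generically strict. So Def.~\ref{def_L_smooth} $\Rightarrow$ Def.~\ref{def_geodesic_L_smooth} with the same $L$, but the converse can fail unless $\ad_{\nabla U}\equiv 0$; generically one only gets equivalence up to an additive $O\!\left(A\sup\norm{\nabla U}\right)$ correction to $L$. In short: your skepticism about the last step is warranted --- the twist genuinely enters the constant --- and the paper's ``proof'' only appears to avoid this by the erroneous parallelism claim above.
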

\begin{proof}[Proof of Lemma \ref{lemma_L_smooth_identical}]
    For any $g, \hat{g}\in\G$, consider the shortest geodesic $\phi:[0,1]\to \G$ connecting $g$ and $\hat{g}$ and denote $\xi=T_{g}L_{g^{-1}}\nabla U(g)$. Using the condition $\ad$ is skew-adjoint, we have $\dot \phi(t)=0$ and $T_e L_{\phi(t)} \xi$ is parallel along $\phi$ by checking the condition for parallel transport \citep[Thm. 1]{guigui2021reduced}:
    \begin{align*}
        \frac{d}{dt}\xi=0=-\frac{1}{2}\left[T_{\phi(t)}L_{\phi(t)^{-1}} \dot \phi(t),\xi\right]
    \end{align*} 

    This tells that 
    \begin{align*}
        \lt{g}\Gamma_{g}^{\hat{g}} \nabla f(g)=\lt{\hat{g}}\nabla f(\hat{g})
    \end{align*}
    Together with the metric is left-invariant, we have
    \begin{align*}
        \norm{\lt{g}\nabla U(g)-\lt{\hat{g}} \nabla U(\hat{g})}\le L d(g, \hat{g})
    \end{align*}
    which is identical to Def. \ref{def_L_smooth}.
\end{proof}

\begin{corollary}[Properties of $L$-smooth functions]
    \label{cor_L_smooth_property}
    If $U:\G\to\mathbb{R}$ is $L$-smooth, then for any $g, \hat{g}\in \G$, we have
    \begin{align}
        \label{eqn_L_smooth_property_1}
        U(\hat{g})\le U(g)+\ip{\lt{g}\nabla U(g)}{\log g^{-1} \hat{g}}+\frac{L}{2}d^2(g,\hat{g})
    \end{align}
\end{corollary}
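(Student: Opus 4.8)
\textbf{Proof proposal for Corollary \ref{cor_L_smooth_property}.}

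The plan is to mimic the standard Euclidean argument that $L$-Lipschitz gradient implies the quadratic upper bound, but with the straight line replaced by the shortest geodesic and with everything pushed to the Lie algebra via left-trivialization so that we may work in the fixed linear space $\g$. First I would fix $g,\hat g\in\G$ and let $\phi:[0,1]\to\G$ be a shortest geodesic with $\phi(0)=g$, $\phi(1)=\hat g$; then $d(g,\hat g)=\int_0^1\norm{\dot\phi(t)}\,dt$ and, since $\phi$ is a constant-speed geodesic, $\norm{\dot\phi(t)}\equiv d(g,\hat g)$. Write the left-trivialized velocity $v(t):=\lt{\phi(t)}\dot\phi(t)\in\g$, so $\norm{v(t)}=\norm{\dot\phi(t)}=d(g,\hat g)$ by left-invariance of the metric. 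By the fundamental theorem of calculus applied to the real function $t\mapsto U(\phi(t))$,
\begin{equation}
U(\hat g)-U(g)=\int_0^1 \ip{\nabla U(\phi(t))}{\dot\phi(t)}\,dt=\int_0^1 \ip{\lt{\phi(t)}\nabla U(\phi(t))}{v(t)}\,dt,
\end{equation}
where the second equality is again left-invariance of the inner product.

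Next I would add and subtract the $t=0$ value of the left-trivialized gradient:
\begin{equation}
U(\hat g)-U(g)=\int_0^1 \ip{\lt{g}\nabla U(g)}{v(t)}\,dt+\int_0^1 \ip{\lt{\phi(t)}\nabla U(\phi(t))-\lt{g}\nabla U(g)}{v(t)}\,dt.
\end{equation}
For the first integral, the key identity needed is $\int_0^1 v(t)\,dt=\log g^{-1}\hat g$; this is exactly the statement that, along a geodesic, the left-trivialized velocity is the constant equal to $\log$ of the endpoint ratio (equivalently, geodesics through $e$ of the left-invariant metric, under the skew-adjoint choice of Lemma \ref{lemma_ad_skew_adjoint}, are the one-parameter subgroups $t\mapsto\exp(tX)$, so $\phi(t)=g\exp(t\log g^{-1}\hat g)$ and hence $v(t)\equiv\log g^{-1}\hat g$). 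Then the first integral is exactly $\ip{\lt{g}\nabla U(g)}{\log g^{-1}\hat g}$. For the second integral, apply Cauchy–Schwarz pointwise and then the $L$-smoothness bound of Definition \ref{def_L_smooth}: $\norm{\lt{\phi(t)}\nabla U(\phi(t))-\lt{g}\nabla U(g)}\le L\,d(\phi(t),g)=L\,t\,d(g,\hat g)$ (using that $\phi$ restricted to $[0,t]$ is the shortest geodesic between its endpoints, so $d(\phi(t),g)=t\,d(g,\hat g)$), together with $\norm{v(t)}=d(g,\hat g)$; integrating $\int_0^1 Lt\,d(g,\hat g)^2\,dt=\tfrac{L}{2}d(g,\hat g)^2$ gives the remainder term, yielding Eq. \eqref{eqn_L_smooth_property_1}.

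The main obstacle I anticipate is justifying that along the shortest geodesic the left-trivialized velocity is genuinely constant and equal to $\log g^{-1}\hat g$, i.e. the identification of metric geodesics with one-parameter-subgroup cosets. This is precisely where the skew-adjointness of $\ad$ from Lemma \ref{lemma_ad_skew_adjoint} is essential — it is the same computation already used in the proof of Lemma \ref{lemma_L_smooth_identical} to verify the parallel-transport condition (there $\tfrac{d}{dt}\xi=-\tfrac12[\,\lt{\phi(t)}\dot\phi(t),\xi]$ vanishes because the bracket of a vector with itself is zero). So in fact I would reuse that observation: since $\lt{\phi(t)}\dot\phi(t)$ is parallel along $\phi$ and has constant norm, and geodesics have parallel velocity fields, the left-trivialized velocity is constant; evaluating at the endpoints forces it to equal $\log g^{-1}\hat g$. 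A secondary (minor) point to be careful about is that $\log g^{-1}\hat g$ must be well-defined, which is assumed implicitly by restricting to the regime where the shortest geodesic is unique, consistent with Rmk. \ref{rmk_unique_geodesic}; everything else is routine Cauchy–Schwarz and integration.
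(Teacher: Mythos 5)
Your proposal is correct and follows essentially the same route as the paper: parametrize the shortest geodesic, apply the fundamental theorem of calculus, split off the $t=0$ gradient, and bound the remainder via Cauchy--Schwarz and $L$-smoothness. The only cosmetic difference is that the paper writes the geodesic directly as $g(t)=g\exp(t\xi)$ (building in the constancy of the left-trivialized velocity), whereas you start from an abstract geodesic and then derive that constancy from the skew-adjointness argument already used in Lemma \ref{lemma_L_smooth_identical}; this is logically the same fact invoked in two different orders.
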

\begin{proof}[Proof of Cor. \ref{cor_L_smooth_property}]
    We denote the one of the shortest geodesic connecting $g$ and $\hat{g}$ as $g(t)$, i.e., $\pi:[0, 1]\to \G$ with $g(0)=g$ and $g(1)=\hat{g}$, $g(t)=g\exp(t\xi)$ for some $\xi\in\g$ with $\norm{\xi}=d(g, \hat{g})$. Then
    \begin{align*}
        &U(\hat{g})-U(g)\\
        &=\ip{\lte{g} \xi}{\nabla U(g)}+\int_0^1 \ip{\lte{g(t)} \xi}{\nabla U(g(t))}\d t\\
        &=\ip{\xi}{\lt{g}\nabla U(g)}+\int_0^1 \ip{\xi}{\lt{g(t)}\nabla U(g(t))-\lt{g}\nabla U(g)}\d t\\
        &\le\ip{\xi}{\lt{g}\nabla U(g)}+\int_0^1 \norm{\xi}\norm{\lt{g(t)}\nabla U(g(t))-\lt{g}\nabla U(g)}\d t\\
        &\le\ip{\xi}{\lt{g}\nabla U(g)}+\int_0^1 tL\norm{\xi}^2\d t\\
        &=\ip{\xi}{\lt{g}\nabla U(g)}+\frac{L}{2}d^2(g, \hat{g})
    \end{align*}
\end{proof}

\begin{lemma}[Co-coercivity]
\label{lemma_co_coercivity}
    If the function $U:\G\to\mathbb{R}$ is both $L$-smooth and convex on a geodesically convex set $S\subset \G$, then we have for any $g, \hat{g}\in S$, 
    \begin{align}
        \label{eqn_co_coercivity}
        U(\hat{g})\ge U(g)+\ip{\lt{g}\nabla U(g)}{\log g^{-1} \hat{g} }  + \frac{1}{2L}\norm{\lt{g}\nabla U(g)-\lt{\hat{g}} \nabla U(\hat{g})}^2
    \end{align}
    
\end{lemma}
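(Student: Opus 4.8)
The plan is to adapt the classical Euclidean derivation of co-coercivity from the descent lemma. Write $v := \lt{g}\nabla U(g)$ and $w := \lt{\hat g}\nabla U(\hat g)$, and introduce the auxiliary function $\phi(h) := U(h) - \ip{v}{\log g^{-1}h}$ on the geodesically convex set $S$. First I would observe that $g$ is a global minimizer of $\phi$ over $S$ with $\phi(g) = U(g)$: taking the base point to be $g$ itself and an arbitrary $h\in S$ in the convexity inequality (Def. \ref{def_local_strong_convex} with $\mu = 0$) gives exactly $U(h) - U(g) \ge \ip{v}{\log g^{-1}h}$, i.e. $\phi(h) \ge \phi(g)$.

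Next I would take a single step of Riemannian gradient descent on $\phi$ from $\hat g$: set $\hat g^{+} := \hat g\exp\!\big(-\tfrac1L\,\lt{\hat g}\nabla\phi(\hat g)\big)$, which stays in $S$ when $\hat g$ lies well inside the geodesically convex neighbourhood (otherwise one shortens the step at the cost of a constant). Using a descent estimate for $\phi$ (cf. Cor. \ref{cor_L_smooth_property}), the bound $d(\hat g,\hat g^{+})\le \tfrac1L\|\lt{\hat g}\nabla\phi(\hat g)\|$, and $\log \hat g^{-1}\hat g^{+} = -\tfrac1L\lt{\hat g}\nabla\phi(\hat g)$, one obtains $\phi(\hat g^{+})\le \phi(\hat g)-\tfrac1{2L}\|\lt{\hat g}\nabla\phi(\hat g)\|^2$. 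Chaining with the previous step, $U(g) = \min_S\phi \le \phi(\hat g^{+}) \le \phi(\hat g) - \tfrac1{2L}\|\lt{\hat g}\nabla\phi(\hat g)\|^2 = U(\hat g)-\ip{v}{\log g^{-1}\hat g} - \tfrac1{2L}\|\lt{\hat g}\nabla\phi(\hat g)\|^2$, which rearranges to \eqref{eqn_co_coercivity} as soon as $\lt{\hat g}\nabla\phi(\hat g)$ is identified with $w - v$.

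The main obstacle — and the only place Lie-group geometry enters nontrivially — is that $h \mapsto \ip{v}{\log g^{-1}h}$ is \emph{not} geodesically affine, unlike its Euclidean counterpart $z\mapsto\ip{c}{z}$. Its differential carries the $\d\log$-factor $p(\ad_{\log g^{-1}h})$ from Cor. \ref{cor_d_exp}, so a priori $\lt{\hat g}\nabla\phi(\hat g) = w - p(-\ad_{\log g^{-1}\hat g})v$ rather than $w-v$, and $\phi$ fails to be literally $L$-smooth in the sense of Def. \ref{def_L_smooth}. Resolving this is where I would concentrate the effort: I would exploit the metric of Lemma \ref{lemma_ad_skew_adjoint} (so that $p(\ad_X)^{*}=p(-\ad_X)$ and, since $\ad_X X = 0$, the component of $v$ radial to $\log g^{-1}\hat g$ is transmitted unchanged) together with the quantitative estimate $\|\d\log - \mathrm{Id}\|\le q(a)$ from Cor. \ref{cor_norm_d_log_id}, which is available because $S$ sits below the relevant injectivity-radius threshold, to control the gap between these two gradients and between $\phi$ and a genuine $L$-smooth surrogate. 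The cleanest route is likely to first prove the inequality with $\|w-p(-\ad_{\log g^{-1}\hat g})v\|^2$ in place of $\|w-v\|^2$ and then pass to the stated form; note that this discrepancy vanishes outright when $g$ is a stationary point of $U$ (so $v=0$), in which case the argument is verbatim Euclidean.
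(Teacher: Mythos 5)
Your proposal correctly identifies the structural obstacle — in the Lie group setting the auxiliary function $\phi(h)=U(h)-\ip{v}{\log g^{-1}h}$ is \emph{not} geodesically affine in its second term, its gradient at $\hat g$ picks up the $\d\log$ factor $p(\cdot)$, and $\phi$ is not $L$-smooth — but you do not resolve it, and the sketch you give would not recover the lemma as stated. The fix you propose (bound the gap between $w-p(-\ad_{\log g^{-1}\hat g})v$ and $w-v$ via $\|\d\log-\mathrm{Id}\|\le q(a)$ from Cor.~\ref{cor_norm_d_log_id}, and pass from $\|w-p(\cdot)v\|^2$ to $\|w-v\|^2$) has two problems. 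First, $\|w-p(\cdot)v\|^2$ and $\|w-v\|^2$ differ by a cross term of indefinite sign of order $q(a)\|v\|\,\|w-v\|$, so there is no one-sided comparison without degrading the constant $\frac{1}{2L}$ by a curvature-dependent factor; the resulting inequality would not match \eqref{eqn_co_coercivity}. Second, invoking $q(a)$ forces the additional hypothesis $\max_{g\in S}d(g_*,g)\le a/A$ with $a<2\pi$, which the lemma does not assume — it requires only geodesic convexity of $S$ — and which would break the lemma's downstream uses (e.g.\ Lemma~\ref{lemma_lyap_HB_1}), where no such curvature term appears.

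The paper avoids all of this by never introducing the non-affine surrogate $\phi$. Instead it writes $U(\hat g)-U(g)=\int_0^1\ip{\lt{g(t)}\nabla U(g(t))}{\xi}\,dt$ along the geodesic $g(t)=g\exp(t\xi)$, uses convexity at shifted base points $g\exp(t\xi)$ to show $\frac{\partial}{\partial t}[\lt{g\exp(t\xi)}\nabla U(g\exp(t\xi))]=H_t\xi$ with $0\le H_t\le L$, and then applies $\ip{H_s\xi}{\xi}\ge\frac{1}{L}\|H_s\xi\|^2$ together with a Jensen/Cauchy--Schwarz step $\int_0^1\|H_s\xi\|^2\,ds\ge\|\int_0^1 H_s\xi\,ds\|^2$ to produce the term $\frac{1}{2L}\|\lt{\hat g}\nabla U(\hat g)-\lt{g}\nabla U(g)\|^2$ exactly. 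Since the entire argument lives on a single geodesic and only differentiates $\lt{g(t)}\nabla U(g(t))$ in $t$, the $\d\log$ correction never enters, the constant is curvature-free, and no injectivity-radius restriction on $S$ beyond geodesic convexity is needed. If you want to salvage your route, the natural move is to abandon the auxiliary function and instead directly integrate along the geodesic as the paper does; your observation that the discrepancy vanishes when $v=0$ is consistent with this, since along the geodesic the only object differentiated is the trivialized gradient itself.
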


\begin{proof}[Proof of Lemma \ref{lemma_co_coercivity}]
    By convexity, we have for any $g\in \G$, $\xi \in \g$ and $t\in [0,1]$, 
    \begin{align*}
        &U(g\exp(t\xi))-U(g)\ge t\ip{\lt{g}\nabla U(g)}{\xi}\\
        &U(g)-U(g\exp(t\xi))\ge -t\ip{\lt{g\exp(t\xi)}\nabla U(g\exp(t\xi))}{\xi}
    \end{align*}
    We sum these two inequalities and have
    \begin{align*}
        \ip{\lt{g\exp(t\xi)}\nabla U(g\exp(t\xi))-\lt{g}\nabla U(g)}{\xi}\ge 0
    \end{align*}
    which tells that
    \begin{align*}
        \frac{\partial}{\partial t} \left[\lt{g\exp(t\xi)}\nabla U(g\exp(t\xi))\right]=H_t \xi
    \end{align*}
    for some linear map $H:\g\to\g$ with all eigenvalues between 0  and $L$, i.e., $0\le H_t\le L$ for any $t\in[0, 1]$.

    Now, we select the shortest geodesic connection $g$ and $\hat{g}$, defined by $g(t):=g\exp(t\xi)$, with $\hat{g}=g\exp(\xi)$ and $\norm{\xi}=d(g, \hat{g})$. By
    \begin{align*}
        \lt{g(t)}\nabla U(g(t))-\lt{g}\nabla U(g)&=\int_0^t \frac{\partial}{\partial s} \lt{g\exp(s\xi)}\nabla U(g\exp(s\xi)) ds\\
        &=\int_0^t H_s\xi ds\\
    \end{align*}
    we have
    \begin{align*}
        &U(\hat{g})-U(g)\\
        &=\int_0^1 \ip{T_{g(t)}\nabla U(g(t))}{\xi}\\
        &=\int_0^1 \ip{\lt{g}\nabla U(g)+\int_0^t \frac{\partial}{\partial s} \lt{g(s)}\nabla U(g(s))ds}{\xi} dt\\
        &=\ip{\lt{g}\nabla U(g)}{\xi}+\int_0^1 \ip{\int_0^t H_s\xi ds}{\xi} dt\\
        &=\ip{\lt{g}\nabla U(g)}{\xi}+\int_0^1 \int_0^t \ip{H_s\xi}{\xi} ds dt\\
        &=\ip{\lt{g}\nabla U(g)}{\xi}+\frac{1}{2}\int_0^1 \int_0^1 \ip{H_s\xi}{\xi} ds dt\\
        &\ge\ip{\lt{g}\nabla U(g)}{\xi}+\frac{1}{2L}\int_0^1 \int_0^1 \ip{H_s\xi}{H_s\xi} ds dt\\
        &\ge\ip{\lt{g}\nabla U(g)}{\xi}+\frac{1}{2L}\int_0^1 \left(\int_0^1 H_s\xi ds\right)^2 dt\\
        &=\ip{\lt{g}\nabla U(g)}{\xi}+\frac{1}{2L}\int_0^1 \norm{\lt{\hat{g}}\nabla U(\hat{g})-\lt{g}\nabla U(g)}^2dt\\
        &=\ip{\lt{g}\nabla U(g)}{\xi}+\frac{1}{2L}\norm{\lt{\hat{g}}\nabla U(\hat{g})-\lt{g}\nabla U(g)}^2
    \end{align*}
\end{proof}
\begin{corollary}
    If the function $U:\G\to\mathbb{R}$ is both $L$-smooth and convex on a geodesically convex set $S\subset \G$, then we have for any $g, \hat{g}\in S$, 
    \begin{align}
        \label{eqn_L_smooth_property_3}
        \ip{\lt{g}\nabla U(g)-\lt{\hat{g}}\nabla U(\hat{g})}{\log \hat{g}^{-1} g} \ge \frac{1}{L}\norm{\lt{g}\nabla U(g)-\lt{\hat{g}}\nabla U(\hat{g})}^2
    \end{align}
\end{corollary}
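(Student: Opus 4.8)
The plan is to obtain this inequality by symmetrizing the co-coercivity estimate of Lemma \ref{lemma_co_coercivity}, exactly as one derives the monotone/co-coercive form of a convex $L$-smooth gradient in Euclidean space. First I would apply Lemma \ref{lemma_co_coercivity} directly to the ordered pair $(g,\hat{g})$, giving
\[
U(\hat{g})\ge U(g)+\ip{\lt{g}\nabla U(g)}{\log g^{-1}\hat{g}}+\frac{1}{2L}\norm{\lt{g}\nabla U(g)-\lt{\hat{g}}\nabla U(\hat{g})}^2,
\]
and then apply it again with the roles of $g$ and $\hat{g}$ exchanged (both lie in the geodesically convex set $S$, so this is legitimate):
\[
U(g)\ge U(\hat{g})+\ip{\lt{\hat{g}}\nabla U(\hat{g})}{\log \hat{g}^{-1} g}+\frac{1}{2L}\norm{\lt{g}\nabla U(g)-\lt{\hat{g}}\nabla U(\hat{g})}^2.
\]
Adding the two inequalities cancels $U(g)$ and $U(\hat{g})$ and leaves
\[
0\ge \ip{\lt{g}\nabla U(g)}{\log g^{-1}\hat{g}}+\ip{\lt{\hat{g}}\nabla U(\hat{g})}{\log \hat{g}^{-1} g}+\frac{1}{L}\norm{\lt{g}\nabla U(g)-\lt{\hat{g}}\nabla U(\hat{g})}^2.
\]

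The one genuinely geometric step is to relate the two displacement vectors $\log g^{-1}\hat{g}$ and $\log \hat{g}^{-1} g$, both living in $\g$. On the geodesically convex set $S$ the logarithm is single-valued (Rmk. \ref{rmk_unique_geodesic}), and since $\hat{g}^{-1} g=(g^{-1}\hat{g})^{-1}$ while $\exp(-X)=\exp(X)^{-1}$ for the group exponential, we get $\log g^{-1}\hat{g}=-\log\hat{g}^{-1} g$. Substituting this into the first inner product collapses the two inner-product terms into $-\ip{\lt{g}\nabla U(g)-\lt{\hat{g}}\nabla U(\hat{g})}{\log \hat{g}^{-1} g}$, and rearranging gives exactly \eqref{eqn_L_smooth_property_3}.

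I expect the only point needing care to be the inverse/sign identity for $\log$ just used, together with the implicit observation that no parallel transport is required here: because every gradient has already been left-trivialized into the common space $\g$, the two inner products are taken in the same vector space, which is what makes the Euclidean symmetrization transfer verbatim. Everything else — adding two inequalities and cancelling the function values — is routine, so this corollary is essentially a repackaging of Lemma \ref{lemma_co_coercivity}.
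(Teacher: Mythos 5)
Your proof is correct and is essentially identical to the paper's own argument: apply Lemma \ref{lemma_co_coercivity} to both orderings of $(g,\hat{g})$, sum, and use $\log g^{-1}\hat{g}=-\log\hat{g}^{-1}g$ to combine the inner products. The paper leaves the sign identity implicit here (it states it explicitly only in the proof of Cor. \ref{cor_mu_strongly_convex_property}), so your extra care on that point is a welcome addition rather than a deviation.
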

\begin{proof}
    By exchanging $g$ and $\hat{g}$ in Eq. \eqref{eqn_co_coercivity}, we have
    \textbf{\begin{align}
        \label{eqn_co_coercivity_2}
        U(g)\ge U(\hat{g})+\ip{\lt{\hat{g}}\nabla U(\hat{g})}{\log \hat{g}^{-1} g }  + \frac{1}{2L}\norm{\lt{g}\nabla U(g)-\lt{\hat{g}}\nabla U(\hat{g})}^2
    \end{align}}
    Summing up \ref{eqn_co_coercivity} and \ref{eqn_co_coercivity_2} gives us the desired result.
\end{proof}

\begin{corollary}[Properties of $\mu$-strongly convex functions]
\label{cor_mu_strongly_convex_property}
    Suppose $U:\G\to\mathbb{R}$ is geodesic-$\mu$-strongly convex on a geodesically convex set $S\subset \G$, then for any $g, \hat{g} \in S$, 
    \begin{align}
        \label{eqn_property_mu_strongly_convex}
        \ip{\lt{g}\nabla U(g)}{\log \hat{g}^{-1} g} \ge \mu d^2(g, \hat{g})
    \end{align}
\end{corollary}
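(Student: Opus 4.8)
The plan is to obtain the inequality directly from the definition of local geodesic-$\mu$-strong convexity (Def. \ref{def_local_strong_convex}, specifically \eqref{eqn_strongly_convex_def}) by invoking it twice --- once with base point $\hat g$ and once with base point $g$ --- and then adding the two resulting inequalities. The only geometric fact needed is the reflection identity for the logarithm on a geodesically convex set $S$ carrying unique minimizing geodesics: writing $X:=\log \hat g^{-1} g$ we have $\hat g^{-1} g=\exp X$, hence $g^{-1}\hat g=\exp(-X)$ and therefore $\log g^{-1}\hat g=-X$. This is legitimate because, by Rmk. \ref{rmk_unique_geodesic}, $\log$ is single-valued on $S$, and reversing the minimizing geodesic $t\mapsto\hat g\exp(tX)$ produces the minimizing geodesic from $g$ to $\hat g$; in particular $d(g,\hat g)=\norm{X}=\norm{\log g^{-1}\hat g}$.

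Carrying this out, \eqref{eqn_strongly_convex_def} with base point $\hat g$ reads
\[
    U(g)-U(\hat g)\ge\ip{\lt{\hat g}\nabla U(\hat g)}{X}+\frac{\mu}{2}\norm{X}^2,
\]
while \eqref{eqn_strongly_convex_def} with the roles of $g$ and $\hat g$ exchanged, after substituting $\log g^{-1}\hat g=-X$ and $\norm{\log g^{-1}\hat g}=\norm{X}$, reads
\[
    U(\hat g)-U(g)\ge-\ip{\lt{g}\nabla U(g)}{X}+\frac{\mu}{2}\norm{X}^2.
\]
Summing the two inequalities makes the left-hand sides cancel and gives the monotone-gradient estimate
\[
    \ip{\lt{g}\nabla U(g)-\lt{\hat g}\nabla U(\hat g)}{\log \hat g^{-1} g}\ge\mu\,d^2(g,\hat g),
\]
where the constant $\mu$ arises as $\tfrac{\mu}{2}+\tfrac{\mu}{2}$.

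The stated inequality is then the case relevant to the Lyapunov analyses, namely when $\hat g$ is the stationary point of $U$ on $S$ at which the corollary is applied (the local minimizer $g_*$): there $\nabla U(\hat g)=0$, and since $\lt{\hat g}$ is a linear isomorphism between tangent spaces its action on the zero vector is zero, so the second term in the inner product vanishes and we are left with $\ip{\lt{g}\nabla U(g)}{\log \hat g^{-1} g}\ge\mu\,d^2(g,\hat g)$. There is no substantive obstacle in this argument --- it is a two-line consequence of the definition. The only points demanding care are bookkeeping ones: confirming the reflection identity $\log g^{-1}\hat g=-\log \hat g^{-1} g$ and the length identity $d(g,\hat g)=\norm{\log \hat g^{-1} g}$ throughout the geodesically convex domain where $\log$ is well-defined and single-valued, and keeping track of the factor $2$ so that the additive constant comes out to exactly $\mu$ rather than $\mu/2$.
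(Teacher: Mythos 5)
Your argument is the same one the paper uses: invoke the strong-convexity inequality \eqref{eqn_strongly_convex_def} twice with the roles of $g$ and $\hat g$ exchanged, use $\log g^{-1}\hat g=-\log\hat g^{-1}g$ and $d(g,\hat g)=\norm{\log\hat g^{-1}g}$ on the geodesically convex set, and sum. One point in your write-up is worth keeping rather than trimming: you correctly observe that the summation only yields the monotone-gradient estimate $\ip{\lt{g}\nabla U(g)-\lt{\hat g}\nabla U(\hat g)}{\log\hat g^{-1}g}\ge\mu\,d^2(g,\hat g)$, and that the inequality as literally stated in the corollary --- with the $\lt{\hat g}\nabla U(\hat g)$ term absent --- requires additionally that $\hat g$ be a stationary point. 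The paper's proof silently drops that term; as stated ``for any $g,\hat g\in S$'' the corollary is in fact false (in the Euclidean model take $U=\tfrac{\mu}{2}\norm{x}^2$ and $g$ the minimizer: the left side is $0$ while the right side is positive). Since every application in the paper takes $\hat g=g_*$ with $\nabla U(g_*)=0$, your restriction to that case is the correct reading, and your proof is complete for the statement that is actually used.
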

\begin{proof}[Proof of Cor. \ref{cor_mu_strongly_convex_property}]
    By the definition of geodesic-$\mu$-strongly convex functions in Eq. \eqref{eqn_strongly_convex_def}, we have
    \begin{align*}
        &U(g)-U(\hat{g})\ge\ip{\lt{\hat{g}}\nabla U(\hat{g})}{\log \hat{g}^{-1} g}+\frac{\mu}{2}\norm{\log \hat{g}^{-1} g}^2\\
        &U(\hat{g})-U(g)\ge\ip{\lt{g}\nabla U(g)}{\log g^{-1} \hat{g}}+\frac{\mu}{2}\norm{\log \hat{g}^{-1} g}^2
    \end{align*}
    Summing them up and using $\log g^{-1} \hat{g}=- \log \hat{g}^{-1} g$ gives us the conclusion.
\end{proof}

\section{More details about optimization ODE}
\begin{proof}[Proof of Thm. \ref{thm_energy_ODE}]
    By direct calculation,
    \begin{align*}
        \frac{d}{dt}E(g(t), \xi(t))&=\ip{\lte{g} \xi}{\nabla U(g)}+\ip{\xi}{\dot{\xi}}\\
        &=-\gamma\norm{\xi}^2
    \end{align*}
\end{proof}

\begin{lemma}[Monotonicity of the Lyapunov function]
\label{lemma_Lyap_ODE}
    Assume $\ad_X$ is skew-adjoint for any $X\in \mathfrak{g}$. Suppose there is a geodesically convex set $S\subset\G$ satisfying:
    \begin{itemize}
        \item $U$ is geodesic-$\mu$-strongly convex on a geodesically convex set $S\subset\G$.
        \item $g_*$ is the minimum of $U$ on $S$.
        \item $g(t) \in S$ for all $t\ge 0$.
        \item $\log g_*^{-1} g$ and its differential is well-defined for all $g\in S$.
    \end{itemize}
    Then the solution of ODE \eqref{eqn_optimization_ODE} satisfies
    \begin{equation}
        \frac{d}{dt}\LODE(g(t), \xi(t))\le -\cODE\LODE(g(t), \xi(t))
    \end{equation}
    with the convergence rate given by
    \begin{align*}
        \cODE:=\gamma\min\left\{\frac{\mu}{\mu+\gamma^2}, \frac{2}{3}\right\}
    \end{align*}
\end{lemma}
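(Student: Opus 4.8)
The plan is to run the continuous-time Lyapunov argument of \cite{shi2021understanding}, carefully tracking the two places where the group geometry enters: the quadratic term $\ad_\xi^*\xi$ in the dynamics, and the time derivative of $\log g_*^{-1}g$ along the flow. Because we have fixed the metric of Lemma \ref{lemma_ad_skew_adjoint}, $\ad_\xi^*\xi=-\ad_\xi\xi=-\lb\xi\xi=0$, so a trajectory of \eqref{eqn_optimization_ODE} obeys $\dot g=\lte g\xi$ and $\dot\xi=-\gamma\xi-\lt g\nabla U(g)$. Abbreviate $F:=\lt g\nabla U(g)$, $y:=\log g_*^{-1}g$ (well defined on $S$ by hypothesis, with $\log g^{-1}g_*=-y$), and $w:=\gamma y+\xi$, so that $\LODE(g,\xi)=(U(g)-U(g_*))+\tfrac14\norm\xi^2+\tfrac14\norm w^2$.

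First I would differentiate $\LODE$ along the flow term by term. The potential part contributes $\tfrac{d}{dt}(U(g)-U(g_*))=\ip{\nabla U(g)}{\lte g\xi}=\ip F\xi$ by left-invariance of the metric. For the two quadratic parts I need $\dot y$: since $t\mapsto g_*^{-1}g(t)$ has left-trivialized velocity $\xi$, the chain rule and Corollaries \ref{cor_dlog_log} and \ref{cor_dlog_xi} supply the two relations $\ip{\dot y}{y}=\ip y\xi$ and $\ip{\dot y}{\xi}\le\norm\xi^2$, both of which rely on $\ad$ being skew-adjoint. Substituting $\dot\xi$ into $\tfrac12\ip\xi{\dot\xi}$ and $\dot w=\gamma\dot y-\gamma\xi-F$ into $\tfrac12\ip w{\dot w}$, three cancellations occur: $\gamma^2\ip y{\dot y}$ cancels $-\gamma^2\ip y\xi$; the term $\gamma\ip\xi{\dot y}$ is absorbed into $-\gamma\norm\xi^2$ by the second relation; and the two copies of $-\tfrac12\ip\xi F$ together with $\ip F\xi$ cancel. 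What survives is the clean dissipation estimate
\[
\frac{d}{dt}\LODE\ \le\ -\frac{\gamma}{2}\norm\xi^2-\frac{\gamma}{2}\ip y F .
\]

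Next I would apply local geodesic-$\mu$-strong convexity (Eq. \eqref{eqn_strongly_convex_def}) twice: taking $(g,\hat g)=(g_*,g)$ and using $\log g^{-1}g_*=-y$ gives $\ip y F\ge (U(g)-U(g_*))+\tfrac\mu2\norm y^2$, while taking $(g,\hat g)=(g,g_*)$ and the stationarity $\nabla U(g_*)=0$ gives the complementary bound $U(g)-U(g_*)\ge\tfrac\mu2\norm y^2$. Writing $P:=U(g)-U(g_*)$, the first estimate upgrades the dissipation inequality to $\tfrac{d}{dt}\LODE\le -\tfrac\gamma2\norm\xi^2-\tfrac\gamma2 P-\tfrac{\gamma\mu}{4}\norm y^2$. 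It then remains to establish the coercivity inequality $\tfrac{d}{dt}\LODE+\cODE\,\LODE\le0$. Expanding $\norm w^2=\gamma^2\norm y^2+2\gamma\ip y\xi+\norm\xi^2$, the coefficient of $P$ in $\tfrac{d}{dt}\LODE+c\LODE$ equals $c-\tfrac\gamma2$, which is positive at the target rate; the decisive step is to absorb it using the second strong-convexity bound $P\ge\tfrac\mu2\norm y^2$, turning $(c-\tfrac\gamma2)P$ into a negative contribution to the $\norm y^2$ coefficient. After that substitution one is left with a quadratic form in $(\norm y,\norm\xi)$ whose cross term $\ip y\xi$ is controlled by $\norm y\,\norm\xi$ and dispatched by a balanced AM--GM; the form is negative semidefinite precisely when $c\le\gamma\min\{\mu/(\mu+\gamma^2),\,2/3\}$, the two branches of the $\min$ corresponding to the cases $\gamma^2\ge\mu/2$ and $\gamma^2<\mu/2$. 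This yields $\cODE=\gamma\min\{\mu/(\mu+\gamma^2),\,2/3\}$.

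I expect this last coercivity bookkeeping to be the main obstacle: the crude bound $\norm w^2\le2\gamma^2\norm y^2+2\norm\xi^2$ is too lossy to reach the stated rate, so one must retain the cross term $2\gamma\ip y\xi$ and, crucially, exploit the second strong-convexity inequality $P\ge\tfrac\mu2\norm y^2$ to control the sign of the $P$-coefficient. The geometric part — the cancellations produced by skew-adjointness and the $\d\log$ identities of Corollaries \ref{cor_dlog_log} and \ref{cor_dlog_xi} — is the conceptually new ingredient but reduces to a short, mechanical computation once those corollaries are in hand.
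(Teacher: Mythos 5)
Your opening computation is essentially identical to the paper's: with the skew-adjoint metric, $\ad^*_\xi\xi=0$, and differentiating $\LODE$ term by term, then invoking Corollaries~\ref{cor_dlog_log} and~\ref{cor_dlog_xi}, both you and the paper reach the dissipation bound $\frac{d}{dt}\LODE\le -\frac{\gamma}{2}\norm{\xi}^2-\frac{\gamma}{2}\ip{y}{F}$ with $y=\log g_*^{-1}g$, $F=\lt{g}\nabla U(g)$. That part is correct and matches the paper.

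The coercivity step, however, has a genuine gap, and your intuition about where the difficulty lies is actually inverted. You apply strong convexity with full weight ($\ip{y}{F}\ge P+\frac{\mu}{2}\norm{y}^2$, $P:=U(g)-U(g_*)$), which fixes the $P$-coefficient in $\frac{d}{dt}\LODE+c\LODE$ to be exactly $c-\frac{\gamma}{2}$. You assert this is positive at the target rate and that one can "absorb it using $P\ge\frac{\mu}{2}\norm{y}^2$." That absorption goes the wrong way: if $c-\frac{\gamma}{2}>0$, the lower bound $P\ge\frac{\mu}{2}\norm{y}^2$ only yields $(c-\frac{\gamma}{2})P\ge(c-\frac{\gamma}{2})\frac{\mu}{2}\norm{y}^2$, which cannot be used to push the sum below zero; you would need an \emph{upper} bound on $P$, which $\mu$-strong convexity does not give (and $L$-smoothness is not assumed in this lemma). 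The absorption is valid precisely when $c-\frac{\gamma}{2}\le 0$, i.e.\ $\gamma^2\ge\mu$. For the paper's eventual choice $\gamma=2\sqrt{\mu}$ this holds (and your "positive" is a sign error), but the lemma is stated for arbitrary $\gamma>0$: in the regime $\gamma^2<\mu$, and in particular in the $\gamma^2<\mu/2$ branch where the $\min$ selects $\frac{2}{3}$, one has $c-\frac{\gamma}{2}>0$ and your argument breaks down entirely.

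The paper avoids this by not committing to full weight on the strong-convexity inequality. It interpolates, for $\lambda\in[0,1]$, between $\ip{y}{F}\ge P+\frac{\mu}{2}\norm{y}^2$ and $\ip{y}{F}\ge\mu\norm{y}^2$ (Cor.~\ref{cor_mu_strongly_convex_property}), so that the $P$-coefficient in the dissipation bound becomes $-\frac{\gamma\lambda}{2}$, adjustable to dominate $c\,P$ for every $\gamma$. It then uses the coarse bound $\LODE\le P+\frac{3}{4}\norm{\xi}^2+\frac{\gamma^2}{2}\norm{y}^2$ from plain Cauchy--Schwarz and chooses $\lambda=\frac{2\mu}{\mu+\gamma^2}$ to balance the $P$ and $\norm{y}^2$ constraints, giving $\cODE=\gamma\min\{\frac{\mu}{\mu+\gamma^2},\frac{2}{3}\}$ uniformly in $\gamma$. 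This also means your remark that the "crude bound $\norm{w}^2\le 2\gamma^2\norm{y}^2+2\norm{\xi}^2$ is too lossy to reach the stated rate" is incorrect: the paper attains the stated rate with exactly that bound; the flexibility that saves the day is the $\lambda$-interpolation, not retention of the cross term $\ip{y}{\xi}$.
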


\begin{proof}[Proof of Lemma \ref{lemma_Lyap_ODE}]
The time derivative of $L$ is
\begin{align*}
    \frac{d}{dt}\LODE&=\ip{\xi}{\lt{g}\nabla U(g)}+\frac{1}{2}\ip{\xi}{-\gamma \xi-\lt{g}\nabla U(g)}\\
    &+ \frac{1}{2}\ip{\gamma\log g_*^{-1} g+\xi}{\gamma\d_{\xi}\log g_*^{-1} g-\gamma\xi-\lt{g}\nabla U(g)}\\ 
    &= \ip{\xi}{\lt{g}\nabla U(g)}-\frac{\gamma}{2}\ip{\xi}{\xi}-\frac{1}{2}\ip{\xi}{\lt{g}\nabla U(g)}\\
    &+\frac{\gamma^2}{2}\ip{\log g_*^{-1} g}{\d_\xi\log g_*^{-1} g}-\frac{\gamma^2}{2}\ip{\log g_*^{-1} g}{\xi}-\frac{\gamma}{2}\ip{\log g_*^{-1} g}{\lt{g}\nabla U(g)}\\
    &+\frac{\gamma}{2}\ip{\xi}{\d_\xi\log g_*^{-1} g}-\frac{\gamma}{2}\ip{\xi}{\xi}-\frac{1}{2}\ip{\xi}{\nabla U(g)}\\
    &= -\gamma\ip{\xi}{\xi}-\frac{\gamma}{2}\ip{\log g_*^{-1} g}{\lt{g}\nabla U(g)}+\frac{\gamma}{2}\ip{\xi}{\d_\xi\log g_*^{-1} g}\\
    &\le-\frac{\gamma}{2}\ip{\log g_*^{-1} g}{\lt{g}\nabla U(g)}-\frac{\gamma}{2}\ip{\xi}{\xi}
\end{align*}
where the second last equation is by Cor. \ref{cor_dlog_log} and the last inequality is by Cor. \ref{cor_dlog_xi}. By the property of strong convexity in Eq. \eqref{eqn_strongly_convex_def} and Cor. \ref{cor_mu_strongly_convex_property}, for any $\lambda\in[0, 1]$, 
\begin{equation}
    \label{eqn_pf_Lyap_ODE_diff}
    \frac{d}{dt} \LODE\le -\frac{\gamma}{2}\left(\lambda(U(g)-U(g_*))+\left(\frac{\lambda}{2}+(1-\lambda)\right)\mu d^2(g_*, g)+\norm{\xi}^2\right)
\end{equation}
where $\lambda$ is a constant to be determined later. 

Next, we try to give $\LODE$ an upper bound. By Cauchy-Schwarz inequality,
\begin{equation*}
    \norm{\gamma\log g_*^{-1} g+\xi }^2\le 2\left(\gamma^2 d^2(g_*, g)+\norm{\xi}^2\right)
\end{equation*}
and further
\begin{equation}
    \label{eqn_pf_Lyap_ODE_bound}
    \LODE\le U(g)-U(g_*)+\frac{3}{4}\norm{\xi}^2+\frac{\gamma^2}{2} d^2(g,g_*)
\end{equation}
Compare the coefficients in Eq. \eqref{eqn_pf_Lyap_ODE_diff} and \eqref{eqn_pf_Lyap_ODE_bound}, we have
\begin{align*}
    \frac{d}{dt}\LODE &\le -\cODE\LODE
\end{align*}
where the convergence rate $\cODE$ is given by
\begin{align*}
    \cODE:=&\frac{\gamma}{2}\min\left\{\lambda, \frac{4}{3}, \frac{2}{\gamma^2}\left(\frac{\lambda}{2}+(1-\lambda)\right)\mu\right\}\\
    &=\frac{\gamma}{2}\min\left\{\lambda, \frac{4}{3}, \frac{\mu}{\gamma^2}(2-\lambda)\right\}
\end{align*}
By selecting $\lambda=\frac{2\mu}{\mu+\gamma^2}$ to make $\lambda=\frac{\mu}{\gamma^2}(2-\lambda)$ satisfied, we have the desired result.
\end{proof}

\begin{proof}[Proof of Thm. \ref{thm_convergence_rate_ODE}]
    This is the direct corollary from Cor. \ref{cor_level_set_ODE} and Lemma \ref{lemma_Lyap_ODE}.
\end{proof}
\section{More details about Heavy-Ball discretization}
\begin{remark}[Polyak's Heavy ball \citep{polyak1987introduction}]
\label{rmk_polyak_HB}
    Heavy-ball scheme in the Euclidean space is
    \begin{equation*}
        x_{k+1}=x_k+\alpha(x_k-x_{k-1})-\beta\nabla U(x_k)
    \end{equation*}
    where $\alpha$ and $\beta$ are positive parameters. We now write it into a position-velocity form. By setting $v_{k+1}=\frac{x_{k+1}-x_k}{\sqrt{\beta}}$, we have
    \begin{equation*}
        \begin{cases}
        x_{k+1}=x_k+\sqrt{\beta}v_{k+1}\\
        v_{k+1}=\alpha v_k-\sqrt{\beta}\nabla U(x_k)
        \end{cases}
    \end{equation*}
    We perform the change of variables given by $\sqrt{\beta}\to h$, $\alpha\to 1-\gamma h$ gives 
    \begin{equation*}
        \begin{cases}
        x_{k+1}=x_k+hv_{k+1}\\
        v_{k+1}=(1-\gamma h) v_k-h\nabla U(x_k)
        \end{cases}
    \end{equation*}
    which is the Euclidean version corresponding to Eq. \eqref{eqn_HB}.
\end{remark}

\begin{remark}[Splitting discretization]
\label{rmk_splitting}
    The two systems of ODEs in Eq. \eqref{eqn_2_ODEs} are both linear and has exact solutions. We will refer to the numerical scheme evolving their exact solution alternatively by splitting discretization. More precisely, this gives us the following numerical scheme:
    \begin{align}
    \label{eqn_optim_splitting}
    \begin{cases}
        g_{k+1}=g_k \exp(h\xi_{k+1})\\
        \xi_{k+1}=e^{-\gamma h} \xi_k-\frac{1-e^{-\gamma h}}{\gamma} \lt{g_k}\nabla U(g_k)
    \end{cases}
    \end{align}
    
    After change of variable in the Table \ref{tab_change_variable_HB_splitting}, it becomes Eq. \eqref{eqn_HB}.
    
    Eq. \eqref{eqn_optim_splitting} is similar to the `NAG-SC' in \cite{tao2020variational}. The authors provides a second-order approximation to the optimization ODE Eq. \eqref{eqn_optimization_ODE} by evolving the two ODEs in Eq. \eqref{eqn_2_ODEs} in the following way in each step: 1) evolve $\xi$-ODE for $h/2$ time; 2) evolve $g$-ODE for $h/2$ time; 3) evolve $\xi$-ODE for $h/2$ time again. Although this zig-zag scheme is higher order of approximation of the optimization ODE comparing to the the splitting approximation mentioned in Rmk. \ref{rmk_splitting}, it is still has the same condition number dependence. The reason is, we can take out the first evolution of time $h/2$ for the $\xi$-system and it becomes identical to the splitting scheme (with a different initial condition.)

\begin{table}

\centering
\begin{tabular}{|c|c|c|}
    \hline
    & Splitting scheme & Heavy ball \\
    \hline
    velocity & $\xi$ & $\sqrt{\frac{h \gamma}{1-e^{-\gamma h}}} \xi$\\
    \hline
    friction parameter & $\gamma$ & $\sqrt{\frac{\gamma(1-e^{-\gamma h})}{h}}$\\
    \hline
    step size & $h$ & $\sqrt{\frac{1-e^{-\gamma h}}{h \gamma}} h$\\
    \hline
\end{tabular}
\caption{Change of variable between Heavy-ball and splitting scheme \label{tab_change_variable_HB_splitting}}
\end{table}
\end{remark}

Before we start the theoretical calculation, we mention that update for $\xi$ in Heavy-Ball scheme Eq. \eqref{eqn_HB} can also be written as 
\begin{equation}
    \label{eqn_xi_update_alternate_HB}
    \xi_k=\frac{1}{1-\gamma h} \xi_{k+1}+ \frac{h}{1-\gamma h} \lt{g_k}\nabla U(g_k)
\end{equation}
which will be helpful later.

\begin{proof}[Proof of Thm. \ref{thm_energy_HB}]
Using Eq. \eqref{eqn_xi_update_alternate_HB}, we have the following calculation of $\EHB(g_k, \xi_k)-\EHB(g_{k-1}, \xi_{k-1})$:
    \begin{align*}
        &\EHB(g_k, \xi_k)-\EHB(g_{k-1}, \xi_{k-1})\\
        &=U(g_k)-U(g_{k-1})+\frac{(1-\gamma h)^2}{2}\left(\norm{\xi_k}^2-\norm{\xi_{k-1}}^2\right)\\
        &=U(g_{k-1}\exp(h\xi_k))-U(g_{k-1})+\frac{(1-\gamma h)^2}{2}\left(\norm{\xi_k}^2-\norm{\frac{1}{1-\gamma h}\xi_k+\frac{h}{1-\gamma h}\lt{g_{k-1}}\nabla U(g_{k-1})}^2\right)\\
        &=U(g_{k-1}\exp(h\xi_k))-U(g_{k-1})-\left(\gamma h-\frac{\gamma^2 h^2}{2}\right)\norm{\xi_k}^2-h\ip{\xi_k}{\lt{g_{k-1}}\nabla U(g_{k-1})}-\frac{h^2}{2}\norm{\nabla U(g_{k-1})}^2\\
        &\le h\ip{\xi_k}{\lt{g_{k-1}}\nabla U(g_{k-1})}+\frac{L h^2}{2} \norm{\xi_k}^2 -\left(\gamma h-\frac{\gamma^2 h^2}{2}\right)\norm{\xi_k}^2-h\ip{\xi_k}{\lt{g_{k-1}}\nabla U(g_{k-1})}-\frac{h^2}{2}\norm{\nabla U(g_{k-1})}^2\\
        &\le \frac{1}{2}(h^2 L-2\gamma h+\gamma^2 h^2)\norm{\xi_k}^2-\frac{h^2}{2}\norm{\nabla U(g_{k-1})}^2
    \end{align*}
    where the second last inequality is the property of $L$-smooth functions given in Eq. \ref{eqn_L_smooth_property_1}. 
    
    When $h\le \frac{\gamma}{\gamma^2+L}$, we have $h^2 L-2\gamma h+\gamma^2 h^2\le -\gamma h$, and $\EHB(g_k, \xi_k)-\EHB(g_{k-1}, \xi_{k-1})\le -\gamma h\norm{\xi_k}^2$.
\end{proof}
\begin{remark}
    The design of modified energy for Heavy-Ball in Eq. \eqref{eqn_energy_HB} and Thm. \ref{thm_energy_HB}  is new, and is different from modified potential function in existing works (e.g., \cite{hairer2006geometric}). Our modified energy is not designed to let the Hamiltonian system to have higher order of preserving the total energy, but is defined to ensure monotonicity of the modified energy to ensure global convergence of the numerical scheme.
\end{remark}
\begin{remark}
    We specially choose our update of numerical scheme in Eq. \eqref{eqn_HB} (and also later Eq. \eqref{eqn_NAG_SC} for NAG-SC) to ensure such natural form of the modified energy. If we choose to update $g$ first and then $\xi$ (e.g., \cite{shi2021understanding}), the modified energy will have to be evaluate at different time step, $\EHB(g_{k+1}, \xi_k)$, for example.
\end{remark}
\begin{proof}[Proof of Cor. \ref{cor_level_set_HB}]
We prove this by induction. Suppose we have $\g_k\in S_0$. By the dissipation of the modified energy Thm. \ref{thm_energy_HB}, $\EHB(g_k, \xi_k)\le \EHB(g_0, \xi_0)$. As a reuslt, 
    \begin{align*}
        \norm{\xi_{k+1}}&\le (1-\gamma h)\norm{\xi_k}+h\norm{\nabla U(g_k)}\\
        &\le(1-\gamma h)\sqrt{\frac{2}{(1-\gamma h)^2}\EHB(g_k, \xi_k)}+h\norm{\nabla U(g_k)}\\
        &\le \sqrt{2\EHB(g_0, \xi_0)} +h\max_{S_0}\norm{\nabla U}\\
    \end{align*}
    Since we have $g_{k+1}=g_k\exp(h\xi_{k+1})$, we have $d(g_{k+1}, g_k)\le h\norm{\xi_{k+1}}$. Together with the condition that $U(g_{k+1})\le \EHB(g_{k+1}, \xi_{k+1})\le u$, we have $g_{k+1} \in S_0$. Mathematical induction gives the desired result. 
\end{proof}

\begin{lemma}
\label{lemma_lyap_HB_1}
Assume $\ad_X$ is skew-adjoint for any $X\in \mathfrak{g}$. Suppose there is a geodesically convex set $S\subset\G$ satisfying:
\begin{itemize}
    \item $U$ is geodesically $\mu$-strongly convex on a convex set $S\subset\G$.
    \item $g_*$ is the minima of $U$ on $S$.
    \item $g_k \in S$ for all $k\in\mathbb{N}$.
    \item $\log g_*^{-1} g$ and its differential is well-defined for all $g\in S$.
\end{itemize}
Then we have 
\begin{align*}
    \LHB_{k+1}-\LHB_k\le -\bHB\LHB_{k+1}-\frac{h}{2(1-\gamma h)}\left(\frac{3\gamma}{4L}- \frac{h}{1-\gamma h}\right)\norm{\nabla U(g_{k+1})}^2
\end{align*}
where $\bHB$ is given by
\begin{align*}
    \bHB:=\min\left\{\frac{\gamma h}{8}, \frac{\mu h (1-\gamma h)}{2\gamma}, \frac{2\gamma h}{3(1-\gamma h)}\right\}
\end{align*}

\end{lemma}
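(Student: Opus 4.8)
The plan is to establish the claimed inequality via direct computation of the one-step difference $\LHB_{k+1}-\LHB_k$, following the same Lyapunov-decay strategy used for the continuous ODE in Lemma \ref{lemma_Lyap_ODE}, but now paying careful attention to the discretization error terms. Recall the Lyapunov function has three pieces: the scaled potential gap $\frac{1}{1-\gamma h}(U(g\exp(-h\xi))-U(g_*))$, the kinetic term $\frac{1}{4}\norm{\xi}^2$, and the cross term $\frac{1}{4}\norm{\frac{\gamma}{1-\gamma h}\log g_*^{-1}g+\xi}^2$. I would first exploit the key structural observation that $g_{k+1}\exp(-h\xi_{k+1}) = g_k$ (since $g_{k+1}=g_k\exp(h\xi_{k+1})$), so that the potential-gap term at step $k+1$ is simply $\frac{1}{1-\gamma h}(U(g_k)-U(g_*))$. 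This means the ``potential'' part telescopes against an $L$-smoothness expansion, exactly as in the proof of Thm. \ref{thm_energy_HB}: writing $U(g_k)-U(g_{k-1})$ via Cor. \ref{cor_L_smooth_property} applied along the geodesic $g_{k-1}\exp(t h\xi_k)$, I get an upper bound involving $h\ip{\xi_k}{\lt{g_{k-1}}\nabla U(g_{k-1})}$ and $\frac{Lh^2}{2}\norm{\xi_k}^2$.

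Next I would handle the kinetic and cross terms. For $\frac14(\norm{\xi_{k+1}}^2-\norm{\xi_k}^2)$, I substitute the $\xi$-update; it is cleaner to use the alternate form \eqref{eqn_xi_update_alternate_HB}, $\xi_k=\frac{1}{1-\gamma h}\xi_{k+1}+\frac{h}{1-\gamma h}\lt{g_k}\nabla U(g_k)$, to expand $\norm{\xi_k}^2$ in terms of $\xi_{k+1}$, producing terms $-\gamma h\norm{\xi_{k+1}}^2$, a cross term with the gradient, and an $O(h^2)$ gradient-squared term (this is where the $\norm{\nabla U(g_{k+1})}^2$ on the right-hand side, up to index shift, originates). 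For the cross term, I expand $\norm{\frac{\gamma}{1-\gamma h}\log g_*^{-1}g_{k+1}+\xi_{k+1}}^2 - \norm{\frac{\gamma}{1-\gamma h}\log g_*^{-1}g_k+\xi_k}^2$; here I need to relate $\log g_*^{-1}g_{k+1}$ to $\log g_*^{-1}g_k$, which is not simply additive because of non-commutativity, so I would use the mean-value-type relation $\log g_*^{-1}g_{k+1} = \log g_*^{-1}g_k + h\,\d_{\tilde\xi}\log g_*^{-1}\tilde g$ for some intermediate point (or more carefully, bound the increment using Cor. \ref{cor_d_exp}, Cor. \ref{cor_dlog_log}, and Cor. \ref{cor_dlog_xi} to control $\ip{\d_\xi\log g_*^{-1}g}{\log g_*^{-1}g}=\ip{\log g_*^{-1}g}{\xi}$ and $\ip{\d_\xi\log g_*^{-1}g}{\xi}\le\norm{\xi}^2$). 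Collecting all contributions, the first-order-in-$h$ terms should reproduce the continuous-time computation — cross terms in $\ip{\xi}{\lt{g}\nabla U(g)}$ cancel, leaving $-\gamma h\norm{\xi_{k+1}}^2$ and $-\frac{\gamma h}{2}\ip{\log g_*^{-1}g_{k+1}}{\lt{g_{k+1}}\nabla U(g_{k+1})}$ as the leading dissipation — and the remaining $O(h^2)$ terms get grouped either into the stated $\norm{\nabla U(g_{k+1})}^2$ coefficient or absorbed when $h\le\frac{\gamma}{\gamma^2+L}$ (the standing hypothesis).

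Then, as in Lemma \ref{lemma_Lyap_ODE}, I would convert the dissipation bound into the relative form $-\bHB\LHB_{k+1}$: apply strong convexity (Eq. \eqref{eqn_strongly_convex_def} with Cor. \ref{cor_mu_strongly_convex_property}, split with a parameter $\lambda\in[0,1]$) to lower-bound $\ip{\log g_*^{-1}g_{k+1}}{\lt{g_{k+1}}\nabla U(g_{k+1})}$ by a combination of $U(g_{k+1})-U(g_*)$ and $\mu\, d^2(g_*,g_{k+1})$, and use Cauchy--Schwarz to upper-bound $\LHB_{k+1}$ by a weighted sum of the same three quantities $U(g_{k+1})-U(g_*)$, $\norm{\xi_{k+1}}^2$, $d^2(g_*,g_{k+1})$. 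Matching coefficients term by term yields $\bHB$ as the minimum of three ratios, which I expect to come out to $\min\{\frac{\gamma h}{8},\frac{\mu h(1-\gamma h)}{2\gamma},\frac{2\gamma h}{3(1-\gamma h)}\}$ after picking $\lambda$ appropriately (the $\frac{\gamma h}{8}$ and $\frac{2\gamma h}{3(1-\gamma h)}$ reflecting the kinetic/potential coefficients, the $\frac{\mu h(1-\gamma h)}{2\gamma}$ reflecting the distance coefficient). The main obstacle I anticipate is bookkeeping the cross-term increment $\norm{\frac{\gamma}{1-\gamma h}\log g_*^{-1}g_{k+1}+\xi_{k+1}}^2$: unlike the Euclidean or continuous case, $\log g_*^{-1}g$ does not update linearly, so I must carefully expand this square, isolate the genuinely first-order dissipation using the skew-adjointness corollaries, and show every leftover second-order term is either sign-definite or dominated under $h\le\frac{\gamma}{\gamma^2+L}$ — and in particular verify that the residual gradient-squared term has exactly the coefficient $\frac{h}{2(1-\gamma h)}(\frac{3\gamma}{4L}-\frac{h}{1-\gamma h})$ claimed, which requires keeping the $\frac{1}{2L}$-co-coercivity-type slack with the right constant rather than discarding it.
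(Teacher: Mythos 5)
Your overall skeleton — three-piece decomposition of $\LHB$, exploiting $g_{k+1}\exp(-h\xi_{k+1})=g_k$ to telescope the potential piece, expanding the kinetic and cross terms via the $\xi$-update, isolating first-order dissipation with the $\d\log$ corollaries, then converting to relative form by strong convexity and Cauchy--Schwarz — is the paper's strategy. But there is a substantive gap in how you bound the potential increment, and it propagates to the rest of the calculation.

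You propose to bound $U(g_k)-U(g_{k-1})$ with the $L$-smoothness inequality Cor.~\ref{cor_L_smooth_property} applied at $g_{k-1}$, yielding $h\ip{\xi_k}{\lt{g_{k-1}}\nabla U(g_{k-1})}+\tfrac{Lh^2}{2}\norm{\xi_k}^2$, ``exactly as in Thm.~\ref{thm_energy_HB}.'' That works for the energy-monotonicity theorem, which needs only global $L$-smoothness and produces no rate. It does \emph{not} work here, for two reasons. First, the gradient comes out evaluated at $g_{k-1}$, but the cross-term expansion $III_2$ produces $-\tfrac{h}{2(1-\gamma h)^2}\ip{\lt{g_k}\nabla U(g_k)}{\xi_{k+1}}$ with the gradient at $g_k$; the paper's proof lives or dies on the exact algebraic identities $\tfrac12 I_1+III_2=\tfrac{h^2}{2(1-\gamma h)^2}\norm{\lt{g_k}\nabla U(g_k)}^2$ and $\tfrac12 I_1+II_2+II_3+III_3=-\tfrac14\norm{\xi_{k+1}-\xi_k+\tfrac{h}{1-\gamma h}\lt{g_k}\nabla U(g_k)}^2\le 0$, and both require $I_1$ to carry $\lt{g_k}\nabla U(g_k)$, not $\lt{g_{k-1}}\nabla U(g_{k-1})$. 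With your version, the perfect squares do not form, and you are left trying to patch an index mismatch with another application of smoothness, producing yet more error terms. Second, your decomposition introduces a \emph{positive} $\tfrac{Lh^2}{2}\norm{\xi_k}^2$ term that has no counterpart in the paper and nowhere to go in the claimed bound (the residual in the lemma statement is a $\norm{\nabla U}^2$ coefficient, not a $\norm{\xi}^2$ coefficient). What the paper actually uses at this step is co-coercivity, Lemma~\ref{lemma_co_coercivity}, applied with $g=g_k$, $\hat g=g_{k-1}$: this gives the inner product with the gradient at $g_k$ \emph{and} a negative slack $-\tfrac{1}{2L(1-\gamma h)}\norm{\lt{g_k}\nabla U(g_k)-\lt{g_{k-1}}\nabla U(g_{k-1})}^2$, with no $O(Lh^2\norm{\xi}^2)$ leakage. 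Your closing sentence shows you sense that co-coercivity must enter somewhere, but you place it only in the final conversion step; it is also needed up front, in the potential increment, and that is where your plan diverges from a correct proof.

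A secondary issue: you propose to absorb leftover $O(h^2)$ terms ``when $h\le \tfrac{\gamma}{\gamma^2+L}$ (the standing hypothesis).'' That constraint is a hypothesis of Thm.~\ref{thm_energy_HB} and Cor.~\ref{cor_level_set_HB}, not of Lemma~\ref{lemma_lyap_HB_1}. The lemma is stated for arbitrary $h$, with all $h$-dependence pushed into the explicit coefficient $\tfrac{h}{2(1-\gamma h)}\bigl(\tfrac{3\gamma}{4L}-\tfrac{h}{1-\gamma h}\bigr)$ whose sign is only resolved downstream when $h$ is chosen. So you cannot quietly discard terms here by assuming $h$ is small; they must be accounted for in the stated coefficient, which again is what co-coercivity, rather than a crude step-size restriction, accomplishes.

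One more point worth flagging as a bookkeeping detail rather than an error: for the cross-term increment, the paper does not use a mean-value argument at an unknown intermediate point. It splits the difference of squares into a $g$-only increment (with $\xi_{k+1}$ held fixed) and a $\xi$-only increment (with $g_k$ held fixed), treats the $g$-only piece by the exact identity $\tfrac12\int_0^h \ip{\tfrac{d}{dt}(\cdot)}{(\cdot)}\,\d t$ along $g_t=g_k\exp(t\xi_{k+1})$, and then applies Cor.~\ref{cor_dlog_log} and Cor.~\ref{cor_dlog_xi} twice, nesting a second integral to handle $\ip{\xi_{k+1}}{\log g_*^{-1}g_t-\log g_*^{-1}g_k}$. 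Replacing this with a mean-value relation $\log g_*^{-1}g_{k+1}=\log g_*^{-1}g_k+h\,\d_{\tilde\xi}\log g_*^{-1}\tilde g$ would leave you with an evaluation at an uncontrolled intermediate point, which the corollaries as stated do not directly handle; the integral form is the cleaner route and the one the constants in the lemma are computed from.
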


\begin{proof}[Proof of Lemma \ref{lemma_lyap_HB_1}]
    For $(g_k,\xi_k)$ following the Heavy-Ball scheme Eq. \eqref{eqn_HB}, we use the shorthand notation of $\LHB_k:=\LHB(g_k, \xi_k)$, which gives
    \begin{equation}
        \label{eqn_lyap_HB_iter}
        \LHB_k=\frac{1}{1-\gamma h}\left(U(g_{k-1})-U(g_*)\right)+\frac{1}{4}\norm{\xi_k}^2+\frac{1}{4}\norm{\frac{\gamma}{1-\gamma h} \log g_*^{-1} g_k+\xi_k}^2
    \end{equation}
    Evaluate of the three terms in $\LHB_{k+1}-\LHB_k$ separately :
    
    \textbullet The first term: By co-coercivity in Lemma \ref{lemma_co_coercivity}, we have
    \begin{align*}
        &\frac{1}{1-\gamma h} \left(U(g_k)-U(g_{k-1})\right)\\
        &\le \frac{h}{1-\gamma h}\ip{\lt{g_k}\nabla U(g_k)}{\xi_k} \tag*{$I_1$}\\
        &-\frac{1}{2L}\frac{1}{1-\gamma h}\norm{\lt{g_k}\nabla U(g_k)-\lt{g_{k-1}}\nabla U(g_{k-1})}^2 \tag*{$I_2$}
    \end{align*}
    
    \textbullet The second term: Using Eq. \eqref{eqn_xi_update_alternate_HB}, we have
    
    \begin{align*}
        &\frac{1}{4}\left(\norm{\xi_{k+1}}^2-\norm{\xi_k}^2\right)\\
        &=\frac{1}{2}\ip{\xi_{k+1}-\xi_k}{\xi_{k+1}}-\frac{1}{4}\norm{\xi_{k+1}-\xi_k}^2\\
        &=-\frac{\gamma h}{2(1-\gamma h)} \norm{\xi_{k+1}}^2 \tag*{$II_1$}\\
        &-\frac{h}{2(1-\gamma h)}\ip{\xi_{k+1}}{\lt{g_k}L_ \nabla U(g_k)}\tag*{$II_2$}\\
        &-\frac{1}{4}\norm{\xi_{k+1}-\xi_k}^2 \tag*{$II_3$}
    \end{align*}
    
    \textbullet The third term: Define a parametric curve $[0, h]\to \G$ as $g_t:=g_k\exp(t\xi_{k+1})$
    \begin{align*}
        &\frac{1}{4}\left(\norm{\frac{\gamma}{1-\gamma h} \log g_*^{-1} g_{k+1}+ \xi_{k+1}}^2-\norm{\frac{\gamma}{1-\gamma h} \log g_*^{-1}g_k+ \xi_k}^2\right)\\
        &=\frac{1}{4}\left(\norm{\frac{\gamma}{1-\gamma h} \log g_*^{-1} g_{k+1}+ \xi_{k+1}}^2-\norm{\frac{\gamma}{1-\gamma h} \log g_*^{-1}g_k+ \xi_{k+1}}^2\right)\\
        &+\frac{1}{4}\left(\norm{\frac{\gamma}{1-\gamma h} \log g_*^{-1}g_k+ \xi_{k+1}}^2-\norm{\frac{\gamma}{1-\gamma h} \log g_*^{-1}g_k+ \xi_k}^2\right)\\
        &=\frac{1}{2}\int_0^h\ip{\frac{d}{dt}\left(\frac{\gamma}{1-\gamma h} \log g_*^{-1}g_t+ \xi_{k+1}\right)}{\frac{\gamma}{1-\gamma h} \log g_*^{-1}g_t+ \xi_{k+1}} dt\\
        &+\frac{1}{4}\left(\norm{\frac{\gamma}{1-\gamma h} \log g_*^{-1}g_k+ \xi_{k+1}}^2-\norm{\frac{\gamma}{1-\gamma h} \log g_*^{-1}g_k+ \xi_k}^2\right)
    \end{align*}
    We evaluate the two terms separately.
    \begin{align*}
        &\int_0^h\ip{\frac{d}{dt}\left(\frac{\gamma}{1-\gamma h} \log g_*^{-1}g_t+ \xi_{k+1}\right)}{\frac{\gamma}{1-\gamma h} \log g_*^{-1}g_t+ \xi_{k+1}} dt\\
        &=\int_0^h\ip{\frac{\gamma}{1-\gamma h}  \d_{\xi_{k+1}} \log g_*^{-1}g_t}{\frac{\gamma}{1-\gamma h} \log g_*^{-1}g_t+ \xi_{k+1}} dt\\
        &=\int_0^h\ip{\frac{\gamma}{1-\gamma h}  \d_{\xi_{k+1}} \log g_*^{-1}g_t}{\frac{\gamma}{1-\gamma h} \log g_*^{-1}g_t} dt\\
        &+\int_0^h\ip{\frac{\gamma}{1-\gamma h}  \d_{\xi_{k+1}} \log g_*^{-1}g_t}{ \xi_{k+1}} dt\\
        &\le\frac{\gamma^2}{(1-\gamma h)^2}\int_0^h\ip{\xi_{k+1}}{\log g_*^{-1}g_t} dt+\frac{\gamma h}{1-\gamma h}  \norm{ \xi_{k+1}}^2\\
        &=\frac{\gamma^2}{(1-\gamma h)^2}\int_0^h\ip{\xi_{k+1}}{\log g_*^{-1}g_t-\log g_*^{-1}g_k} dt+\frac{\gamma^2 h}{(1-\gamma h)^2}\ip{\xi_{k+1}}{\log g_*^{-1}g_k}+\frac{\gamma h}{1-\gamma h}  \norm{ \xi_{k+1}}^2\\
        &=\frac{\gamma^2}{(1-\gamma h)^2}\int_0^h\ip{\xi_{k+1}}{\int_0^t\d_{\xi_{k+1}}\log g_*^{-1}g_s ds} dt+\frac{\gamma^2 h}{(1-\gamma h)^2}\ip{\xi_{k+1}}{\log g_*^{-1}g_k}+\frac{\gamma h}{1-\gamma h}  \norm{ \xi_{k+1}}^2\\
        &=\frac{\gamma^2}{(1-\gamma h)^2}\int_0^h \int_0^t\ip{\xi_{k+1}}{\d_{\xi_{k+1}}\log g_*^{-1}g_s} ds dt+\frac{\gamma^2 h}{(1-\gamma h)^2}\ip{\xi_{k+1}}{\log g_*^{-1}g_k}+\frac{\gamma h}{1-\gamma h}  \norm{ \xi_{k+1}}^2\\
        &\le\frac{\gamma^2}{(1-\gamma h)^2}\int_0^h \int_0^t\norm{\xi_{k+1}}^2 ds dt+\frac{\gamma^2 h}{(1-\gamma h)^2}\ip{\xi_{k+1}}{\log g_*^{-1}g_k}+\frac{\gamma h}{1-\gamma h}  \norm{ \xi_{k+1}}^2\\
        &=\frac{\gamma^2 h^2}{2(1-\gamma h)^2}\norm{\xi_{k+1}}^2+\frac{\gamma^2 h}{(1-\gamma h)^2}\ip{\xi_{k+1}}{\log g_*^{-1}g_k}+\frac{\gamma h}{1-\gamma h}  \norm{ \xi_{k+1}}^2
    \end{align*}

    Using the $\xi$ update in Eq. \eqref{eqn_xi_update_alternate_HB}, we have
    \begin{align*}
        &\norm{\frac{\gamma}{1-\gamma h} \log g_*^{-1}g_{k+1}+ \xi_{k+1}}^2-\norm{\frac{\gamma}{1-\gamma h} \log g_*^{-1}g_k+ \xi_k}^2\\
        &=\ip{\xi_{k+1}-\xi_k}{\frac{2\gamma}{1-\gamma h} \log g_*^{-1}g_k+ \xi_{k+1}+ \xi_k}\\
        &=-\ip{\frac{\gamma h}{1-\gamma h}\xi_{k+1}+\frac{h}{1-\gamma h}\lt{g_k}\nabla U(g_k)}{\frac{2\gamma}{1-\gamma h} \log g_*^{-1}g_k+\frac{2-\gamma h}{1-\gamma h}\xi_{k+1}+ \frac{h}{1-\gamma h}\lt{g_k}\nabla U(g_k)}\\
        &=-\frac{2\gamma^2 h}{(1-\gamma h)^2}\ip{\xi_{k+1}}{\log g_*^{-1}g_k}-\frac{\gamma h(2-\gamma h)}{(1-\gamma h)^2}\norm{\xi_{k+1}}^2-\frac{2\gamma h}{(1-\gamma h)^2}\ip{\lt{g_k}\nabla U(g_k)}{\log g_*^{-1}g_k}\\
        &-\frac{2h}{(1-\gamma h)^2}\ip{\lt{g_k}\nabla U(g_k)}{\xi_{k+1}}-\frac{h^2}{(1-\gamma h)^2}\norm{\nabla U(g_k)}^2
    \end{align*}
    
    Sum them up, we have
    \begin{align*}
        &\frac{1}{4}\left(\norm{-\frac{\gamma}{1-\gamma h} \log g_{k+1}^{-1}g_*+ \xi_{k+1}}^2-\norm{-\frac{\gamma}{1-\gamma h} \log g_k^{-1}g_*+ \xi_k}^2\right)\\
        &\le -\frac{h}{2(1-\gamma h)^2}\ip{\lt{g_k}\nabla U(g_k)}{\xi_{k+1}}\tag*{$III_2$}\\
        &-\frac{\gamma h}{2(1-\gamma h)^2}\ip{\lt{g_k}\nabla U(g_k)}{\log g_*^{-1} g_k}\tag*{$III_1$}\\
        &-\frac{h^2}{4(1-\gamma h)^2}\norm{\nabla U(g_k)}^2\tag*{$III_3$}
    \end{align*}
    Take a closer look:
    \begin{align*}
        \frac{1}{2}I_1+III_2&=\frac{h}{2(1-\gamma h)}\ip{\lt{g_k}\nabla U(g_k)}{\xi_k}-\frac{h}{2(1-\gamma h)^2}\ip{\lt{g_k}\nabla U(g_k)}{\xi_{k+1}}\\
        &=\frac{h}{2(1-\gamma h)}\ip{\lt{g_k}\nabla U(g_k)}{\xi_k-\frac{1}{1-\gamma h}\xi_{k+1}}\\
        &=\frac{h^2}{2(1-\gamma h)^2}\norm{\lt{g_k}\nabla U(g_k)}^2
    \end{align*}
    \begin{align*}
        \frac{1}{2}I_1+II_2+II_3+III_3=-\frac{1}{4}\norm{\xi_{k+1}-\xi_{k}+\frac{h}{1-\gamma h}\lt{g_k}\nabla U(g_k)}^2\le 0
    \end{align*}
    Now we sum everything up
    \begin{align*}
        \LHB_{k+1}-\LHB_k\le -\frac{\gamma h}{2(1-\gamma h)}\left(\ip{\lt{g_k}\nabla U(g_k)}{\log g_*^{-1}g_k}+\norm{\xi_{k+1}}^2\right) + \frac{h^2}{2(1-\gamma h)}\norm{\nabla U(g_k)}^2
    \end{align*}
    By Eq. \eqref{eqn_strongly_convex_def} (strong convexity) and Eq. \eqref{eqn_L_smooth_property_3} (corollary of co-coercivity) of $U$, we have $U(g_k)-U(g_*)+\frac{\mu}{2}\norm{\log g_*^{-1}g_k}^2\le \ip{\lt{g_k}\nabla U(g_k)}{\log g_*^{-1}g_k}$
    \begin{align}
        \LHB_{k+1}-\LHB_k &\le -\frac{\gamma h}{2(1-\gamma h)}\left(\frac{1}{4}(U(g_k)-U(g_*))+\frac{\mu}{2}\norm{\log g_*^{-1}g_k}^2+ \norm{\xi_{k+1}}^2\right) \nonumber\\
        &-\frac{h}{2(1-\gamma h)}\left(\frac{3}{4}\gamma(U(g_k)-U(g_*))- \frac{h}{1-\gamma h}\norm{\nabla U(g_k)}^2\right) \nonumber\\
        &\le -\frac{\gamma h}{2(1-\gamma h)}\left(\frac{1}{4}(U(g_k)-U(g_*))+\frac{\mu}{2}\norm{\log g_*^{-1}g_k}^2+ \norm{\xi_{k+1}}^2\right) \label{eqn_pf_Lyap_HB_diff}\\
        &-\frac{h}{2(1-\gamma h)}\left(\frac{3\gamma}{4L}- \frac{h}{1-\gamma h}\right)\norm{\nabla U(g_k)}^2 \nonumber
    \end{align}
    
    Cauchy-Schwarz inequality gives 
    \begin{equation}
        \label{eqn_pf_Lyap_HB_bound}
        \LHB_{k+1}\le \frac{1}{1-\gamma h}\left(U(g_k)-U(g_*)\right)+\frac{\gamma^2}{2(1-\gamma h)^2} \norm{\log g_*^{-1}g_k}^2+\frac{3}{4}\norm{\xi_{k+1}}^2
    \end{equation}
    
    Comparing the coefficients in Eq. \eqref{eqn_pf_Lyap_HB_diff} and \eqref{eqn_pf_Lyap_HB_bound} gives us the desired result.
\end{proof}
\begin{lemma}[Monotonicity of the Lyapunov function for Heavy-Ball scheme]
\label{lemma_lyap_HB}
Assume the conditions in Lemma \ref{lemma_lyap_HB_1} is satisfied. By choosing $\gamma=2\sqrt{\mu}$ and step size $h = \frac{\sqrt{\mu}}{4L}$, we have $\bHB=\frac{\mu}{16L}$ and
    \begin{equation}
        \LHB_{k+1}\le \cHB \LHB_k
    \end{equation}
    by defining $\cHB:=(1+\bHB)^{-1}$.
\end{lemma}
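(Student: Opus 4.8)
The plan is to obtain this lemma as an almost immediate consequence of Lemma \ref{lemma_lyap_HB_1}, whose conclusion
$$\LHB_{k+1}-\LHB_k\le -\bHB\LHB_{k+1}-\frac{h}{2(1-\gamma h)}\left(\frac{3\gamma}{4L}-\frac{h}{1-\gamma h}\right)\norm{\nabla U(g_{k+1})}^2$$
already has the desired shape, once two elementary facts are checked at the parameter values $\gamma=2\sqrt\mu$, $h=\frac{\sqrt\mu}{4L}$: that the coefficient in front of $\norm{\nabla U}^2$ is nonnegative (so that term can be dropped), and that the minimum defining $\bHB$ equals $\frac{\mu}{16L}$. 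Everything analytic was already done in Lemma \ref{lemma_lyap_HB_1}; what remains is arithmetic.

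First I would record the bookkeeping quantities: $\gamma h=2\sqrt\mu\cdot\frac{\sqrt\mu}{4L}=\frac{\mu}{2L}$, and since $U$ is $L$-smooth and $\mu$-strongly convex on $S$ we have $\mu\le L$, hence $\gamma h\le\frac12$ and $1-\gamma h\in[\tfrac12,1)$. This is where $\mu\le L$ is genuinely used, and it is what keeps every denominator $1-\gamma h$ under control. With this in hand, $\frac{3\gamma}{4L}=\frac{3\sqrt\mu}{2L}$ while $\frac{h}{1-\gamma h}\le 2h=\frac{\sqrt\mu}{2L}$, so $\frac{3\gamma}{4L}-\frac{h}{1-\gamma h}\ge\frac{\sqrt\mu}{L}>0$; the last term in the displayed inequality is therefore $\le 0$ and may be discarded, leaving $\LHB_{k+1}-\LHB_k\le -\bHB\LHB_{k+1}$.

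Next I would evaluate $\bHB=\min\left\{\frac{\gamma h}{8},\,\frac{\mu h(1-\gamma h)}{2\gamma},\,\frac{2\gamma h}{3(1-\gamma h)}\right\}$ at these values: the first term is $\frac{\gamma h}{8}=\frac{\mu}{16L}$; the third is $\frac{2\gamma h}{3(1-\gamma h)}\ge\frac{2\gamma h}{3}=\frac{\mu}{3L}$; and the second is $\frac{\mu h(1-\gamma h)}{2\gamma}=\frac{\mu}{16L}(1-\gamma h)\ge\frac{\mu}{32L}$, of the same order as the first. Thus the minimum is $\frac{\mu}{16L}$ up to a universal constant, and one takes $\bHB=\frac{\mu}{16L}$ as in the statement (if one insists on the literal minimum one simply carries the harmless factor $1-\gamma h$). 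Finally, rearranging $\LHB_{k+1}-\LHB_k\le -\bHB\LHB_{k+1}$ gives $(1+\bHB)\LHB_{k+1}\le\LHB_k$, i.e. $\LHB_{k+1}\le(1+\bHB)^{-1}\LHB_k=\cHB\LHB_k$, which is the claim.

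There is no real obstacle: the only point deserving attention is making sure the chosen $h$ keeps $1-\gamma h$ bounded away from $0$. Every inequality above — the sign of the gradient coefficient, the lower bounds on the three candidates for $\bHB$, and the final rearrangement into a geometric contraction — tacitly relies on $1-\gamma h\in[\tfrac12,1)$, which is exactly why the standing hypothesis $\mu\le L$ is invoked at the outset.
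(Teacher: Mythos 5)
Your proposal is correct and follows the paper's (very terse) argument exactly: verify $\frac{3\gamma}{4L}-\frac{h}{1-\gamma h}\ge 0$ so the gradient term in Lemma \ref{lemma_lyap_HB_1} can be dropped, then rearrange $\LHB_{k+1}-\LHB_k\le -\bHB\LHB_{k+1}$ into a geometric contraction. You also correctly observe a small sloppiness the paper glosses over: with $\gamma=2\sqrt\mu$, $h=\frac{\sqrt\mu}{4L}$, the second candidate in the minimum defining $\bHB$ evaluates to $\frac{\mu}{16L}(1-\gamma h)$, so the literal $\bHB$ carries an extra factor $1-\gamma h\in[\tfrac12,1)$ and is slightly smaller than the stated $\frac{\mu}{16L}$; this only changes the constant, not the $\mu/L$ dependence, and your explicit use of $\mu\le L$ to pin down $1-\gamma h\ge\tfrac12$ is exactly the right bookkeeping.
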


\begin{proof}[Proof of Lemma \ref{lemma_lyap_HB}]
    By our choice of $h$ and $\gamma$ makes $\frac{3\gamma}{4L}- \frac{h}{1-\gamma h}\ge 0$, and the convergence rate by Lemma \ref{lemma_lyap_HB_1}.
\end{proof}

\begin{proof}[Proof of Thm. \ref{thm_convergence_rate_HB}]
This is a direct corollary of Lemma \ref{lemma_lyap_HB} and Cor. \ref{cor_level_set_HB}.
\end{proof}

\section{More details about NAG-SC discretization}
\begin{lemma}
\label{lemma_NAG_SC_1}
Assume $\ad_X$ is skew-adjoint for any $X\in \mathfrak{g}$. Suppose there is a geodesically convex set $S\subset\G$ satisfying:
\begin{itemize}
    \item $U$ is geodesically $\mu$-strongly convex on a convex set $S\subset\G$.
    \item $g_*$ is the minima of $U$ on $S$.
    \item $g_k \in S$ for all $t\ge 0$.
    \item $\log g_*^{-1} g$ and its differential is well-defined for all $g\in S$.
    \item $\max_{g\in S} d(g_*, g)\le \frac{a}{A}$ for some $a< 2\pi$.
\end{itemize}
Setting $h=\min\left\{\frac{1}{\sqrt{2L}}, \frac{1}{2p(a)}\right\}$ and $\gamma=2\sqrt{\mu}$, we have
\begin{align*}
    \LNAGSC_{k+1}-\LNAGSC_k\le \bNAGSC\LNAGSC_{k+1}
\end{align*}
for the contraction rate $\bNAGSC$ given by
\begin{equation}
    \label{eqn_b_NAGSC}
    \bNAGSC:=\frac{1}{30}\sqrt{\mu}\min\left\{\frac{1}{\sqrt{2L}}, \frac{1}{2p(a)}\right\}
\end{equation}
\end{lemma}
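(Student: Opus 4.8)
The plan is to run a discrete Lyapunov analysis parallel to Lemma \ref{lemma_lyap_HB_1}, while carefully bookkeeping the extra gradient correction in the NAG-SC update \eqref{eqn_NAG_SC} and the two additional terms of \eqref{eqn_lyap_NAG_SC}. The simplification that makes everything tractable is that the position update gives $g_k\exp(-h\xi_k)=g_{k-1}$, so along the NAG-SC iterates the ``troublesome'' quantity $U(g\exp(-h\xi))$ in the Lyapunov function is simply $U(g_{k-1})$, and $\lt{g\exp(-h\xi)}\nabla U(g\exp(-h\xi))$ is just $\lt{g_{k-1}}\nabla U(g_{k-1})$; writing $\nabla_j:=\lt{g_j}\nabla U(g_j)$, the momentum update reads $\xi_{k+1}=(1-\gamma h)\xi_k-h(2-\gamma h)\nabla_k+(1-\gamma h)h\,\nabla_{k-1}$, and I would also keep the inverted form expressing $\xi_k$ through $\xi_{k+1},\nabla_k,\nabla_{k-1}$ to substitute wherever $\xi_k$ appears, exactly as Eq. \eqref{eqn_xi_update_alternate_HB} is used in the Heavy-Ball proof.

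Next I would decompose $\LNAGSC_{k+1}-\LNAGSC_k$ into four pieces. \textbf{(i)} The potential difference $\frac{1}{1-\gamma h}(U(g_k)-U(g_{k-1}))$: apply co-coercivity (Lemma \ref{lemma_co_coercivity}) along the geodesic $g_{k-1}\to g_k=g_{k-1}\exp(h\xi_k)$, producing $\frac{h}{1-\gamma h}\ip{\nabla_{k-1}}{\xi_k}$ plus a favorable $-\frac{1}{2L(1-\gamma h)}\norm{\nabla_k-\nabla_{k-1}}^2$. \textbf{(ii)} The kinetic difference, via $\frac14(\norm{\xi_{k+1}}^2-\norm{\xi_k}^2)=\frac12\ip{\xi_{k+1}-\xi_k}{\xi_{k+1}}-\frac14\norm{\xi_{k+1}-\xi_k}^2$ and the momentum update. \textbf{(iii)} The mixed square $\frac14\norm{\xi+\frac{\gamma}{1-\gamma h}\log g_*^{-1}g+h\,\nabla(\cdot)}^2$: insert the intermediate vector $\xi_{k+1}+\frac{\gamma}{1-\gamma h}\log g_*^{-1}g_k+h\nabla_k$, so the increment splits into the change along $g_t:=g_k\exp(t\xi_{k+1})$, $t\in[0,h]$, from $g_k$ to $g_{k+1}$ (integrate $\frac{d}{dt}$ of the square, bounding $\ip{\d_{\xi_{k+1}}\log g_*^{-1}g_t}{\cdot}$ by Cor. \ref{cor_d_exp}, \ref{cor_dlog_log}, \ref{cor_dlog_xi} as in the Heavy-Ball third-term estimate), plus the change coming from replacing $(\xi_k,h\nabla_{k-1})$ by $(\xi_{k+1},h\nabla_k)$, expanded by polarization and the momentum update. \textbf{(iv)} The correction difference $-\frac{h^2(2-\gamma h)}{4(1-\gamma h)}(\norm{\nabla_k}^2-\norm{\nabla_{k-1}}^2)$, kept as is. As in Lemma \ref{lemma_lyap_HB_1} I expect many cross terms to reassemble into perfect squares $\le 0$ (a $-\frac14\norm{\xi_{k+1}-\xi_k+\cdots}^2$ built from $\frac12$(i) and pieces of (ii),(iii), and a collapse of $\frac12$(i) with a piece of (iii) into $\frac{h^2}{\cdots}\norm{\nabla_k}^2$, analogous to the combinations $\tfrac12 I_1+III_2$ and $\tfrac12 I_1+II_2+II_3+III_3$ there), leaving $-\frac{\gamma h}{2(1-\gamma h)}\big(\ip{\nabla_k}{\log g_*^{-1}g_k}+\norm{\xi_{k+1}}^2\big)$ up to $O(h^2)\norm{\nabla_k}^2$, $O(h^2)\norm{\nabla_{k-1}}^2$ terms and a curvature error controlled, via Cor. \ref{cor_norm_d_log_id}, by $q(a)$ times $O(h)\norm{\xi_{k+1}}^2$ (this is where $a<2\pi$ and $h\,p(a)\le\tfrac12$ enter).

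To close the estimate I would invoke strong convexity in the form $U(g_k)-U(g_*)+\frac\mu2\norm{\log g_*^{-1}g_k}^2\le\ip{\nabla_k}{\log g_*^{-1}g_k}$ (from \eqref{eqn_strongly_convex_def}), the co-coercivity consequence $\norm{\nabla_k-\nabla_{k-1}}\le Lh\norm{\xi_k}$ together with $\ip{\nabla_k-\nabla_{k-1}}{\xi_k}\ge\frac{1}{Lh}\norm{\nabla_k-\nabla_{k-1}}^2$, and the $L$-smoothness bound $\norm{\nabla_{k-1}}^2\le 2L(U(g_{k-1})-U(g_*))$ to absorb the residual gradient terms; fixing $\gamma=2\sqrt\mu$ and $h=\min\{1/\sqrt{2L},1/(2p(a))\}$ then makes every remaining coefficient the correct sign. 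A Cauchy--Schwarz upper bound on $\LNAGSC_{k+1}$ (analogous to Eq. \eqref{eqn_pf_Lyap_HB_bound}, together with the observation that $h\le 1/\sqrt{2L}$ forces the last two terms of $\LNAGSC_{k+1}$ to contribute nonnegatively, so $\LNAGSC_{k+1}$ is comparable to $(U(g_k)-U(g_*))+\gamma^2\norm{\log g_*^{-1}g_k}^2+\norm{\xi_{k+1}}^2$) then permits a coefficient comparison yielding the claimed contraction of $\LNAGSC$ with $\bNAGSC$ as in \eqref{eqn_b_NAGSC}.

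The main obstacle I anticipate is step (iii): coordinating the geodesic tracking of $\log g_*^{-1}g_t$ with the extra gradient term $h\,\nabla U(g\exp(-h\xi))$ sitting inside the square, and arranging that the $O(h^2)\norm{\nabla}^2$ debris cancels against the negative correction term $-\frac{h^2(2-\gamma h)}{4(1-\gamma h)}\norm{\nabla}^2$. That cancellation is exactly what buys the $\sqrt\kappa$ dependence in the Euclidean case; on a Lie group it is additionally polluted by the curvature factor $q(a)$/$p(a)$ coming through Cor. \ref{cor_norm_d_log_id}, which is precisely why the resulting rate carries a $1/p(a)$ and does not reduce to the Euclidean $1+C\sqrt{\mu/L}$.
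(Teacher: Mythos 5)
Your proposal matches the paper's argument in every essential respect: the same reduction $g_k\exp(-h\xi_k)=g_{k-1}$, the same four-way decomposition of $\LNAGSC_{k+1}-\LNAGSC_k$, the same use of co-coercivity (Lemma~\ref{lemma_co_coercivity}) for the potential increment, the same splitting of the mixed-square term into a geodesic integral from $g_k$ to $g_{k+1}$ plus a polarization step, the same appeal to Cor.~\ref{cor_dlog_log}, \ref{cor_dlog_xi}, \ref{cor_norm_d_log_id} to control $\d\log$, and the same closure via strong convexity, co-coercivity, and a Cauchy--Schwarz upper bound on $\LNAGSC_{k+1}$ followed by coefficient matching with parameters $\lambda,\lambda_1,\lambda_2$. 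The only cosmetic discrepancy is in your sketch of the curvature error term: in the paper it appears as $\frac{\gamma h^2}{2(1-\gamma h)}p(a)\norm{\nabla U(g_k)}\norm{\xi_{k+1}}$, a cross term later split by a parametrized Cauchy--Schwarz, rather than the pure $O(h)\norm{\xi_{k+1}}^2$ you describe — but this does not change the structure of the argument.
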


\begin{proof}[Proof of Lemma \ref{lemma_NAG_SC_1}]
    For $(g_k,\xi_k)$ following the NAG-SC scheme Eq. \eqref{eqn_NAG_SC}, we define the shorthand notation $\LNAGSC_k:=\LNAGSC(g_k, \xi_k)$, which gives
    \begin{equation}
        \label{eqn_lyap_NAG_SC_iter}
        \LNAGSC_k=\frac{1}{1-\gamma h}\left(U(g_{k-1})-U(g_*)\right)+\frac{1}{4}\norm{\xi_k}^2+\frac{1}{4}\norm{\xi_k+ \frac{\gamma}{1-\gamma h} \log g_*^{-1} g_k+ h\nabla U(g_{k-1})}^2-\frac{h^2(2-\gamma h)}{4(1-\gamma h)}\norm{\nabla U(g_{k-1})}^2
    \end{equation}

    Evaluate of the basic terms of $\LNAGSC_{k+1}-\LNAGSC_k$ first:
    
    \textbullet The first term: By the property of convex $L$-smooth functions in Eq. \eqref{eqn_co_coercivity}, 
    \begin{align*}
        &\frac{1}{1-\gamma h} \left(U(g_k)-U(g_{k-1})\right)\\
        &\le \frac{h}{1-\gamma h}\ip{\lt{g_k}\nabla U(g_k)}{\xi_k} \tag*{$I_1$}\\
        &-\frac{1}{2L}\frac{1}{1-\gamma h}\norm{\lt{g_k} \nabla U(g_k)-\lt{g_{k-1}}\nabla U(g_{k-1})}^2 \tag*{$I_2$}
    \end{align*}
    \textbullet The second term: NAG-SC scheme in Eq. \ref{eqn_NAG_SC} gives the following:
    \begin{align*}
        &\frac{1}{4}\left(\norm{\xi_{k+1}}^2-\norm{\xi_k}^2\right)\\
        &=\frac{1}{2}\ip{\xi_{k+1}-\xi_k}{\xi_{k+1}}-\frac{1}{4}\norm{\xi_{k+1}-\xi_k}^2\\
        &=-\frac{\gamma h}{2(1-\gamma h)} \norm{\xi_{k+1}}^2-\frac{h}{2}\ip{\lt{g_k} \nabla U(g_k)-\lt{g_{k-1}}\nabla U(g_{k-1})}{\xi_{k+1}}\\
        &-\frac{h}{2(1-\gamma h)} \ip{\lt{g_{k-1}}\nabla U(g_{k-1})}{\xi_{k+1}}-\frac{1}{4}\norm{\xi_{k+1}-\xi_k}^2\\
        &=-\frac{\gamma h}{2(1-\gamma h)} \norm{\xi_{k+1}}^2 \tag*{$II_1$}\\
        &-\frac{h(1-\gamma h)}{2}\ip{\lt{g_k} \nabla U(g_k)-\lt{g_{k-1}}\nabla U(g_{k-1})}{\xi_k}\tag*{$II_2$}\\
        &+\frac{h^2(1-\gamma h)}{2}\norm{\lt{g_k}\nabla U(g_k)-\lt{g_{k-1}} \nabla U(g_{k-1})}^2 \tag*{$II_3$}\\
        &+\frac{h^2}{2} \ip{\lt{g_k}\nabla U(g_k)-\lt{g_{k-1}}\nabla U(g_{k-1})}{\lt{g_k}\nabla U(g_k)} \tag*{$II_4$}\\
        &-\frac{h}{2(1-\gamma h)}\ip{\xi_{k+1}}{\lt{g_k}\nabla U(g_k)}\tag*{$II_5$}\\
        &-\frac{1}{4}\norm{\xi_{k+1}-\xi_k}^2 \tag*{$II_6$}
    \end{align*}
    
    \textbullet The third term: We consider the parametric curve on $\G$ connecting $g_k$ and $g_{k+1}$ defined by $g_t=g_k\exp(t\xi_{k+1}), t\in[0, h]$.
    \begin{align*}
        &\frac{1}{4}\norm{\xi_{k+1}+\frac{\gamma}{1-\gamma h} \log g_*^{-1}g_{k+2}+ h\lt{g_k}\nabla U(g_k)}^2-\frac{1}{4}\norm{\xi_k+\frac{\gamma}{1-\gamma h} \log g_*^{-1}g_k+ h\lt{g_{k-1}}\nabla U(g_{k-1})}^2\\
        &=\frac{1}{4}\norm{\xi_{k+1}+\frac{\gamma}{1-\gamma h} \log g_*^{-1}g_{k+2}+ h\lt{g_k}\nabla U(g_k)}^2-\frac{1}{4}\norm{\xi_{k+1}+\frac{\gamma}{1-\gamma h} \log g_*^{-1}g_k+ h\lt{g_k}\nabla U(g_k)}^2\\
        &+\frac{1}{4}\norm{\xi_{k+1}+\frac{\gamma}{1-\gamma h} \log g_*^{-1}g_k+ h\lt{g_k}\nabla U(g_k)}^2-\frac{1}{4}\norm{\xi_k+\frac{\gamma}{1-\gamma h} \log g_*^{-1}g_k+ h\lt{g_{k-1}}\nabla U(g_{k-1})}^2\\
        &=\frac{1}{2}\int_0^h \ip{\xi_{k+1}+\frac{\gamma}{1-\gamma h} \log g_*^{-1}g_t+ h\lt{g_k}\nabla U(g_k)}{\frac{\gamma}{1-\gamma h}\d_{\xi_{k+1}} \log g_*^{-1} g_t} \d t\\
        &+\frac{1}{4}\ip{\xi_{k+1}+\xi_k+\frac{2\gamma}{1-\gamma h} \log g_*^{-1}g_k+ h\lt{g_k}\nabla U(g_k)+h\lt{g_{k-1}}\nabla U(g_{k-1})}{\xi_{k+1}-\xi_k+h\lt{g_k}\nabla U(g_k)-h\lt{g_{k-1}}\nabla U(g_{k-1})}\\
        &=\frac{1}{2}\int_0^h \ip{\xi_{k+1}+\frac{\gamma}{1-\gamma h} \log g_*^{-1}g_t+ h\lt{g_k}\nabla U(g_k)}{\frac{\gamma}{1-\gamma h}\d_{\xi_{k+1}} \log g_*^{-1} g_t} \d t\\
        &+\frac{1}{4(1-\gamma h)^2}\ip{(2-\gamma h)\xi_{k+1}+2\gamma\log g_*^{-1}g_k+ (3-2\gamma h)h\lt{g_k}\nabla U(g_k)}{-\gamma h\xi_{k+1}-h\lt{g_k}\nabla U(g_k)}
    \end{align*}
    First, we estimate the term $\int_0^h \ip{\xi_{k+1}+\frac{\gamma}{1-\gamma h} \log g_*^{-1}g_t+ h\lt{g_k}\nabla U(g_k)}{\frac{\gamma}{1-\gamma h}\d_{\xi_{k+1}} \log g_*^{-1} g_t} \d t$ using the property of $\d\log$ in Cor. \ref{cor_dlog_log}, \ref{cor_dlog_xi} and \ref{cor_norm_d_log_id}.
    \begin{align}
        &\ip{\frac{\gamma}{1-\gamma h} \log g_*^{-1}g_t+ \xi_{k+1}+h\lt{g_k}\nabla U(g_k)}{\d_{\xi_{k+1}} \log g_*^{-1} g_t} \nonumber\\
        &=\ip{\frac{\gamma}{1-\gamma h} \log g_*^{-1}g_t+ \xi_{k+1}}{\d_{\xi_{k+1}} \log g_*^{-1} g_t} \nonumber\\
        &+h\ip{\lt{g_k}\nabla U(g_k)}{\xi_{k+1}}+h\ip{\lt{g_k}\nabla U(g_k)}{\d_{\xi_{k+1}} \log g_*^{-1} g_t-\xi_{k+1}} \nonumber\\
        &\le \frac{\gamma}{1-\gamma h} \ip{\log g_*^{-1}g_t}{\xi_{k+1}}+\norm{\xi_{k+1}}^2+h\ip{\lt{g_k}\nabla U(g_k)}{\xi_{k+1}}+p(a)h\norm{\nabla U(g_k)}\norm{\xi_{k+1}} \label{eqn_p_a_term}
    \end{align}
    Taking integral gives
    \begin{align*}
        &\int_0^h\ip{\frac{\gamma}{1-\gamma h} \log g_*^{-1}g_t+ \xi_{k+1}+h\lt{g_k}\nabla U(g_k)}{\d_{\xi_{k+1}} \log g_*^{-1} g_t} \d t\\
        &\le \int_0^h\frac{\gamma}{1-\gamma h} \ip{\log g_*^{-1}g_t}{\xi_{k+1}}+\norm{\xi_{k+1}}^2+h\ip{\lt{g_k}\nabla U(g_k)}{\xi_{k+1}}+p(a)h\norm{\nabla U(g_k)}\norm{\xi_{k+1}} \d t\\
        &=\frac{\gamma}{1-\gamma h} \int_0^h \ip{\log g_*^{-1}g_t}{\xi_{k+1}} \d t+h\norm{\xi_{k+1}}^2+h^2\ip{\lt{g_k}\nabla U(g_k)}{\xi_{k+1}}+h^2p(a)\norm{\nabla U(g_k)}\norm{\xi_{k+1}}
    \end{align*}
    We calculate the integral 
    \begin{align*}
        &\int_0^h \ip{\log g_*^{-1}g_t}{\xi_{k+1}} \d t\\
        &=\int_0^h \left[\ip{\log g_*^{-1}g_k}{\xi_{k+1}}+\int_0^t\ip{\d_{\xi_{k+1}}\log g_*^{-1}g_\tau}{\xi_{k+1}} \d \tau\right]\d t\\
        &\le h\ip{\log g_*^{-1}g_k}{\xi_{k+1}}+\int_0^h \int_0^t\norm{\xi_{k+1}}^2 \d \tau\d t\\
        &= h\ip{\log g_*^{-1}g_k}{\xi_{k+1}}+\frac{h^2}{2}\norm{\xi_{k+1}}^2
    \end{align*}
    which gives us
    \begin{align*}
        &\int_0^h \ip{\xi_{k+1}+\frac{\gamma}{1-\gamma h} \log g_*^{-1}g_t+ h\lt{g_k}\nabla U(g_k)}{\d_{\xi_{k+1}} \log g_*^{-1} g_t} \d t\\
        &\le \frac{\gamma h}{1-\gamma h}\ip{\log g_*^{-1}g_k}{\xi_{k+1}}+\frac{h(2-\gamma h)}{2(1-\gamma h)}\norm{\xi_{k+1}}^2\\
        &+h^2\ip{\lt{g_k}\nabla U(g_k)}{\xi_{k+1}}+h^2 p(a)\norm{\nabla U(g_k)}\norm{\xi_{k+1}}
    \end{align*}
    
    The other term 
    \begin{align*}
        &\ip{(2-\gamma h)\xi_{k+1}+2\gamma\log g_*^{-1}g_k+ (3-2\gamma h)h\lt{g_k}\nabla U(g_k)}{-\gamma h\xi_{k+1}-h\lt{g_k}\nabla U(g_k)}\\
        &=-\gamma h(2-\gamma h)\norm{\xi_{k+1}}^2-2\gamma^2 h\ip{\log g_*^{-1}g_k}{\xi_{k+1}}-\gamma h^2(3-2\gamma h)\ip{\lt{g_k}\nabla U(g_k)}{\xi_{k+1}}\\
        &- h(2-\gamma h)\ip{\lt{g_k}\nabla U(g_k)}{\xi_{k+1}}-2\gamma h\ip{\log g_*^{-1}g_k}{\lt{g_k}\nabla U(g_k)}- h^2(3-2\gamma h)\norm{\lt{g_k}\nabla U(g_k)}^2\\
        &=-\gamma h(2-\gamma h)\norm{\xi_{k+1}}^2-2\gamma^2 h\ip{\log g_*^{-1}g_k}{\xi_{k+1}}-2h(1+\gamma h-\gamma^2 h^2)\ip{\lt{g_k}\nabla U(g_k)}{\xi_{k+1}}\\
        &-2\gamma h\ip{\log g_*^{-1}g_k}{\lt{g_k}\nabla U(g_k)}- h^2(3-2\gamma h)\norm{\nabla U(g_k)}^2
    \end{align*}
    
    Summing them up, we have
    \begin{align*}
        &\norm{\xi_{k+1}+\frac{\gamma}{1-\gamma h} \log g_*^{-1}g_{k+1}+ h\lt{g_k}\nabla U(g_k)}^2-\norm{\xi_k+\frac{\gamma}{1-\gamma h} \log g_*^{-1}g_k+ h\lt{g_{k-1}}\nabla U(g_{k-1})}^2\\
        & \le \frac{2\gamma^2 h}{(1-\gamma h)^2}\ip{\log g_*^{-1}g_k}{\xi_{k+1}}+\frac{\gamma h(2-\gamma h)}{(1-\gamma h)^2}\norm{\xi_{k+1}}^2\\
        &+\frac{2\gamma h^2}{1-\gamma h}\ip{\lt{g_k}\nabla U(g_k)}{\xi_{k+1}}+\frac{2\gamma h^2}{1-\gamma h}p(a)\norm{\nabla U(g_k)}\norm{\xi_{k+1}}\\
        & -\frac{\gamma h(2-\gamma h)}{(1-\gamma h)^2}\norm{\xi_{k+1}}^2-\frac{2\gamma^2 h}{(1-\gamma h)^2}\ip{\log g_*^{-1}g_k}{\xi_{k+1}}-\frac{2h(1+\gamma h-\gamma^2 h^2)}{(1-\gamma h)^2}\ip{\lt{g_k}\nabla U(g_k)}{\xi_{k+1}}\\
        &-\frac{2\gamma h}{(1-\gamma h)^2}\ip{\log g_*^{-1}g_k}{\lt{g_k}\nabla U(g_k)}-\frac{ h^2(3-2\gamma h)}{(1-\gamma h)^2}\norm{\lt{g_k}\nabla U(g_k)}^2\\
        & \le \frac{2\gamma h^2}{1-\gamma h}p(a)\norm{\nabla U(g_k)}\norm{\xi_{k+1}}-\frac{2h}{(1-\gamma h)^2}\ip{\lt{g_k}\nabla U(g_k)}{\xi_{k+1}}\\
        &-\frac{2\gamma h}{(1-\gamma h)^2}\ip{\log g_*^{-1}g_k}{\lt{g_k}\nabla U(g_k)}-\frac{ h^2(3-2\gamma h)}{(1-\gamma h)^2}\norm{\nabla U(g_k)}^2
    \end{align*}
    Eventually, the third term becomes
    \begin{align*}
        &\frac{1}{4}\norm{\xi_{k+1}+\frac{\gamma}{1-\gamma h} \log g_*^{-1}g_{k+1}+ h\lt{g_k}\nabla U(g_k)}^2-\frac{1}{4}\norm{\xi_k+\frac{\gamma}{1-\gamma h} \log g_*^{-1}g_k + h\lt{g_{k-1}}\nabla U(g_{k-1})}^2\\
        &=-\frac{h\gamma}{2(1-\gamma h)^2}\ip{\lt{g_k}\nabla U(g_k)}{\log g_*^{-1} g_k}\tag*{$III_1$}\\
        &-\frac{h}{2(1-\gamma h)^2}\ip{\lt{g_k}\nabla U(g_k)}{\xi_{k+1}}\tag*{$III_2$}\\
        &-\frac{h^2}{2(1-\gamma h)}\norm{\lt{g_k}\nabla U(g_k)}^2\tag*{$III_3$}\\
        &-\frac{h^2}{4(1-\gamma h)^2}\norm{\lt{g_k}\nabla U(g_k)}^2\tag*{$III_4$}\\
        &+\frac{\gamma h^2}{2(1-\gamma h)}p(a)\norm{\nabla U(g_k)}\norm{\xi_{k+1}} \tag*{extra term from curvature}
    \end{align*}
    We sum the all the terms up to get
    \begin{align*}
        &\LNAGSC_{k+1}-\LNAGSC_k\\
        &\le -\frac{\gamma h}{2(1-\gamma h)} \left(\frac{1}{1-\gamma h}\ip{\lt{g_k}\nabla U(g_k)}{\log g_*^{-1} g_k}+\norm{\xi_{k+1}}^2\right) \tag*{$II_1+III_1$}\\
        &-\frac{h}{2(1-\gamma h)}\ip{\lt{g_k}\nabla U(g_k)}{\frac{1}{1-\gamma h}\xi_{k+1}-\xi_k+h\lt{g_k}\nabla U(g_k)} \tag*{$\frac{1}{2}I_1+III_2+III_3$}\\
        &-\frac{h(1-\gamma h)}{2}\ip{\lt{g_k}\nabla U(g_k)-\lt{g_{k-1}} \nabla U(g_{k-1})}{\xi_k}\tag*{$II_2$}\\
        &+\frac{h^2}{2} \ip{\lt{g_k} \nabla U(g_k)-\lt{g_{k-1}}\nabla U(g_{k-1})}{\lt{g_k}\nabla U(g_k)} \tag*{$II_4$}\\
        &-\frac{1}{4}\left(\frac{2h}{1-\gamma h}\ip{\xi_{k+1}-\xi_k}{\lt{g_k}\nabla U(g_k)}+\norm{\xi_{k+1}-\xi_k}^2+\frac{h^2}{(1-\gamma h)^2}\norm{\nabla U(g_k)}^2\right)\tag*{$\frac{1}{2}I_1+II_5+II_6+III_4$}\\
        &-\frac{1}{2}\left(\frac{1}{L}\frac{1}{1-\gamma h}-h^2(1-\gamma h)\right)\norm{\lt{g_k}L_ \nabla U(g_k)-\lt{g_{k-1}}\nabla U(g_{k-1})}^2 \tag*{$I_2+II_3$}\\
        &-\frac{h^2(2-\gamma h)}{4(1-\gamma h)}\left(\norm{\nabla U(g_k)}^2-\norm{\nabla U(g_{k-1})}^2\right) \tag*{Additional term}\\
        &+\frac{\gamma h^2}{2(1-\gamma h)}p(a)\norm{\nabla U(g_k)}\norm{\xi_{k+1}} \tag*{extra term from curvature}
    \end{align*}
    \begin{align*}
        &\frac{1}{2}I_1+III_2+III_3\\
        &=\frac{h^2}{2(1-\gamma h)}\ip{\lt{g_k}\nabla U(g_k)-\lt{g_{k-1}}\nabla U(g_{k-1})}{\lt{g_k}\nabla U(g_k)} \tag*{$IV_1$}\\
        &+\frac{\gamma h^3}{2(1-\gamma h)^2}\norm{\lt{g_k}\nabla U(g_k)}^2 \tag*{$IV_2$}
    \end{align*}
    \begin{align*}
        &\LNAGSC_{k+1}-\LNAGSC_k\\
        &\le \frac{\gamma h}{2(1-\gamma h)}\left(\frac{1}{1-\gamma h}\left(\ip{\lt{g_k}\nabla U(g_k)}{\log g_*^{-1}g_k}-h^2\norm{\nabla U(g_k)}^2\right)+\norm{\xi_{k+1}}^2\right) \tag*{$II_1+III_1+IV_2$}\\
        &-\frac{1-\gamma h}{2}\ip{\lt{g_k}\nabla U(g_k)-\lt{g_{k-1}}\nabla U(g_{k-1})}{\log g_{k-1}^{-1} g_k} \tag*{$II_2$}\\
        &+\frac{h^2}{2}\frac{2-\gamma h}{1-\gamma h}\ip{\lt{g_k}\nabla U(g_k)-\lt{g_{k-1}}\nabla U(g_{k-1})}{\lt{g_{k-1}}\nabla U(g_{k-1})} \tag*{$II_4+IV_1$}\\
        &-\frac{1}{2}\left(\frac{1}{L}\frac{1}{1-\gamma h}-h^2(1-\gamma h)\right)\norm{\lt{g_k}\nabla U(g_k)-\lt{g_{k-1}}\nabla U(g_{k-1})}^2 \tag*{$I_2+II_3$}\\
        &-\frac{h^2}{4}\frac{2-\gamma h}{1-\gamma h}\left(\norm{\nabla U(g_k)}^2-\norm{\nabla U(g_{k-1})}^2\right) \tag*{Additional term}\\
        &+\frac{\gamma h^2}{2(1-\gamma h)}p(a)\norm{\nabla U(g_k)}\norm{\xi_{k+1}} \tag*{extra term from curvature}
    \end{align*}
    \begin{align*}
        II_4+IV_1+\text{Additional term} =\frac{h^2}{4}\frac{2-\gamma h}{1-\gamma h}\norm{\lt{g_k}\nabla U(g_k)-\lt{g_{k-1}}\nabla U(g_{k-1})}^2
    \end{align*}
    
    \begin{align*}
        &(II_4+IV_1+\text{Additional term}) + (I_2+II_3)\\
        &\le \frac{h^2}{2}\left(1-\gamma h + \frac{1}{1-\gamma h} - \frac{1}{1-\gamma h}\frac{1}{Lh^2}\right)\norm{\lt{g_k}\nabla U(g_k)-\lt{g_{k-1}}\nabla U(g_{k-1})}^2
    \end{align*}
    \begin{align*}
        & \LNAGSC_{k+1}-\LNAGSC_k\\
        &\le -\frac{\gamma h}{2(1-\gamma h)}\left(\frac{1}{1-\gamma h}\left(\ip{\lt{g_k}\nabla U(g_k)}{\log g_*^{-1}g_k}-h^2\norm{\nabla U(g_k)}^2\right)+\norm{\xi_{k+1}}\right)\\
        &-\frac{1-\gamma h}{2}\ip{\lt{g_k}\nabla U(g_k)-\lt{g_{k-1}}\nabla U(g_{k-1})}{\log g_{k-1}^{-1}g_k}\\
        &+\frac{h^2}{2}\left(1-\gamma h + \frac{1}{1-\gamma h} - \frac{1}{1-\gamma h}\frac{1}{Lh^2}\right)\norm{\lt{g_k}\nabla U(g_k)-\lt{g_{k-1}}\nabla U(g_{k-1})}^2\\
        &+\frac{\gamma h^2}{2(1-\gamma h)}p(a)\norm{\lt{g_k}\nabla U(g_k)}\norm{\xi_{k+1}} \tag*{extra term from curvature}
    \end{align*}
    By the property of $L$-smoothness in Eq. \eqref{eqn_L_smooth_property_3}, we have 
    \begin{align*}
        \ip{\lt{g_k}\nabla U(g_k)-\lt{g_{k-1}}\nabla U(g_{k-1})}{\log g_{k-1}^{-1}g_k}\ge\frac{1}{L}\norm{\lt{g_k}\nabla U(g_k)-\lt{g_{k-1}}\nabla U(g_{k-1})}^2
    \end{align*}
    and
    \begin{align*}
        \LNAGSC_{k+1}-\LNAGSC_k\le &-\frac{\gamma h}{2(1-\gamma h)}\left(\frac{1}{1-\gamma h}\left(\ip{\lt{g_k}\nabla U(g_k)}{\log g_*^{-1} g_k}-h^2\norm{\nabla U(g_k)}^2\right)+\norm{\xi_{k+1}}^2\right)\\
        &-\frac{1}{2}\left(1-\gamma h + \frac{1}{1-\gamma h}\right)\left(\frac{1}{L}-h^2\right)\norm{\lt{g_k}\nabla U(g_k)-\lt{g_{k-1}}\nabla U(g_{k-1})}^2\\
        &+\frac{\gamma h^2}{2(1-\gamma h)}p(a)\norm{\nabla U(g_k)}\norm{\xi_{k+1}}
    \end{align*}
    
    When $h\le \frac{1}{\sqrt{2L}}$, together with Lemma \ref{lemma_NAG_SC_1} and Cauchy-Schwarz inequality, we have
    \begin{align*}
        & \LNAGSC_{k+1}-\LNAGSC_k\\
        &\le -\frac{\gamma h}{2(1-\gamma h)}\left(\frac{1}{1-\gamma h}\left(\ip{\lt{g_k}\nabla U(g_k)}{\log g_*^{-1} g_k}-h^2\norm{\nabla U(g_k)}^2\right)+\norm{\xi_{k+1}}^2\right)\\
        &+\frac{\gamma h^2}{2(1-\gamma h)}p(a)\norm{\nabla U(g_k)}\norm{\xi_{k+1}}\\
        &=-\frac{\gamma h}{2(1-\gamma h)}\left(\frac{1}{1-\gamma h}\left(\ip{\lt{g_k}\nabla U(g_k)}{\log g_*^{-1} g_k}-h^2\norm{\nabla U(g_k)}^2\right)+\norm{\xi_{k+1}}^2\right)\\
        &+\frac{\gamma h^2}{4(1-\gamma h)}p(a)\left(\lambda\norm{\nabla U(g_k)}^2+\lambda^{-1}\norm{\xi_{k+1}}^2\right)
    \end{align*}
    where $\lambda>0$ is a parameter to be chosen later. Using property of $L$-smoothness in Eq. \eqref{eqn_co_coercivity}, \eqref{eqn_L_smooth_property_3} and property of $\mu$-strong convexity in Eq. \eqref{eqn_property_mu_strongly_convex}, we have for any $\lambda_1\ge 0$ and $\lambda_2\ge 1$ satisfying $\lambda_1+\lambda_2\ge 1$,
    \begin{align}
        &\frac{2(1-\gamma h)^2}{\gamma h} \left(\LNAGSC_{k+1}-\LNAGSC_k\right) \nonumber\\
        &\le -\ip{\lt{g_k}\nabla U(g_k)}{\log g_*^{-1} g_k} \nonumber\\
        &+h^2\norm{\nabla U(g_k)}^2-(1-\gamma h)\norm{\xi_{k+1}}^2+\frac{h(1-\gamma h)\lambda p(a)}{2}\norm{\nabla U(g_k)}^2+ \frac{h(1-\gamma h)\lambda^{-1}p(a)}{2}\norm{\xi_{k+1}}^2  \nonumber\\
        &\le -(2-\lambda_1-\lambda_2)(U(g_k)-U(g_*))-\frac{\lambda_1}{2L}\norm{\nabla U(g_k)}^2-\frac{\mu\lambda_2}{2}d^2(g, g_*) \nonumber\\
        &+h^2\norm{\nabla U(g_k)}^2-(1-\gamma h)\norm{\xi_{k+1}}^2+\frac{h(1-\gamma h)\lambda p(a)}{2}\norm{\nabla U(g_k)}^2+ \frac{h(1-\gamma h)\lambda^{-1} p(a)}{2}\norm{\xi_{k+1}}^2  \nonumber\\
        &\le -(2-\lambda_1-\lambda_2)(U(g_k)-U(g_*))+\left(\frac{h(1-\gamma h)\lambda p(a)}{2}+h^2-\frac{\lambda_1}{2L}\right)\norm{\nabla U(g_k)}^2-\frac{\mu\lambda_2}{2}d^2(g, g_*) \nonumber\\
        &-(1-\gamma h)\left(1-\frac{h\lambda^{-1} p(a)}{2}\right)\norm{\xi_{k+1}}^2 \label{eqn_lyap_diff_NAGSC}
    \end{align}
    
    Now we try to upper bound $\LNAGSC_k$. Using Cauchy-Schwarz inequality, 
    \begin{align*}
        &\norm{\xi_k+\frac{2\gamma}{1-\gamma h}\log g_*^{-1} g_k+h\lt{g_{k-1}}\nabla U(g_{k-1})}^2\\
        & \le 3\left[\norm{\xi_k}^2+\left(\frac{2\gamma}{1-\gamma h}\right)^2d^2(g_*, g_k)+h^2\norm{\nabla U(g_{k-1})}^2\right]\\
        & \le 3\left[\norm{\xi_k}^2+\left(\frac{2\gamma}{1-\gamma h}\right)^2\left(d(g_*, g_{k})+h\norm{\xi_k}\right)^2+h^2\norm{\nabla U(g_{k-1})}^2\right]\\
        & \le 3\left[\left(1+2\left(\frac{2\gamma h}{1-\gamma h}\right)^2 \right)\norm{\xi_k}^2+2\left(\frac{2\gamma}{1-\gamma h}\right)^2d^2(g_*, g_{k})+h^2\norm{\nabla U(g_{k-1})}^2\right]
    \end{align*}
    As a result, 
    \begin{align*}
    &\LNAGSC_k\\
    &\le\frac{1}{1-\gamma h}\left(U(g_{k-1})-U(g_*)\right)+\left(\frac{1}{4}+\frac{3}{4}+\frac{6\gamma^2 h^2}{(1-\gamma h)^2} \right)\norm{\xi_k}^2\\
    &+\frac{6\gamma^2}{(1-\gamma h)^2} d^2(g_*, g_{k})+\left(\frac{3}{4}-\frac{(2-\gamma h)}{4(1-\gamma h)}\right)h^2\norm{\nabla U(g_{k-1})}^2\\
    &=\frac{1}{1-\gamma h}\left(U(g_{k-1})-U(g_*)\right)+\frac{1-2\gamma h+7\gamma^2 h^2}{(1-\gamma h)^2}\norm{\xi_k}^2+\frac{6\gamma^2}{(1-\gamma h)^2} d^2(g_*, g_{k})+\frac{1-2\gamma h}{4(1-\gamma h)}h^2\norm{\nabla U(g_{k-1})}^2\\
    & \le \left(\frac{1}{1-\gamma h}+\frac{1-2\gamma h}{4(1-\gamma h)}\frac{h^2}{2L}\right)\left(U(g_{k-1})-U(g_*)\right)+\frac{1-2\gamma h+7\gamma^2 h^2}{4(1-\gamma h)^2}\norm{\xi_k}^2+\frac{3\gamma^2}{2(1-\gamma h)^2} d^2(g_*, g_{k})\\
    & \le \frac{5-2\gamma h}{4(1-\gamma h)}\left(U(g_{k-1})-U(g_*)\right)+\frac{1-2\gamma h+7\gamma^2 h^2}{(1-\gamma h)^2}\norm{\xi_k}^2+\frac{3\gamma^2}{2(1-\gamma h)^2} d^2(g_*, g_{k})
    \end{align*}
    which is same as
    \begin{align}
        &2(1-\gamma h)\LNAGSC_k \nonumber\\
        &\le \frac{5-2\gamma h}{2}\left(U(g_{k-1})-U(g_*)\right)+\frac{2(1-2\gamma h+7\gamma^2 h^2)}{1-\gamma h}\norm{\xi_k}^2+\frac{3\gamma^2}{1-\gamma h} d^2(g_*, g_{k}) \label{eqn_LNAGSC_upper_bound}
    \end{align}
    We suppose 
    \begin{align}
        \label{eqn_cond_lambda1}
        \frac{h(1-\gamma h)\lambda p(a)}{2}+h^2-\frac{\lambda_1}{2L}\le 0
    \end{align}
    
    As a result, by comparing the parameters in Eq. \eqref{eqn_LNAGSC_upper_bound} and \eqref{eqn_lyap_diff_NAGSC}, we have the contraction rate is
    \begin{align*}
        &\bNAGSC\ge\frac{\gamma h}{1-\gamma h}\min\left\{\frac{2(2-\lambda_1-\lambda_2)}{5-2\gamma h}, \left(1-\frac{h\lambda^{-1} p(a)}{2}\right)\frac{(1-\gamma h)^2}{2(1-2\gamma h+7\gamma^2 h^2)}, \frac{\mu\lambda_2(1-\gamma h)}{6\gamma^2}\right\}\\
        &=\gamma h\min\left\{\frac{2(2-\lambda_1-\lambda_2)}{(5-2\gamma h)(1-\gamma h)}, \left(1-\frac{h\lambda^{-1} p(a)}{2}\right)\frac{1-\gamma h}{2(1-2\gamma h+7\gamma^2 h^2)}, \frac{\mu\lambda_2}{6\gamma^2}\right\}
    \end{align*}
    
    Now we try to give a lower bound for the contraction rate $c$ upon a set of parameters $(\lambda, \lambda_1, \lambda_2)$. By assuming $\gamma h\le \frac{1}{7}$, we have
    \begin{align*}
        \bNAGSC\ge\gamma h\min\left\{2(2-\lambda_1-\lambda_2), \frac{1}{2}\left(1-\frac{h\lambda^{-1} p(a)}{2}\right), \frac{\mu\lambda_2}{6\gamma^2}\right\}
    \end{align*}
    Choosing 
    \begin{equation*}
        \lambda_1 = 2L\left(\frac{h\lambda p(a)}{2}+h^2\right)
    \end{equation*}
    to make Eq. \eqref{eqn_cond_lambda1} satisfied. By assuming $h\le \frac{1}{2L}$, we choose
    \begin{align*}
        \lambda_2=\frac{1-h\lambda p(a)}{1+\frac{\mu}{12\gamma^2}}
    \end{align*}
    to make $2(2-\lambda_1-\lambda_2)\ge\frac{\mu\lambda_2}{6\gamma^2}$. Then we have
    \begin{align*}
        \bNAGSC&\ge\gamma h\min\left\{\frac{1}{2}\left(1-\frac{h\lambda^{-1} p(a)}{2}\right), \frac{\mu}{6(\gamma^2+\mu)}\left(1-h\lambda p(a)\right)\right\}\\
        &\ge 2\sqrt{\mu} h\min\left\{\frac{1}{2}\left(1-\frac{h\lambda^{-1} p(a)}{2}\right), \frac{1}{30}\left(1-h\lambda p(a)\right)\right\}\\
        &\ge 2\sqrt{\mu} h\min\left\{\frac{1}{2}\left(1-\frac{h p(a)}{2}\right), \frac{1}{30}\left(1-h p(a)\right)\right\}
    \end{align*}
    by choosing $\gamma=2\sqrt{\mu}$, same as continuous case and simply choose $\lambda=1$. 
    \begin{align*}
        \bNAGSC\ge 2\sqrt{\mu} \min\left\{\frac{1}{4}h(2-h p(a)), \frac{1}{30}h(1-h p(a))\right\}
    \end{align*}
    Setting $h=\min\left\{\frac{1}{\sqrt{2L}}, \frac{1}{2p(a)}\right\}$, we have
    \begin{align*}
        \bNAGSC&\ge 2\sqrt{\mu} \frac{h}{60}\\
        &=\frac{1}{30}\sqrt{\mu}\min\left\{\frac{1}{\sqrt{2L}}, \frac{1}{2p(a)}\right\}
    \end{align*}
    which gives us the desired result.
\end{proof}

\begin{corollary}
\label{cor_level_set_NAG_SC}
    If the iteration for NAG-SC is initialized by $g_0\in S_0$, with  $(g_0, \xi_0)$ satisfying
    \begin{align*}
        \ENAGSC(g_0, \xi_0)\le (1-\gamma h)u
    \end{align*}
    for some $u$, with the $u$ sub-level set of $U$ satisfying Eq. \eqref{eqn_seperation_level_set_NAG_SC}. Then we have $g_k\in S_0$ for any $k$ for the NAG-SC scheme Eq. \eqref{eqn_NAG_SC} when  $h\le \sqrt{\frac{1}{2L}}$.
\end{corollary}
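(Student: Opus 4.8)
The statement is the NAG--SC counterpart of Corollary~\ref{cor_level_set_HB}, and I would prove it by the same induction on $k$: assume $g_0,\dots,g_k\in S_0$ and deduce $g_{k+1}\in S_0$. Since the update in \eqref{eqn_NAG_SC} is $g_{k+1}=g_k\exp(h\xi_{k+1})$, one has $d(g_{k+1},g_k)\le h\norm{\xi_{k+1}}$, so at the inductive step it suffices to verify two things: (a) $h\norm{\xi_{k+1}}$ is strictly below the separation gap $d(S_0,S-S_0)$, and (b) $g_{k+1}$ still lies in the $u$-sub-level set of $U$. Then $g_{k+1}$ is just a short move off $g_k\in S_0\subset S$, so it stays in $S$; it cannot sit in $S-S_0$ by (a), and cannot leave the sub-level set by (b); hence it lies in $S_0$, which keeps all iterates in $S$ and feeds the next step of the induction.

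For both (a) and (b) the natural instrument is the modified energy $\ENAGSC$ of \eqref{eqn_energy_NAG_SC}, not $\LNAGSC$: in the iterate form \eqref{eqn_lyap_NAG_SC_iter} the Lyapunov function $\LNAGSC_{k+1}$ still carries a $\log g_*^{-1}g_{k+1}$ term, so it is not even defined until $g_{k+1}\in S$ is known, whereas $\ENAGSC(g_{k+1},\xi_{k+1})$ is built only from $\xi_{k+1}$ and from $U,\nabla U$ evaluated along the orbit, all globally defined. The ingredient I would establish (or import from the small-step analysis of Sec.~\ref{sec_failure_energy_NAG_SC}) is that $\ENAGSC$ is non-increasing along NAG--SC once $h\le 1/\sqrt{2L}$; the descent is obtained by a computation parallel to the proof of Thm.~\ref{thm_energy_HB}, now additionally carrying the extra gradient-difference term of the NAG--SC update and controlling $\norm{\lt{g_k}\nabla U(g_k)-\lt{g_{k-1}}\nabla U(g_{k-1})}\le Lh\norm{\xi_k}$ by $L$-smoothness. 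Granting this, the inductive hypothesis $g_0,\dots,g_k\in S$ yields $\ENAGSC(g_{k+1},\xi_{k+1})\le\cdots\le\ENAGSC(g_0,\xi_0)\le(1-\gamma h)u$.

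From this single inequality I would extract both bounds. Its kinetic part controls $\norm{\xi_{k+1}}$, hence $h\norm{\xi_{k+1}}<d(S_0,S-S_0)$ through the separation hypothesis \eqref{eqn_seperation_level_set_NAG_SC}, giving (a). Its potential part gives $U(g_{k+1})\le u$ — directly, or via one extra application of $U$-smoothness from $g_k$ using the slack coming from the factor $(1-\gamma h)<1$ — once the negative $-\norm{\nabla U}^2$ contribution in $\ENAGSC$ is absorbed by the smoothness inequality $\norm{\nabla U(g)}^2\le 2L\bigl(U(g)-\inf_\G U\bigr)$; this last step is exactly where $h\le 1/\sqrt{2L}$ enters, since it makes the coefficient of $-\norm{\nabla U}^2$ small enough to be dominated. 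That gives (b) and closes the induction.

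The main obstacle is the one the paper itself flags: NAG--SC does not obviously admit a monotone modified energy, so the crux is (i) pinning down the right $\ENAGSC$ and the precise step restriction under which it decreases, and (ii) juggling $L$-smoothness twice — to tame the gradient-difference term inside the descent computation, and to dominate the $-\norm{\nabla U}^2$ term in $\ENAGSC$ when reading off the $U$- and $\xi$-bounds. A lesser but real care is sequencing the induction so that monotonicity is only invoked across transitions whose source $g_j$ is already known to be in $S$; this is precisely why the $\log g_*^{-1}g$-free energy $\ENAGSC$ (rather than $\LNAGSC$ or Lemma~\ref{lemma_NAG_SC_1}) is the tool for the trapping, and why the curvature/diameter restriction appearing in Thm.~\ref{thm_convergence_rate_NAG_SC} is not needed for this corollary.
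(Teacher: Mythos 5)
Your diagnosis of a subtlety in the paper's argument --- that $\LNAGSC_{k+1}$ in the iterate form \eqref{eqn_lyap_NAG_SC_iter} carries $\log g_*^{-1}g_{k+1}$, so is not a priori defined before $g_{k+1}\in S$ is established --- is a fair observation. But the repair you propose does not work, and the paper itself flags why in Remark~\ref{rmk_failure_energy_NAG_SC}: the modified energy $\ENAGSC$ is \emph{not} monotone under $h\le 1/\sqrt{2L}$. Theorem~\ref{thm_energy_NAG_SC} proves $\ENAGSC$ non-increasing only for $h\le\min\left\{1/\gamma,\,\gamma/(2L)\right\}$, which with $\gamma=2\sqrt{\mu}$ is $h\lesssim\sqrt{\mu}/L$; in the ill-conditioned regime this is far smaller than the accelerating step $1/\sqrt{2L}$ that the corollary assumes. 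The obstruction is structural: with only $L$-smoothness the one-step expansion leaves a $+\tfrac{Lh^2}{2}\norm{\xi_k}^2$ term that can only be cancelled by the friction $-\tfrac{\gamma h}{2}\norm{\xi_k}^2$, forcing $h\lesssim\gamma/L$; controlling the NAG-SC correction by $\norm{\lt{g_k}\nabla U(g_k)-\lt{g_{k-1}}\nabla U(g_{k-1})}\le Lh\norm{\xi_k}$, as you suggest, only makes it a higher-order contribution and does not change this balance. A smaller slip: $\ENAGSC$ in Eq.~\eqref{eqn_energy_NAG_SC} has no $-\norm{\nabla U}^2$ contribution (both of its summands are nonnegative); the negative gradient-squared term you describe lives in $\LNAGSC$, not $\ENAGSC$.

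The paper's proof stays with $\LNAGSC$ and uses $h\le 1/\sqrt{2L}$ for a different purpose: not to prove an energy decrease, but to establish the lower bound $\LNAGSC\ge\tfrac14\norm{\xi}^2$, by applying the co-coercivity consequence $U(g)-U(g_*)\ge\tfrac{1}{2L}\norm{\nabla U(g)}^2$ on $S$ to dominate the negative $-\tfrac{h^2(2-\gamma h)}{4(1-\gamma h)}\norm{\nabla U}^2$ piece of $\LNAGSC$. The monotone decay at the larger step is supplied by Lemma~\ref{lemma_NAG_SC_1}, which crucially exploits local strong convexity and co-coercivity --- precisely the local structure a global energy argument for $\ENAGSC$ cannot use. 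The $\norm{\xi_k}$ bound and the sub-level-set containment then follow from $\LNAGSC_k\le\LNAGSC_0$ together with this lower bound, much as in your items (a) and (b). (As an aside, the corollary's hypothesis $\ENAGSC(g_0,\xi_0)\le(1-\gamma h)u$ is evidently a typo for $\LNAGSC(g_0,\xi_0)\le(1-\gamma h)u$, consistent with $u=(1-\gamma h)^{-1}\LNAGSC(g_0,\xi_0)$ in Theorem~\ref{thm_convergence_rate_NAG_SC} and with the body of the proof.) To close the definedness gap you rightly raise, the repair should live inside the $\LNAGSC$ induction --- bound $\norm{\xi_{k+1}}$ from step-$k$ data and the update rule before invoking the step-$(k{+}1)$ decrease --- rather than by retreating to $\ENAGSC$, whose monotonicity does not survive the step size acceleration requires.
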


\begin{proof}[Proof of Cor. \ref{cor_level_set_NAG_SC}]
    First we prove $\LNAGSC\ge 1/4\norm{\xi_k}^2$. By $L$-smoothness and geodesic convexity, Lemma \ref{lemma_co_coercivity} gives
    \begin{align*}
        U(g)-U(g_*)\ge \frac{1}{L}\norm{\nabla U(g)}^2\quad \forall g\in S
    \end{align*}
    As a result, when $h\le \sqrt{\frac{1}{2L}}$, we have $\frac{1}{L(1-\gamma h)}\ge \frac{h^2(2-\gamma h)}{2(1-\gamma h)}$, leading to $\LNAGSC(g, \xi)\ge \frac{1}{4}\norm{\xi}^2$.
    
    Now we are ready to prove this corollary by induction. Suppose we have $\g_{k-1}\in S_0$. By the monotonicity of the Lyapunov function, $\LNAGSC(g_k, \xi_k)\le \LNAGSC(g_0, \xi_0)$. 
    As a result, 
    \begin{align*}
        \norm{\xi_k}\le \sqrt{4\LNAGSC_k} \le 2\sqrt{\LNAGSC_0}
    \end{align*}
    Since we have $g_k=g_{k-1}\exp(h\xi_k)$, we have $d(g_k, g_{k-1})\le h\norm{\xi_k}$. Together with the condition that $U(g_{k-1})\le (1-\gamma h)\LNAGSC_k\le (1-\gamma h) u$, we have $g_k \in S_0$. Mathematical induction gives the desired result. 
\end{proof}

\begin{proof}[Proof of Thm. \ref{thm_convergence_rate_NAG_SC}]
    This is the direct corollary from Lemma \ref{lemma_NAG_SC_1} and Cor. \ref{cor_level_set_NAG_SC} when defining $\cNAGSC:=(1+\bNAGSC)^{-1}$, with $\bNAGSC$ is given in Eq. \eqref{eqn_b_NAGSC}.
\end{proof}

\begin{remark}[Why Lie NAG-SC losses convergence rate comparing to the Euclidean case]
\label{rmk_NAG_SC_p_term}
    In order to utilize the extra term $h\left(\lt{g_k}\nabla U(g_k)-\lt{g_{k-1}}\nabla U(g_{k-1})\right)$ in Eq. \eqref{eqn_NAG_SC}, Lyapunov function Eq. \eqref{eqn_lyap_NAG_SC} has the term $\xi_k+ \frac{\gamma}{1-\gamma h} \log g_*^{-1} g_{k+1}+ h\nabla U(g_k)$, and the term $\ip{\log g_*^{-1} g_{k+1}-\log g_*^{-1} g_k}{\lt{g_k}\nabla U(g_k)}$ needs to be quantified, consequently. However, due to non-linearity of the Lie group, we have to make the assumption that $g_{k+1}$ and $g_k$ are close to $g_*$, leading to the space is `nearly linear', so that we can bound the error from the non-linearity of the space using Cor.  \ref{cor_norm_d_log_id}. Please see the proof of Lemma \ref{lemma_NAG_SC_1}, Eq. \eqref{eqn_p_a_term} for more details.
    
    However, such loss of convergence rate due to curved spaces is also observed in some of best results so far \cite{zhang2018towards,ahn2020nesterov}.

    For heavy-ball scheme, the design of the Lyapunov function only have the term $\frac{\gamma}{1-\gamma h} \log g_*^{-1} g_k+\xi_k$, and we can use the properties Eq. \eqref{cor_dlog_log} and \eqref{cor_dlog_xi} to quantify $\LHB_{k+1}-\LHB_k$.
\end{remark}

\subsection{Modified energy for NAG-SC}
\label{sec_failure_energy_NAG_SC}

\begin{remark}[Why we cannot have an modified energy for NAG-SC]
\label{rmk_failure_energy_NAG_SC}
    An `modified energy' for Lie NAG-SC is provided in Eq. \eqref{eqn_energy_NAG_SC}, whose monotonicity is shown in Thm. \ref{thm_energy_NAG_SC}. The modified energy is global defined and required only $L$-smoothness. However, the failure for this `energy function' is because its monotonicity requires step size $\mathcal{O}\left(\frac{\sqrt{\mu}}{L}\right)$, which is smaller than the step size $\mathcal{O}\left(\frac{1}{\sqrt{L}}\right)$ that provides acceleration.
    
    Comparing with the Heavy-Ball scheme, the larger step size and the acceleration of NAG-SC come from extra term $(1-\gamma h)h\left(\lt{g_k}\nabla U(g_k)-\lt{g_{k-1}}\nabla U(g_{k-1})\right)$, which is closely related to co-coercivity in Lemma \ref{lemma_co_coercivity}. However, co-coercivity requires (local) geodesic convexity and $L$-smoothness at the same time, which is not available when we are considering the convergence globally.
\end{remark}

The update for $\xi$ in NAG-SC Eq. \eqref{eqn_NAG_SC} can be also written as
\begin{align}
    \label{eqn_NAG_SC_another_form}
    \frac{1}{1-\gamma h}\left(\xi_k+h \lt{g_k}\nabla U(g_k)\right)=\left(\xi_{k-1}+h \lt{g_{k-1}}\nabla U(g_{k-1})\right)-h \lt{g_k}\nabla U(g_k)
\end{align}
This inspires us to define the following modified energy:
\begin{equation}
\label{eqn_energy_NAG_SC}
    \ENAGSC(g, \xi)=U(g)+\frac{(1-\gamma h)^2}{2(1+\gamma h-\gamma^2 h^2)}\norm{\xi+h\lt{g}\nabla U(g)}^2
\end{equation}

\begin{theorem}[Monotonely decreasing of modified energy of NAG-SC]
\label{thm_energy_NAG_SC}
Assume the potential $U$ is globally $L$-smooth. When $h\le\min\left\{\frac{1}{\gamma}, \frac{\gamma}{2L}\right\}$,
    \begin{align*}
        \ENAGSC(g_k, \xi_k)-\ENAGSC(g_{k-1}, \xi_{k-1})\le 0
    \end{align*}
    where the modified energy $\ENAGSC$ is defined in Eq. \eqref{eqn_energy_NAG_SC}.
\end{theorem}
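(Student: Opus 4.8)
The plan is to imitate the proof of Thm.~\ref{thm_energy_HB}, with the role of $\xi_k$ now played by the shifted momentum $\zeta_k:=\xi_k+h\lt{g_k}\nabla U(g_k)$ that sits inside $\ENAGSC$. The engine is the recast update Eq.~\eqref{eqn_NAG_SC_another_form}, which says exactly that $\tfrac{1}{1-\gamma h}\zeta_k=\zeta_{k-1}-h\lt{g_k}\nabla U(g_k)$; this is the NAG-SC analogue of Eq.~\eqref{eqn_xi_update_alternate_HB} and it is the reason the quadratic term of $\ENAGSC$ carries precisely the coefficient $\tfrac{(1-\gamma h)^2}{2(1+\gamma h-\gamma^2 h^2)}$. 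I would start from
$$\ENAGSC(g_k,\xi_k)-\ENAGSC(g_{k-1},\xi_{k-1})=\big(U(g_k)-U(g_{k-1})\big)+\frac{(1-\gamma h)^2}{2(1+\gamma h-\gamma^2 h^2)}\big(\norm{\zeta_k}^2-\norm{\zeta_{k-1}}^2\big)$$
and bound the two pieces separately.

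For the potential piece I would use $g_k=g_{k-1}\exp(h\xi_k)$, so $\log g_{k-1}^{-1}g_k=h\xi_k$ and $d(g_k,g_{k-1})=h\norm{\xi_k}$, and invoke the descent inequality Eq.~\eqref{eqn_L_smooth_property_1} to get $U(g_k)-U(g_{k-1})\le h\ip{\lt{g_{k-1}}\nabla U(g_{k-1})}{\xi_k}+\tfrac{Lh^2}{2}\norm{\xi_k}^2$. For the kinetic piece I would substitute Eq.~\eqref{eqn_NAG_SC_another_form} to express $\zeta_{k-1}$ through $\zeta_k$ and $\lt{g_k}\nabla U(g_k)$, expand $\norm{\zeta_k}^2-\norm{\zeta_{k-1}}^2$, and then re-expand the $\zeta$'s back into $\xi_k$ and the two left-trivialized gradients. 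The key simplification — and the reason $1+\gamma h-\gamma^2 h^2$ appears in the denominator — is that after this bookkeeping the coefficient of $\ip{\xi_k}{\lt{g_k}\nabla U(g_k)}$ collapses to exactly $-h$, so once it is added to the $+h\ip{\lt{g_{k-1}}\nabla U(g_{k-1})}{\xi_k}$ coming from the potential piece it becomes the single term $-h\ip{\lt{g_k}\nabla U(g_k)-\lt{g_{k-1}}\nabla U(g_{k-1})}{\xi_k}$; what remains is a dissipative $-\tfrac{\gamma h(2-\gamma h)}{2(1+\gamma h-\gamma^2 h^2)}\norm{\xi_k}^2$ together with a nonpositive multiple $-\tfrac{h^2(3-2\gamma h)}{2(1+\gamma h-\gamma^2 h^2)}$ of $\norm{\lt{g_k}\nabla U(g_k)}^2$.

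It remains to absorb the cross term, and here — in contrast to the Lyapunov analysis of Lemma~\ref{lemma_NAG_SC_1} — only $L$-Lipschitzness of the left-trivialized gradient is available (no co-coercivity, since no global convexity is assumed), so I would use only Cauchy--Schwarz with Def.~\ref{def_L_smooth}: $-h\ip{\lt{g_k}\nabla U(g_k)-\lt{g_{k-1}}\nabla U(g_{k-1})}{\xi_k}\le hL\,d(g_k,g_{k-1})\norm{\xi_k}=Lh^2\norm{\xi_k}^2$. Collecting everything yields
$$\ENAGSC(g_k,\xi_k)-\ENAGSC(g_{k-1},\xi_{k-1})\le\left(\frac{3Lh^2}{2}-\frac{\gamma h(2-\gamma h)}{2(1+\gamma h-\gamma^2 h^2)}\right)\norm{\xi_k}^2-\frac{h^2(3-2\gamma h)}{2(1+\gamma h-\gamma^2 h^2)}\norm{\lt{g_k}\nabla U(g_k)}^2,$$
and the statement then reduces to an elementary scalar check: $h\le 1/\gamma$ gives $0<\gamma h\le 1$, hence $2-\gamma h\ge 1$, $1\le 1+\gamma h-\gamma^2 h^2$, and $3-2\gamma h\ge 0$ so the last term is nonpositive, while $h\le\gamma/(2L)$ drives the bracket in front of $\norm{\xi_k}^2$ to be $\le 0$. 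The main obstacle is the middle step: carrying out the expansion of $\norm{\zeta_k}^2-\norm{\zeta_{k-1}}^2$ and verifying that the two gradient coefficients come out exactly as $-h$ and nonpositive is a moderately long but purely algebraic computation in which one must be careful to use nothing stronger than Cauchy--Schwarz and $L$-smoothness — which is exactly why the admissible step size here is of order $\sqrt{\mu}/L$ rather than the $1/\sqrt{L}$ that yields acceleration, consistent with the discussion in Rmk.~\ref{rmk_failure_energy_NAG_SC}.
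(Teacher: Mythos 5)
Your overall strategy is the same as the paper's: recast the momentum update as a linear recursion for a shifted momentum, expand the difference of the quadratic terms, and combine with the descent inequality Eq.~\eqref{eqn_L_smooth_property_1}. The gap is in which gradient you pair with $\xi_k$. You set $\zeta_k=\xi_k+h\lt{g_k}\nabla U(g_k)$, reading Eq.~\eqref{eqn_energy_NAG_SC} and Eq.~\eqref{eqn_NAG_SC_another_form} literally. But the NAG-SC update \eqref{eqn_NAG_SC} actually gives
\begin{equation*}
\frac{1}{1-\gamma h}\left(\xi_k+h\lt{g_{k-1}}\nabla U(g_{k-1})\right)=\left(\xi_{k-1}+h\lt{g_{k-2}}\nabla U(g_{k-2})\right)-h\lt{g_{k-1}}\nabla U(g_{k-1}),
\end{equation*}
i.e.\ the quantity obeying a clean linear recursion is $w_k:=\xi_k+h\lt{g_{k-1}}\nabla U(g_{k-1})$, with the gradient at the \emph{previous} iterate; Eq.~\eqref{eqn_NAG_SC_another_form} as printed has the gradient indices shifted by one, and the paper's proof silently uses the corrected version (its first line already contains $\norm{\xi_k+h\lt{g_{k-1}}\nabla U(g_{k-1})}^2$). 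With $w_k$, the cross term produced by expanding $\norm{w_k}^2-\norm{w_{k-1}}^2$ is exactly $-h\ip{\xi_k}{\lt{g_{k-1}}\nabla U(g_{k-1})}$ and cancels the linear term of the descent inequality \emph{identically}, leaving only $\frac{Lh^2}{2}\norm{\xi_k}^2$ to be dominated by $\frac{\gamma h(2-\gamma h)}{2(1+\gamma h-\gamma^2 h^2)}\norm{\xi_k}^2$, which is what $h\le\min\{1/\gamma,\gamma/(2L)\}$ delivers.

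With your pairing the cancellation is only approximate, and the residual $-h\ip{\xi_k}{\lt{g_k}\nabla U(g_k)-\lt{g_{k-1}}\nabla U(g_{k-1})}\le Lh^2\norm{\xi_k}^2$ inflates the coefficient of $\norm{\xi_k}^2$ to $\frac{3Lh^2}{2}$, and your final scalar check then fails under the stated hypotheses: substituting $Lh\le\gamma/2$ and $s:=\gamma h$, the requirement $\frac{3Lh^2}{2}\le\frac{\gamma h(2-\gamma h)}{2(1+\gamma h-\gamma^2h^2)}$ reduces to $3s^2-5s+1\ge0$, which is violated for $s\in\bigl(\tfrac{5-\sqrt{13}}{6},1\bigr]$. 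Concretely, $\gamma=\sqrt{L}$ and $h=\frac{1}{2\sqrt{L}}$ satisfy both $h\le1/\gamma$ and $h\le\gamma/(2L)$, yet $\frac{3Lh^2}{2}=0.375>0.3=\frac{\gamma h(2-\gamma h)}{2(1+\gamma h-\gamma^2h^2)}$, and the leftover $-\frac{h^2(3-2\gamma h)}{2(1+\gamma h-\gamma^2 h^2)}\norm{\nabla U(g_k)}^2$ cannot rescue this since $\norm{\nabla U(g_k)}$ and $\norm{\xi_k}$ are unrelated. So either adopt the previous-iterate pairing (which recovers the paper's argument verbatim and needs no extra use of $L$-smoothness in the kinetic piece), or accept a strictly smaller step size, e.g.\ $h\le\gamma/(6L)$ together with $\gamma h\le1$.
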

\begin{proof}[Proof of Thm. \ref{thm_energy_NAG_SC}]
    Given L-smoothness of $U$, we have
    \begin{align*}
        &\ENAGSC(g_k, \xi_k)-\ENAGSC(g_{k-1}, \xi_{k-1})\\
        &=U(g_k)-U(g_{k-1})+\frac{(1-\gamma h)^2}{2(1+\gamma h-\gamma^2 h^2)}\left(\norm{\xi_k+h\lt{g_{k-1}}\nabla U(g_{k-1})}^2-\norm{\xi_{k-1}+h\lt{g_{k-2}}\nabla U(g_{k-2})}^2\right)\\
        &=U(g_k)-U(g_{k-1})+\frac{(1-\gamma h)^2}{2(1+\gamma h-\gamma^2 h^2)}\norm{\xi_k+h\lt{g_{k-1}}\nabla U(g_{k-1})}^2\\
        &-\frac{(1-\gamma h)^2}{2(1+\gamma h-\gamma^2 h^2)}\norm{\frac{1}{1-\gamma h}\xi_k+\frac{h(2-\gamma h)}{1-\gamma h}\lt{g_{k-1}}\nabla U(g_{k-1}))}^2\\
        &=U(g_k)-U(g_{k-1})+\frac{(1-\gamma h)^2}{2(1+\gamma h-\gamma^2 h^2)}\norm{\xi_k+h\lt{g_{k-1}}\nabla U(g_{k-1})}^2\\
        &-\frac{1}{2(1+\gamma h-\gamma^2 h^2)}\norm{\xi_k+h(2-\gamma h)\lt{g_{k-1}}\nabla U(g_{k-1}))}^2\\
        &=U(g_k)-U(g_{k-1})-\frac{\gamma h(2-\gamma h)}{2(1+\gamma h-\gamma^2 h^2)}\norm{\xi_k}^2-h\ip{\xi_k}{\lt{g_{k-1}}\nabla U(g_{k-1})}\\
        &-\frac{h^2(3-2\gamma h)}{2(1+\gamma h-\gamma^2 h^2)}\norm{\nabla U(g_{k-1})}^2
    \end{align*}
    By $L$-smoothness, $U(g_k)-U(g_{k-1})-h\ip{\xi_k}{\lt{g_{k-1}}\nabla U(g_{k-1})}\le \frac{Lh^2}{2} \norm{\xi_k}^2$
    \begin{align*}
        &\ENAGSC(g_k, \xi_k)-\ENAGSC(g_{k-1}, \xi_{k-1})\\
        &\le \frac{Lh^2}{2} \norm{\xi_k}^2-\frac{\gamma h(2-\gamma h)}{2(1+\gamma h-\gamma^2 h^2)}\norm{\xi_k}^2-\frac{h^2(3-2\gamma h)}{2(1+\gamma h-\gamma^2 h^2)}\norm{\nabla U(g_{k-1})}^2
    \end{align*}
    Consequently, a sufficient condition for $\ENAGSC(g_k, \xi_k)-\ENAGSC(g_{k-1}, \xi_{k-1})\le 0$ can be given by 
    \begin{align*}
        \frac{Lh^2}{2}\le \frac{\gamma h(2-\gamma h)}{2(1+\gamma h-\gamma^2 h^2)}
    \end{align*}
    By assuming $\gamma h\le 1$, a sufficient condition for this can be given by
    $h\le \frac{\gamma}{2L}$, i.e., 
    when
    \begin{align*}
        h\le\min\left\{\frac{1}{\gamma}, \frac{\gamma}{2L}\right\}
    \end{align*}
\end{proof}
\end{document}